\documentclass[pdflatex,sn-mathphys-num]{sn-jnl}
\usepackage[utf8]{inputenc} % allow utf-8 input
\usepackage[T1]{fontenc}    % use 8-bit T1 fonts
\usepackage{hyperref}       % hyperlinks
\usepackage{url}            % simple URL typesetting
\usepackage{booktabs}       % professional-quality tables
\usepackage{enumitem}

\usepackage{graphicx}
\usepackage{caption}
\usepackage{subcaption}
\usepackage{xcolor}

\usepackage{amsmath}
\usepackage{mathtools}
\usepackage{amssymb}
\usepackage{amsfonts}
\usepackage{bbm}
\usepackage{hyperref}
\usepackage{algorithm2e}
\RestyleAlgo{ruled}

\usepackage{amsthm}

\newtheorem{lem}{Lemma}

\theoremstyle{definition}

\theoremstyle{remark}

\begin{document}

%=====================Front========================
%
\title{
A Unified Variational Framework for Deep Weakly Supervised Image Segmentation
}
%
% ---------------------Authors and Institutes------------------
\author[1]{\fnm{Yin King} \sur{Chu}}\email{ykchuac@connect.ust.hk}
\author[2]{\fnm{Lingfeng} \sur{Li}}\email{lilingfeng@himis-sz.cn}
\author[3]{\fnm{Sung Ha} \sur{Kang}}\email{kang@math.gatech.edu}
\author[4]{\fnm{Jianping} \sur{Zhang}}\email{jpzhang@xtu.edu.cn}
\author[5]{\fnm{Xue-Cheng} \sur{Tai}}\email{xtai@norceresearch.no}

\affil[1]{
    \orgdiv{Department of Mathematics}, 
    \orgname{The Hong Kong University of Science and Technology}, 
    \orgaddress{\city{Clear Water Bay, Kowloon}, \state{Hong Kong}, \country{China}}
}

\affil[2]{
    \orgname{Hetao Institute of Mathematics and Interdisciplinary Sciences}, 
    \orgaddress{\city{Shenzhen}, \state{Guangdong}, \country{China}}
}

\affil[3]{
    \orgdiv{School of Mathematics}, 
    \orgname{Georgia Institute of Technology}, 
    \orgaddress{\city{Atlanta}, \state{GA}, \postcode{30332}, \country{USA}}
}

\affil[4]{
    \orgdiv{School of Mathematics and Computational Science}, 
    \orgname{Xiangtan University}, 
    \orgaddress{\city{Xiangtan}, \state{Hunan}, \postcode{411105}, \country{P.R. China}}
}

\affil[5]{
    \orgname{Norwegian Research Center, Fantoftveien 38}, \postcode{5072}, 
    \orgaddress{\city{Bergen}, \country{Norway}}
}

%=====================Abstract=====================
%
\abstract{
We propose a unified variational framework for image segmentation under sparse pixel-level supervision. Our method is based on a simplex-constrained Potts model with a smooth perimeter regularizer, yielding a convex, smooth energy functional that can be used as a training loss in weakly supervised deep learning paradigms or optimized efficiently using iterative methods. Sparse labels are incorporated into the data fidelity term by constructing a fuzzy membership function via a function extension problem in a Reproducing Kernel Hilbert Space (RKHS), which can effectively capture inhomogeneous intensity statistics. The derived discrete loss for training standard networks demonstrates robustness and consistent improvements over non-training and partial cross-entropy (PCE) baselines in experiments, achieving comparable performance without requiring ground-truth segmentation images.
}
%
% \keywords{keyword1 \and keyword2 \and keyword3}

\maketitle

%=====================Introduction===================

% \tai{The figures need to be aligned. The captions need to be revised and detailed. }
\section{Introduction}
Image segmentation is a fundamental task in computer vision that involves partitioning an image into non-overlapping regions. It has a broad range of applications across diverse fields \cite{mattyus2017deeproadmapper,milletari2016v,kampffmeyer2016semantic,pham2000current}.

Generally, segmentation algorithms can be categorized into two primary frameworks: iterative methods and deep learning-based approaches. Iterative methods provide a mathematically rigorous and interpretable framework by formulating segmentation as an energy minimization problem and solving it using iterative algorithms. The energy functional typically consists of a data-fidelity term, a boundary-regularization term, and additional task-dependent constraints. Representative functionals include the Mumford-Shah model \cite{mumford1989optimal}, the Chan-Vese model \cite{chan2001active}, and the Potts model \cite{potts1952some}. The data-fidelity term is crucial for driving the segmentation \cite{chan2001active}. For complex semantic segmentation tasks, its construction often relies on user-specified scribble points as input. Consequently, the key to obtaining accurate segmentation results lies in effectively incorporating information from these scribbles.

In contrast, deep learning methods leverage Deep Neural Networks (DNNs) to directly learn the mapping from an input image to a segmentation mask. Benefiting from the high expressive power of DNNs \cite{zhou2020universality}, these methods have achieved remarkable success in recent years \cite{ronneberger2015u,chen2017deeplab,strudel2021segmenter}. However, standard DNN-based methods require large-scale datasets with costly, fully-annotated pixel-level labels for training. To overcome this, a line of research has focused on weak supervision, using either a small number of image-level labels \cite{pathak2015constrained,ahn2018learning,duan2019automatic,lu2020geometry,chen2023adversarial} or sparse pixel-level labels like scribbles \cite{kim2019mumford,liang2022tree,lin2016scribblesup,tang2018normalized,tang2018regularized,wu2023sparsely,zhang2023weakly}. While sparse labels provide more information for accurate segmentation, existing approaches often rely on a partial cross-entropy (PCE) loss. This term is data-driven and provides no explicit regularization, which can be unstable in many scenarios. Several approaches \cite{kim2019mumford,tang2018normalized,tang2018regularized,zhang2023weakly} have tried to alleviate this problem by embedding the term along with a classic segmentation energy to impose regularization. They have demonstrated promising improvements but failed to provide a concrete analysis. Consequently, their performance is severely limited. 

We introduce a unified variational framework for both iterative methods with sparse labels and weakly-supervised deep learning methods. Our framework is derived from the variational setting of image segmentation. We formulate a new energy that utilizes both intensity and spatial information based on a simplex-constrained Potts model. A smooth approximation of the total variation term is employed as the perimeter regularization. The label information is incorporated into a data fidelity term by a pre-computed fuzzy membership function. Our energy minimization problem is convex and smooth. It can be easily solved by existing fast first-order algorithms and avoids the need for alternating projection. 

The label-based fuzzy membership function is obtained by solving a function extension problem in the Reproducing Kernel Hilbert Space (RKHS). The choice of kernel parameters can deterministically control the region-distribution regularization, learning the inhomogeneous intensity distribution. Unlike \cite{ha2010image,kang2014supervised}, we use a thresholding projection, which avoids the computation of a Moore-Penrose inverse and reduces the complexity to $O(Km^3+K\log(K))$, where $m$ is the number of labeled pixels and $K$ is the number of classes. Since the labels are sparse, $m$ is small, so this step is efficient. A downsampling strategy is also implemented on the original image, which further reduces the computational and storage costs. Experimental results show that our model can effectively segment various objects even in semantically complex cases, such as in the presence of water splashes, illumination bias, and noise. 

In addition, we illustrate the relationship between our model and a continuous learning problem. This learning problem explains the process of network training. We then present a relaxation of our energy and derive a discrete loss for weakly-supervised learning. We carefully analyze the effects via extensive experiments. We find that training exhibits three major effects in comparison to the standard model: denoising, object edge refinement, and boundary artifact elimination. In prediction, our results also demonstrate robustness and consistent improvements over existing PCE baselines.

The rest of the paper is organized as follows. We illustrate the proposed model and its relaxed framework in Section~\ref{Sec:Model}. The computation of the fuzzy membership function is presented in Section~\ref{Sec:RKHS_Function_Extension}. The model implementation for single image segmentation is discussed in Section~\ref{Sec:Single_Image_Segmentation}. The experimental studies of the relaxed model for weakly-supervised network training are analyzed in Section~\ref{Sec:Network}. Finally, the conclusion is drawn in Section~\ref{Sec:Conclusion}.

Throughout this paper, we use lowercase letters to denote scalars or scalar-valued functions. Vector-valued functions are denoted by lowercase boldface letters, and operators are denoted by calligraphic letters. The vectors and matrices obtained by discretizing them will be notated by a hat or check. 

%=============The Proposed Energy============
\section{Model Formulation} \label{Sec:Model}

\subsection{The General Potts Model}
Let $I(x)$ be a continuous image defined on an open rectangular domain $\Omega\subset\mathbb R^2$. The segmentation task is to partition $\Omega$ into $K$ distinct phases $\Omega_k$ based on the information of $I$. We consider solving the Potts model to achieve this. 
The Potts model for image segmentation is a linear combination of the data fidelity term and the perimeter regularization with a weight parameter $\lambda$. It is formulated as the following optimization problem:
\begin{align}
    \min_{\{\Omega_k\}_{k=1}^K} & \sum_{k=1}^K\int_{\Omega_k} f_k(x)dx + \lambda\sum_{k=1}^K |\partial\Omega_k| \ 
    s.t.\  \bigcup_{k=1}^K\Omega_k=\Omega;\ \Omega_k\cap\Omega_l=\emptyset,\ \forall k\neq l.
\end{align}
By taking $f_k=\|I-c_k\|_2^2$ with the $l_2$-norm $\|\cdot\|_2^2$ and $c_k=\frac{\int_{\Omega_k}Idx}{|\Omega_k|}$, this model reduces to the classical Chan-Vese energy. Thus, the Potts model can be regarded as an extension of the Chan-Vese model. Indeed, it can be further extended to more sophisticated models by replacing the perimeter regularization as other spatial regularization, such as the Cheeger's cut $\frac{|\partial\Omega|}{|\Omega|}$ \cite{bresson2014multi} or the compactness term $\frac{|\partial\Omega_k|^2}{|\Omega_k|}$ \cite{gui2017medical,zhang2025deep}. These general Potts models will be in the form of:
\begin{align}
    \min_{\{\Omega_k\}_{k=1}^K} & \sum_{k=1}^K\int_{\Omega_k} f_k(x)dx + \lambda\sum_{k=1}^K \mathcal{R}(\Omega_k)\     \label{eq:Potts_general} 
    s.t.\  \bigcup_{k=1}^K\Omega_k=\Omega;\  \Omega_k\cap\Omega_l=\emptyset,\ \forall k\neq l. 
\end{align}
with general spatial regularization $\mathcal{R}(\Omega_k)$. 

\subsection{Simplex-constrained Potts Model with Smooth Approximations}

In this paper, we consider a simplex-constrained representation for (\ref{eq:Potts_general}). Each phase $\Omega_k$ is represented by an indicator function $v_k:\Omega\rightarrow\{0,1\}\in BV(\Omega)$, where $BV(\Omega)$ denotes the space of functions of bounded variation, for $k=1,\dots,K$, and $\mathbf{v}=(v_1, ..., v_K)$ belongs to the constraint set 
\begin{align*}
    \Delta^K = \left\{(v_1, ..., v_K) : v_k\in BV(\Omega),v_k(x)\in\{0,1\} , \sum_{k=1}^K v_k(x)=1,\forall x\in\Omega \right\}.
\end{align*}
The perimeter of $\partial\Omega_k$ can be approximated by
\begin{align}
    |\partial\Omega_k|\approx C_\sigma\int_\Omega \left(1-v_k(x)\right)(G_\sigma\ast v_k)(x)dx \label{eq:TD_approximation}
\end{align}
when $\sigma$ is sufficiently small \cite{pallara2007short}.
Here, $C_\sigma$ is some constant depending on $\sigma$. 
%$g(x)=\frac{1}{1+\beta\|\nabla G\ast I(x)\|}$ is the edge detector and 
$G_\sigma(x)=\frac{1}{\sqrt{2\pi\sigma^2}}\exp({-\frac{|x|^2}{2\sigma^2}})$ is the Gaussian kernel and $\ast$ denotes the convolution operator. 
This approximation is simple to compute, and has been shown in \cite{Liu2011} to be highly effective for image segmentation compared to other methods for the total variation (TV) term and particularly adept at accurately addressing the well-known triple-junction problem. Since its introduction, there has been a significant increase in interest in using this approximation for image segmentation issues, as evidenced in studies such as \cite{esedog2015threshold, wang2017efficient, liu2022deep, liu2025convex, tai2024pottsmgnet, zhang2025deep}.

Combining the simplex representation and smooth approximation of the boundary length, we have the following optimization problem:
\begin{align} \label{eq:simplexPotts_nonconvex}
    \min_{\textbf{v}\in\Delta^K} \sum_{k=1}^K\int_\Omega f_k(x)v_k(x)dx + \lambda\sum_{k=1}^K \int_\Omega \left(1-v_k(x)\right)(G_\sigma\ast v_k)(x)dx .
\end{align}
The solution space is non-convex, but we can relax the model as follows:
\begin{align} \label{eq:simplexPotts_convex}
    \min_{\textbf{v}\in\tilde{\Delta}^K} \sum_{k=1}^K\int_\Omega f_k(x)v_k(x)dx + \lambda\sum_{k=1}^K \int_\Omega \left(1-v_k(x)\right)(G_\sigma\ast v_k)(x)dx ,
\end{align}
where
\begin{align*}
    \tilde{\Delta}^K = \left\{\mathbf{v}=(v_1, ..., v_K):v_k\in BV(\Omega),  v_k(x)\in [0,1] , \sum_{k=1}^K v_k(x)=1,\forall x\in\Omega \right\} .
\end{align*}
We note here that the solution of the relaxed problem (\ref{eq:simplexPotts_convex}) is equivalent to the original problem (\ref{eq:simplexPotts_nonconvex}). See explanations and proofs in \cite[section 3]{tai2023potts} and \cite[Lemma 2.1]{wang2017efficient}.

Our proposed model will be based on (\ref{eq:simplexPotts_convex}). More detailed introduction to the Potts model in image segmentation problems can be found in \cite{tai2023potts}.

\subsection{The Proposed Model}
%(u retrieves the label info.)
Consider the segmentation problem with sparse labels, where only a small subset of pixels in the image domain $\Omega$ have pre-assigned labels. Let $D \subset \Omega$ be this set of labeled pixels. Define a label function $\psi = (\psi_1, \ldots, \psi_K)$ to represent the corresponding labeling, where each component $\psi_k: D \to \{0, 1\}$ is an indicator function for class $k$, and for any pixel $x \in D$, we have $\psi_k(x) = 1$ if $x$ belongs to class $k$, and $\psi_k(x) = 0$ otherwise, with the constraint that $\sum_{k=1}^K \psi_k(x) = 1$.

This label function is valuable for obtaining accurate segmentation results, especially for complex semantic tasks. Effective incorporation of the label information is crucial. One existing approach poses these labeled data as a hard constraint, i.e., by adding the constraint
\[ v_k(x)=\psi_k(x),\quad x\in D,  \]
directly to (\ref{eq:simplexPotts_convex}). However, this makes the problem non-smooth and non-convex. An alternating descent algorithm is required to keep projecting the solution onto the constraint space (see, e.g., \cite{luo2019convex}).

Our strategy for handling sparse labels is to incorporate them into the data fidelity term of the Potts model (\ref{eq:simplexPotts_convex}). First, we define a fuzzy membership function $\mathbf{u}(x)=(u_1(x),\ldots, u_K(x))$ over the entire image domain $\Omega$ to represent the sparse label information. This function $\mathbf{u}$ belongs to the space $\tilde{\Delta}_U^K$:
\begin{align*}
    \tilde{\Delta}_U^K = \left\{\mathbf{u}=(u_1, ..., u_K) : u_k(x)\in [0,1] , \sum_{k=1}^K u_k(x)=1,\forall x\in\Omega \right\}.
\end{align*}
Note that $\mathbf{u}$ is not required to be in $BV(\Omega)$. 
By taking the data fidelity term $f_k=1-2u_k$, we obtain the final segmentation by solving the following minimization problem:
\begin{align}\label{eq:main_model}
    \min_{\textbf{v}\in\tilde{\Delta}^K} \sum_{k=1}^K\int_\Omega \left(1-2u_k(x)\right)v_k(x)dx + \lambda\sum_{k=1}^K \int_\Omega \left(1-v_k(x)\right)(G_\sigma\ast v_k)(x)dx .
\end{align}
The smoothness of (\ref{eq:main_model}) makes it suitable for gradient-based solvers and for transformation into a loss function for weakly-supervised learning, as will be discussed in the next subsection. 

For single image segmentation, we solve the model using a fast iterative thresholding method \cite{Liu2011,wang2017efficient}. The per-iteration complexity of this algorithm depends only on a Gaussian convolution. Furthermore, a rough segmentation, which we term 'pre-segmentation', can be obtained by thresholding the fuzzy membership function $\mathbf u$. This pre-segmentation serves as an effective initialization, accelerating the solver. Appendix~\ref{sec:td_implementation} provides the algorithmic details.

Another benefit of pre-computing $\mathbf{u}$ is that the model parameters are easier to tune. We can choose the parameters for solving $\mathbf{u}$ based only on label information. After that, the only parameter of the Potts model, i.e., $\lambda$, can be chosen solely depending on the degree of the edge-smoothing effect we want. 
% In another fold, the model (\ref{eq:main_model}) can be easily solved by linearization methods such as iterative thresholding or smooth methods like gradient descent when $\mathbf{u}$ is available, as it is just an example of Problem (\ref{eq:simplexPotts_convex}). This 

To construct this label-based fuzzy membership function, we propose a method based on a function extension problem in the Reproducing Kernel Hilbert Space (RKHS). The label function $\psi$ is extended to $\Psi=(\Psi_1,\ldots, \Psi_K)\in\mathcal{H_K}(\Omega)$ after solving this extension problem, where $\mathcal{H_K}(\Omega)$ is an RKHS governed by a reproducing kernel $\mathcal{K}$. The $\Psi_k$ are then projected as fuzzy membership functions via a thresholding method to obtain the final $\mathbf u$. Details of the computation are illustrated in Section \ref{Sec:RKHS_Function_Extension}. 

The primary advantage of this RKHS extension lies in "kernel learning." The choice of a reproducing kernel $\mathcal{K}$ allows for imposing useful regularity conditions on $\Psi$ and reduces the problem to a few parameter choices. For example, a nonlinear kernel based on image intensity or spatial information can establish inhomogeneous pixel similarities, leading to a high-quality partition. In the simplest case, only one parameter is needed to tune intensity and spatial similarity, respectively. We demonstrate this advantage experimentally in Section~\ref{sec:rkhs_thre_effect}.

\subsection{Loss Function for Weakly Supervised Learning}\label{sec:loss_function}

We derive a weakly-supervised training loss from model~(\ref{eq:main_model}) by casting learning in a continuous variational form.

% \tai{Need to define $\mathcal{M}$ before using it. }

Let $\{I_i\}_{i=1}^n$ be a training set of images, where each $I_i$ is defined on a domain $\Omega \subset \mathbb{R}^2$. For each image $I_i$, sparse labels are provided on a subset $D_i \subset \Omega$ via a label function $\psi_i = (\psi_{i,1}, \ldots, \psi_{i,K})$, where $\psi_{i,k}: D_i \to \{0,1\}$ and $\sum_{k=1}^K \psi_{i,k}(x) = 1$ for all $x \in D_i$. We aim to learn a segmentation operator $\mathcal{N} = (\mathcal{N}_1, \ldots, \mathcal{N}_K)$ that maps an image to a segmentation mask in $\tilde{\Delta}^K$.

In a fully supervised setting where labels are available on the entire domain ($D_i = \Omega$), the operator $\mathcal{N}$ is typically found by minimizing a loss function over the training set:
\begin{equation}
    \min_{\mathcal{N}} \sum_{i=1}^n \int_{\Omega} l(\mathcal{N}(I_i)(x), \psi_i(x)) dx, \label{eq:supervised_learning_loss}
\end{equation}
where $l$ is a suitable discrepancy measure, such as cross-entropy. However, in our weakly supervised setting, $\psi_i$ is undefined on $\Omega \setminus D_i$, making \eqref{eq:supervised_learning_loss} ill-defined.
To overcome this, we first construct a fuzzy membership function $\mathbf{u}_i = (u_{i,1}, \ldots, u_{i,K})$ for each image, extending the sparse label information from $D_i$ to the entire domain $\Omega$. We then replace the optimization variable $\mathbf{v}$ in \eqref{eq:main_model} with the output of the operator $\mathcal{N}(I_i)$. This leads to the following continuous learning problem:
\begin{align}
\begin{split}
\min_{\mathcal{N}} \sum_{i=1}^n \Bigg[
&\sum_{k=1}^K \int_{\Omega} \left(1-2u_{i,k}(x)\right)\mathcal{N}_k(I_i)(x)\,dx \\
&+ \lambda \sum_{k=1}^K \int_{\Omega} \left(1-\mathcal{N}_k(I_i)(x)\right)
\left(G_\sigma * \mathcal{N}_k(I_i)\right)(x)\,dx
\Bigg].
\end{split}
\label{eq:Potts_rkhs_operator}
\end{align} 
For network training, we discretize \eqref{eq:Potts_rkhs_operator} and parameterize the operator $\mathcal{N}$ with a neural network $N_\theta$, such as a UNet \cite{ronneberger2015u}, where $\theta$ denotes its learnable parameters. Let $\hat{I}_i$ and $\hat{\mathbf{u}}_i$ be the discretizations of the image $I_i$ and fuzzy membership function $\mathbf{u}_i$ on a uniform $h \times w$ grid. The network output $N_\theta(\hat{I}_i) = (N_{\theta,1}(\hat{I}_i), \ldots, N_{\theta,K}(\hat{I}_i))$ is a probability map, typically taking softmax layer as the last layer of the networks, with each $N_{\theta,k}(\hat{I}_i) \in [0,1]^{h \times w}$. The per-image loss function is then:
\begin{equation} \label{eq:Potts_rkhs_loss}
    L_i(\theta)=\sum_{k=1}^K \langle 1-2\hat{u}_{i,k},N_{\theta,k}(\hat I_i)\rangle_{\mathbb R^{h\times w}}+\lambda\sum_{k=1}^K \langle 1-N_{\theta,k}(\hat{I}_i),\hat{G}_{\sigma}\hat  \ast N_{\theta,k}(\hat{I}_i)\rangle_{\mathbb R^{h\times w}} .
\end{equation}
where $\hat \ast$ is the discrete 2-dimensional convolution, $\hat{G}_\sigma$ is a discrete Gaussian convolutional kernel with standard deviation $\sigma$, $\hat{\mathbf{u}}_i=(\hat{u}_{i,1},\ldots, \hat{u}_{i,K})$, and $\langle \cdot,\cdot\rangle_{\mathbb R^{h\times w}}$ is an inner product defined by
\[ \langle A,B\rangle_{\mathbb R^{h\times w}}=\sum_{i=1}^h\sum_{j=1}^w A[i,j]B[i,j],\quad A,B\
\in\mathbb R^{h\times w}.\]
We refer to 'training' as the process of solving
\begin{align} \label{eq:Potts_rkhs_training}
    \min_{\theta}\sum_{i=1}^n L_i(\theta)
\end{align}
Implementation details are given in Section~\ref{sec:network_implementation}.

%=========================RKHS=====================
\section{Label Function Extension with Reproducing Kernels} \label{Sec:RKHS_Function_Extension}

% \textit{Subsection Content:
% \begin{enumerate}
%     \item function extension problem, propose to choose $\Psi\in\mathcal{H_K}$.
%     \item RKHS $\mathcal{H_K}$ basic, as in \cite{ha2010image}.
%     \item Discrete $D$ to solve (1), Regularized Least Square Algorithm as in \cite{ha2010image}.
%     \item mentions the Euclidean case.
%     \item Projection of $\Psi$ into the space of fuzzy membership function $\{u|u:\Omega\rightarrow[0,1]^K\}$, vs \cite{kang2014supervised}
% \end{enumerate}}

\subsection{The Regularized Least-squares Function Extension Problem}

Let $\mathcal{V}$ be a Hilbert space and $\psi:D \rightarrow \mathcal{V}$ be a function defined on $D$, where $D$ is a subset of $\Omega$. A general function extension problem aims to construct a function $\Psi:\Omega\rightarrow \mathcal{V}$ defined on the larger domain $\Omega$ such that it provides a good approximation of $\psi$ on $D$. In a variational approach, this extension is computed via the following minimization:
\begin{align} \label{eq:general_function_extension}
    \inf_{\Psi\in\mathcal{H}_2(\Omega)} \{\mathcal{F}_1(\psi-\Psi)+\gamma\mathcal{F}_2(\Psi)\}
\end{align}
where $\mathcal{F}_1$ and $\mathcal{F}_2$ are functionals defined on the function spaces $\mathcal{H}_1(D)$ and $\mathcal{H}_2(\Omega)$ to which $\psi$ and $\Psi$ belong, respectively, and $\gamma$ is a tuning parameter.
This problem is strongly regularized by the choice of $\mathcal{H}_2(\Omega)$. In this work, we choose $\mathcal H_2(\Omega)$ as a Reproducing Kernel Hilbert Space (RKHS). This choice is widely used in classic machine learning, as the inner product and smoothing properties of the space can be derived by defining a special type of operator-valued kernel $\mathcal{K}$, i.e., the reproducing kernel. A more detailed introduction to RKHS can be found in Appendix \ref{sec:rkhs}. We will use $\mathcal{H_K}(\Omega)$ to denote the RKHS induced by the reproducing kernel $\mathcal{K}$.

We now introduce the function extension problem for constructing the label-based fuzzy membership function $\mathbf{u}$ in the context of image segmentation. Our derivation is an extension of \cite{ha2010image,kang2014supervised}. In \eqref{eq:general_function_extension}, we set $\Omega\in\mathbb R^2$ as the continuous image domain. Assume $D=\{x_i\}_{i=1}^m$ is a set of discrete points in $\Omega$ with known labels and $\psi:D\rightarrow\mathcal V$ is a given label function that assigns a label to each $x_i\in D$. Here, $\mathcal{V}$ is set to $\mathbb{R}$ or $\mathbb{R}^K$, corresponding to the binary or multiphase segmentation problems, respectively. We aim to find an extension of $\psi$ from $D$ to $\Omega$ and use it as the fuzzy membership function in \eqref{eq:main_model}.

Let $\mathcal{V}^D$ be the vector space of all functions $\psi:D\rightarrow\mathcal{V}$, where the induced norm is defined as $\Vert \psi\Vert_{\mathcal V^D}:=\sqrt{\sum_{i=1}^m \Vert\psi(x_i)\Vert_{\mathcal V}^2}$ for any $\psi\in\mathcal V^D$. Given a reproducing kernel $\mathcal{K}$, we take $\mathcal{H}_1(D)$ and $\mathcal{H}_2(\Omega)$ as $\mathcal{V}^D$ and $\mathcal{H_K}(\Omega)$, respectively, and define $\mathcal F_1$ and $\mathcal F_2$ as the corresponding form of the squared norm. Then, we have the following minimization problem for the labeled function extension:
\begin{align} \label{eq:rkhs_function_extension}
    \min_{\Psi\in\mathcal{H_K}(\Omega)} \frac{1}{2m}\sum_{i=1}^m \| \psi(x_i)-\Psi(x_i)\|_{\mathcal{V}}^2
    + \frac{\gamma}{2} \|\Psi\|_{\mathcal{H_K}(\Omega)}^2.
\end{align}
Let $\mathcal{S}_{\mathbf x}:\mathcal{H_K}(\Omega)\rightarrow\mathcal{V}^D$ be the sampling operator defined by $\mathcal{S}_{\mathbf x}\Psi=(\Psi(x_1),\dots,\Psi(x_m))$; the first term in (\ref{eq:rkhs_function_extension}) can then be written as $\frac{1}{2m}\Vert \psi-S_{\mathbf x}\Psi\Vert_{\mathcal V^D}^2$. 
The functional (\ref{eq:rkhs_function_extension}) is convex and has a unique minimizer $\Psi^\gamma$ that satisfies the normal equation (see \cite[Theorem 6.5]{clason2020regularization}):
\begin{align} \label{eq:rkhs_normal_eq}
    (\mathcal{S}_\mathbf{x}^*\mathcal{S}_\mathbf{x}+m\gamma \mathcal{I})\Psi^\gamma=\mathcal{S}_\mathbf{x}^*\psi
\end{align}
where $\mathcal S^*_{\mathbf x}$ is the adjoint operator of $\mathcal S_{\mathbf x}$.
By the reproducing property of the RKHS (See (\ref{eq:reproducing_property}) in Appendix \ref{sec:rkhs}), we get 
\begin{align*}
    \left\langle\mathcal{S}_\mathbf{x}\Psi, \psi\right\rangle_{\mathcal{V}^D} &= \sum_{i=1}^m\langle\Psi(x_i), \psi(x_i)\rangle_{\mathcal{V}} 
    = \sum_{i=1}^m\langle\Psi, \mathcal{K}_{x_i}\psi(x_i)\rangle_{\mathcal{H_K}(\Omega)} 
    = \left\langle\Psi, \sum_{i=1}^m\mathcal{K}_{x_i}\psi(x_i)\right\rangle_{\mathcal{H_K}(\Omega)}
\end{align*}
Thus, the adjoint operator $\mathcal{S}_\mathbf{x}^*\psi(\mathbf{x})=\sum_{i=1}^m\mathcal{K}_{x_i}\psi(x_i)$. Substituting the formula of $\mathcal{S}_\mathbf{x}$ and $\mathcal{S}_\mathbf{x}^*$ into (\ref{eq:rkhs_normal_eq}), we have
\begin{align}
    \sum_{i=1}^m\mathcal{K}_{x_i}\Psi^\gamma(x_i)+m\gamma\Psi^\gamma=\sum_{i=1}^m\mathcal{K}_{x_i}\psi(x_i),
%\end{align*}
\mbox{  which implies  }
%\begin{align} 
    \label{eq:sol_Psi_forward}
    \Psi^\gamma=\sum_{i=1}^m\mathcal{K}_{x_i}a_i,
\end{align}
where $a_i=\frac{\psi(x_i)-\Psi^\gamma(x_i)}{m\gamma}$. Note that 
$\Psi^\gamma(x)=\sum_{j=1}^m\mathcal{K}(x,x_j)a_j$. Then $a_i=\frac{\psi(x_i)-\sum_{j=1}^m\mathcal{K}(x_i,x_j)a_j}{m\gamma}$. Equivalently, $a_i$ can be computed by solving the inverse problem directly \cite[Proposition 1]{ha2010image}:
\begin{align}\label{eq:sol_Psi_backward}
    \sum_{j=1}^m\mathcal{K}(x_i,x_j)a_j+m\gamma a_i=\psi(x_i).
\end{align}

For multiphase segmentation, we set $\mathcal{V}=\mathbb{R}^K$. The label function $\psi:D\to\mathbb R^K$ is given by $\psi(x_i) = (\psi_1(x_i),\ldots,\psi_K(x_i))$, where $\psi_k(x_i)$ is the one-hot encoding for class $k$. We seek an extension $\Psi=(\Psi_1,\ldots,\Psi_K)$ where each component $\Psi_k$ lies in a scalar-valued RKHS $\mathcal{H}_{\mathcal{K}_k}(\Omega)$ induced by a real-valued, symmetric, positive-definite kernel $\mathcal{K}_k$. This corresponds to an operator-valued kernel $\mathcal{K}(x,y) = \mathrm{Diag}(\mathcal{K}_1(x,y),\ldots,\mathcal{K}_K(x,y))$. The norms in \eqref{eq:rkhs_function_extension} become:
\begin{align} \label{eq:multi-phase_psi}
\begin{split}
    \| \psi(x_i)-\Psi(x_i)\|_{\mathcal{V}}^2&=\sum_{k=1}^K| \psi_k(x_i)-\Psi_k(x_i)|^2,  \quad 
    \|\Psi\|_{\mathcal{H_K}(\Omega)}^2 = \sum_{k=1}^K  \|\Psi_k\|_{\mathcal{H}_{\mathcal{K}_k}(\Omega)}^2.
\end{split}
\end{align}
Substituting (\ref{eq:multi-phase_psi}) into (\ref{eq:rkhs_function_extension}), the model becomes:
\begin{align*}
    \min_{\Psi\in\mathcal{H_K}(\Omega)} \frac{1}{2m}\sum_{i=1}^m \sum_{k=1}^K| \psi_k(x_i)-\mathcal{S}_\mathbf{x}\Psi_k|^2
    + \frac{\gamma}{2} \sum_{k=1}^K \|\Psi_k\|_{\mathcal{H}_{\mathcal{K}_k}(\Omega)}^2
\end{align*}
or equivalently, we have $K$ minimization problems:
\begin{align*}
    \min_{\Psi_k\in\mathcal{H}_{\mathcal{K}_k}(\Omega)} \frac{1}{2m}\sum_{i=1}^m | \psi_k(x_i)-\mathcal{S}_\mathbf{x}\Psi_k|^2
    + \frac{\gamma}{2} \|\Psi_k\|_{\mathcal{H}_{\mathcal{K}_k}(\Omega)}^2
\end{align*}
sharing the same tuning parameter $\gamma$, where the minimizers are computed by:
\begin{align*}
    \Psi_k^\gamma(x)=\sum_{i=1}^m\mathcal{K}_k(x,x_i)a_i^k
\end{align*}
and $a_i^k$ satisfies:
\begin{align*}
    \sum_{j=1}^m\mathcal{K}_k(x_i,x_j)a_j^k+m\gamma a_i^k=\psi_k(x_i) .
\end{align*}

\subsection{Projecting the Extended Function into the Simplex}

We note that the extended function $\Psi^\gamma$ lies in the RKHS $\mathcal{H_K}(\Omega)$, and thus $\Psi^\gamma(x)$ does not necessarily belong to $[0,1]^K$. Therefore, a projection onto the simplex is required to construct the fuzzy membership function $\mathbf{u}$ used in (\ref{eq:main_model}).  
For each $x\in\Omega$, we solve the pointwise Euclidean projection:
\begin{align}\label{eq:simplex_proj}
    \min_{\mathbf{u}\in\tilde{\Delta}_U^K} \frac{1}{2}\|\mathbf{u}(x)-\Psi^\gamma(x)\|_2^2 .
\end{align}
This problem is convex and admits a unique solution, and the solution is strongly characterized by $\Psi^\gamma$. In a modeling sense, we can say $\mathbf{u}$ is label-based and characterized by the reproducing kernel $\mathcal{K}$.

Follows from \cite{duchi2008efficient}, the solution is given by the following lemmas.
\begin{lem} \label{lem:proj1}
    Let $\mathbf{u}^*=(u_1^*,\ldots, u_K^*)=\arg\min_{\mathbf{u}\in\tilde{\Delta}_U} \frac{1}{2}\|\mathbf{u}(x)-\Psi^\gamma(x)\|^2$. Denote $u_{(1)}, \ldots, u_{(K)}$ and $\Psi_{(1)}^\gamma, \ldots, \Psi_{(K)}^\gamma$ be the vector obtained by sorting $u_1,\ldots,u_K$ and $\Psi_1^\gamma, \ldots, \Psi_K^\gamma$ in a descending order, respectively. Assume we are given $\rho=\max\left\{j\in[K]:u_{(j)}^*>0\right\}$. Then
    \begin{align} \label{eq:proj_lemma1_u}
        u_k^*=\max\{\Psi_k^\gamma-\xi,\ 0\},
    \end{align}
    where:
    \begin{align*}
        \xi=\frac{1}{\rho}\left(\sum_{q=1}^\rho \Psi_{(q)}^\gamma-1\right) .
    \end{align*}
\end{lem}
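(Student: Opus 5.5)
The plan is to derive the formula from the KKT conditions of the pointwise problem (\ref{eq:simplex_proj}), in the style of \cite{duchi2008efficient}. Fix $x\in\Omega$ and write $\mathbf{y}=\Psi^\gamma(x)\in\mathbb R^K$. The problem $\min_{\mathbf u}\frac12\|\mathbf u-\mathbf y\|_2^2$ subject to $\sum_k u_k=1$ and $u_k\ge 0$ has a strictly convex objective, affine constraints, and a nonempty feasible set. Hence Slater's condition holds, the minimizer $\mathbf u^*$ is unique, and the KKT conditions are necessary and sufficient. The constraint $u_k\le 1$ in $\tilde\Delta_U^K$ is implied by the other two, so it can be dropped. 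We introduce a multiplier $\xi\in\mathbb R$ for the equality constraint and multipliers $\mu_k\ge0$ for $u_k\ge0$. Stationarity then reads
\begin{align*}
u_k^*-y_k+\xi-\mu_k=0,\qquad \mu_k u_k^*=0 .
\end{align*}

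Next we read off the componentwise formula by splitting into two cases. If $u_k^*>0$, then $\mu_k=0$ and $u_k^*=y_k-\xi$, which forces $y_k-\xi>0$. If $u_k^*=0$, then $y_k-\xi=-\mu_k\le 0$. In both cases $u_k^*=\max\{y_k-\xi,0\}$, which is (\ref{eq:proj_lemma1_u}).

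It remains to identify $\xi$, and the key step is showing that the support of $\mathbf u^*$ consists exactly of the indices of the $\rho$ largest entries of $\mathbf y$. From the formula just obtained, $u_k^*>0$ if and only if $y_k>\xi$. The map $y_k\mapsto\max\{y_k-\xi,0\}$ is nondecreasing, so the positive entries of $\mathbf u^*$ are exactly those whose $y_k$ exceed a single threshold. Consequently the indices of the positive $u^*$ coincide with the indices of the top $\rho$ values of $y$, where $\rho$ is the number of positive entries as defined in the lemma. The sorted orders of $u^*$ and $\Psi^\gamma$ are therefore compatible, so $u^*_{(q)}=\Psi^\gamma_{(q)}-\xi$ for $q\le\rho$, and $u^*_{(q)}=0$ for $q>\rho$.

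Summing over $q$ and using $\sum_k u_k^*=1$ gives
\begin{align*}
1=\sum_{q=1}^\rho\left(\Psi^\gamma_{(q)}-\xi\right),
\end{align*}
which yields $\xi=\frac1\rho\bigl(\sum_{q=1}^\rho\Psi^\gamma_{(q)}-1\bigr)$. Note that $\rho\ge1$, since the entries of $\mathbf u^*$ sum to one.

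The main obstacle is the bookkeeping around ties in $\mathbf y$, where the sorting permutation is not unique. We handle this by noting that equal $y_k$ give equal $u_k^*$, so they are either all in the support or all out of it. The formula for $\xi$ therefore does not depend on how ties are broken.
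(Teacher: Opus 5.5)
Your proposal is correct and follows the same route as the paper's proof. You write the KKT conditions for (\ref{eq:simplex_proj}), read off $u_k^*=\max\{\Psi_k^\gamma-\xi,0\}$ from stationarity and complementary slackness, and sum over the support to solve for $\xi$; your extra argument that the support of $\mathbf u^*$ is exactly the indices of the $\rho$ largest entries of $\Psi^\gamma(x)$, ties included, makes explicit what the paper uses silently when it writes $\sum_{q=1}^{\rho}u_{(q)}^*=\sum_{q=1}^{\rho}(\Psi_{(q)}^\gamma-\xi)$.
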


\begin{lem}\label{lem:proj2}
With $\rho$ defined above, we have
\begin{align}\label{eq:proj_lemma2_rho}
\rho=\max\left\{j\in[K]:
\Psi_{(j)}^\gamma-\frac{1}{j}\left(\sum_{q=1}^{j}\Psi_{(q)}^\gamma-1\right)>0
\right\}.
\end{align}
\end{lem}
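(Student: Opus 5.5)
Write $\Psi_k$ for $\Psi_k^\gamma$ and $\Psi_{(q)}$ for its sorted values. The plan is to use Lemma~\ref{lem:proj1}, namely $u_k^*=\max\{\Psi_k-\xi,0\}$, together with the fact that sorting is monotone. The key object is the sequence
\[
g_j := \Psi_{(j)}-\frac{1}{j}\Bigl(\sum_{q=1}^j\Psi_{(q)}-1\Bigr).
\]
We show that $g_j>0$ for every $j\le\rho$ and $g_j\le 0$ for every $j>\rho$. Then the maximum in (\ref{eq:proj_lemma2_rho}) equals $\rho$.

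\emph{Step 1 (order and support).} Since $u_k^*=\max\{\Psi_k-\xi,0\}$ is nondecreasing in $\Psi_k$, the sorted orders of $\mathbf u^*$ and $\Psi$ agree. Hence $u^*_{(j)}=\max\{\Psi_{(j)}-\xi,0\}$. It follows that $\Psi_{(j)}>\xi$ for $j\le\rho$ and $\Psi_{(j)}\le\xi$ for $j>\rho$. The sum constraint gives $\sum_{q\le\rho}(\Psi_{(q)}-\xi)=1$, which is exactly the formula for $\xi$ in Lemma~\ref{lem:proj1}.

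\emph{Step 2 ($j\le\rho$).} For $j=\rho$ we have $g_\rho=\Psi_{(\rho)}-\xi=u^*_{(\rho)}>0$. For $j<\rho$, use the identity
\[
\sum_{q=1}^j\Psi_{(q)}-1=\sum_{q=1}^j\Psi_{(q)}-\sum_{q=1}^\rho(\Psi_{(q)}-\xi)=j\xi-\sum_{q=j+1}^\rho(\Psi_{(q)}-\xi).
\]
It gives
\[
g_j=(\Psi_{(j)}-\xi)+\frac1j\sum_{q=j+1}^\rho(\Psi_{(q)}-\xi),
\]
and every term on the right is positive.

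\emph{Step 3 ($j>\rho$).} The same identity now reads $\sum_{q=1}^j\Psi_{(q)}-1=j\xi+\sum_{q=\rho+1}^j(\Psi_{(q)}-\xi)$. Therefore
\[
g_j=(\Psi_{(j)}-\xi)-\frac1j\sum_{q=\rho+1}^j(\Psi_{(q)}-\xi).
\]
Because $\Psi_{(q)}\ge\Psi_{(j)}$ for $q\le j$, the subtracted average is at least $\frac{j-\rho}{j}(\Psi_{(j)}-\xi)$. Hence
\[
g_j\le\frac{\rho}{j}(\Psi_{(j)}-\xi)\le 0.
\]

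The main obstacle is the bookkeeping in Step 3. The signs must be handled carefully: $\Psi_{(j)}-\xi\le0$, while the averaged terms are larger, i.e.\ less negative. Ties in the sorting are harmless, since the inequalities used are non-strict where needed. We also need $\rho\ge1$, which holds because $\sum_k u^*_k=1$ forces some $u^*_k>0$.
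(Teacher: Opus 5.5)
Your proof is correct. It establishes the same sign pattern as the paper ($g_j>0$ for $j\le\rho$, $g_j\le 0$ for $j>\rho$), but by a different mechanism.

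Write $\xi_j:=\frac1j\bigl(\sum_{q=1}^j\Psi_{(q)}-1\bigr)$, so that $\xi=\xi_\rho$.

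\textbf{The paper's route.} It reads off only the two boundary facts $\Psi_{(\rho)}>\xi_\rho$ and $\Psi_{(\rho+1)}\le\xi_\rho$ from Lemma~\ref{lem:proj1}. It then propagates upward by induction, using the one-step equivalence \eqref{eq:proj_lemma2_pf_rho_1}, $\Psi_{(j+1)}\le\xi_j\iff\Psi_{(j+1)}\le\xi_{j+1}$, together with the ordering $\Psi_{(j+2)}\le\Psi_{(j+1)}$. For $j<\rho$ it simply asserts $\Psi_{(\rho)}>\xi_{\rho-r}$ without further argument.

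\textbf{Your route.} You compare every $\xi_j$ with the single fixed threshold $\xi$ through the explicit identities $\xi_j=\xi-\frac1j\sum_{q=j+1}^{\rho}(\Psi_{(q)}-\xi)$ for $j<\rho$ and $\xi_j=\xi+\frac1j\sum_{q=\rho+1}^{j}(\Psi_{(q)}-\xi)$ for $j>\rho$. No induction is needed.

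\textbf{What each buys.} Your identities supply the inequality $\xi_j<\xi<\Psi_{(\rho)}$ that the paper uses unproved in the $j<\rho$ case. They also give quantitative bounds: $g_j\ge u^*_{(j)}$ for $j\le\rho$, and $g_j\le\frac{\rho}{j}(\Psi_{(j)}-\xi)$ for $j>\rho$. The paper's equivalence, combined with the ordering, shows something independent of the projection: for any sorted vector, $g_j\le 0$ implies $g_{j+1}\le 0$, and $g_1=1>0$. Hence $\{j:g_j>0\}$ is always an initial segment of $[K]$. That is exactly what justifies stopping Algorithm~\ref{alg:u_multi} at the first nonpositive $\tau$, and Lemma~\ref{lem:proj1} is then needed only to locate the endpoint.

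\textbf{A minor wording point in Step 1.} When several distinct $\Psi_k$ lie at or below $\xi$, the sorting permutations of $\mathbf u^*$ and $\Psi^\gamma$ need not coincide. What you actually use, and what is true, is the order-statistic identity $u^*_{(j)}=\max\{\Psi_{(j)}-\xi,0\}$. It holds because a nondecreasing map commutes with order statistics.
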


Hence, $\mathbf{u}$ can be computed by Algorithm~\ref{alg:u_multi}, where we loop through the indices to search for $\rho$. The loop starts with $j=2$, because $\tau=1$ for $j=1$. The summation $\sum_{q=1}^j\Psi_{(q)}^\gamma-1$ is computed by accumulating $\eta$. Along with the sorting algorithm, this gives the projection algorithm a computational complexity $\mathcal{O}(K\log(K))$.  
The function-extension step is dominated by solving the linear systems for $(a_1^k,\ldots,a_m^k)$, with worst-case cost $\mathcal{O}(Km^3)$. Since labels are sparse, $m$ is typically small in practice. Implementation details are provided in Section~\ref{sec:u_implementation}.

\begin{algorithm}[!ht]
    \caption{Computing the label-based fuzzy membership function}\label{alg:u_multi}
    \SetKwInOut{Input}{Input}
    \SetKw{Return}{Return:}
    \SetKwComment{Comment}{/* }{ */}
    \Input{Image $I$; The labeled set $\{(x_i, \psi(x_i))\}_{i=1}^m$; Kernel $\mathcal{K}=\mathrm{Diag}(\mathcal{K}_1,\ldots, \mathcal{K}_K)$, tuning parameter $\gamma$;}
    \For{$k=1,2,\ldots, K$}{
        Solve $\sum_{j=1}^m\mathcal{K}_k(x_i,x_j)a_j^k+m\gamma a_i^k=\psi_k(x_i)$ for $(a_1^k,\ldots, a_m^k)$\\
        $\Psi_k^\gamma(x)=\sum_{i=1}^m\mathcal{K}_k(x,x_i)a_i^k$
        \Comment*[r]{The extended function in $\mathcal{H_K}(\Omega)$}
    }
    Sort $(\Psi_1^\gamma, \ldots, \Psi_K^\gamma)$ into $(\Psi_{(1)}^\gamma, \ldots, \Psi_{(K)}^\gamma)$: $\Psi_{(1)}^\gamma \geq \ldots\geq \Psi_{(K)}^\gamma$;\\
    Set $\eta=\Psi_{(1)}^\gamma-1$;\\
    \For{$j=2,\ldots, K$}{
    $\eta=\eta+\Psi_{(j)}^\gamma$; \\
    $\tau=\Psi_{(j)}^\gamma-\frac{1}{j}\eta$ \\
    Stop when $\tau\leq0$. Set $\rho=j-1$, where $j$ is the index that stops. If $j=K$ and $\tau>0$, set $\rho=K$.
    }
    $\xi=\frac{1}{\rho}\left(\sum_{q=1}^\rho \Psi_{(q)}^\gamma-1\right)$
    \Comment*[r]{The projection criterion} 
    \For{$k=1,2,\ldots, K$}{
    $u_k=\max\{\Psi_k^\gamma-\xi,\ 0\}$
    \Comment*[r]{Thresholding}
    }
    \Return{$\mathbf{u}=(u_1,\ldots,u_K)$}
\end{algorithm} 

Our construction of $\mathbf{u}$ differs from \cite{kang2014supervised}.  
In \cite{kang2014supervised}, $\Psi$ is treated as the least-squares projection of a fuzzy membership function onto $\mathcal{H_K}(\Omega)$, which requires computing a Moore--Penrose inverse of $\mathcal{K}_k(x,x_i)$ and incurs $\mathcal{O}(Km^3)$ complexity.  
Here, we project $\Psi^\gamma$ directly onto the simplex $\tilde{\Delta}_U^K$, yielding a natural formulation and an additional projection cost of only $\mathcal{O}(K\log K)$.

\subsection{Special Case: Binary Segmentation} 

We remark this special case as the computation of the label-based fuzzy membership function can be further simplified. 

First, to represent binary phases, we only need a single indicator for the label function. i.e. $K=1$. We will denote it as $\psi_0$ to distinguish between the multiphase version. Then the extended function can be found by \eqref{eq:sol_Psi_forward} and \eqref{eq:sol_Psi_backward} directly. i.e. we have:
\begin{align} \label{eq:sol_Psi_binary}
\begin{split}
    \Psi_0^\gamma(x)=\sum_{i=1}^m\mathcal{K}_0(x,x_i)a_i^0, \qquad \sum_{j=1}^m\mathcal{K}_0(x_i,x_j)a_j^0+m\gamma a_i^0=\psi_0(x_i). 
\end{split}
\end{align}

At the projection step, we will be finding a $u_0:\Omega\rightarrow[0,1]$ s.t.
\begin{align*}
    \min_{u_0} \frac{1}{2}|u_0(x)-\Psi_0^\gamma(x)|^2 .
\end{align*}
The solution is, trivially:
\begin{align*}
     u_0^*(x) = 
\begin{cases}
    1, & {\mathrm{if}}\ \Psi_0^\gamma(x)\geq 1 \\
    \Psi_0^\gamma(x), & {\mathrm{if}}\ \Psi_0^\gamma(x)\in (0,1) \\
    0, & {\mathrm{otherwise}}
\end{cases}
\end{align*}
which can also be written as the thresholding form:
\begin{align*}
    u_0^*=\max\{\Psi_0^\gamma-\xi, 0\} ,
\end{align*}
where
\begin{align*}
    \xi=\max\{\Psi_0^\gamma-1,0\} .
\end{align*}
This means we just need to truncated all $\Psi_0^\gamma(x)\notin[0,1]$ to the boundary $\{0,1\}$ to obtain $u_0$. The computational complexity is $\mathcal{O}(1)$, which is optimal.

\begin{algorithm}[!ht]
    \caption{Computing the label-based fuzzy membership function (binary)}\label{alg:u_binary}
    \SetKwInOut{Input}{Input}
    \SetKw{Return}{Return:}
    \SetKwComment{Comment}{/* }{ */}
    \Input{Image $I$; The labeled set $\{(x_i, \psi_0(x_i))\}_{i=1}^m$; Kernel $\mathcal{K}_0$, tuning parameter $\gamma$;}
    Solve $\sum_{j=1}^m\mathcal{K}_0(x_i,x_j)a_j^0+m\gamma a_i^0=\psi_0(x_i)$ for $(a_1^0,\ldots, a_m^0)$\\
    $\Psi_0^\gamma(x)=\sum_{i=1}^m\mathcal{K}_0(x,x_i)a_i^0$
    \Comment*[r]{The extended function in $\mathcal{H_K}(\Omega)$}
    $\xi=\max\{\Psi_0^\gamma-1,0\}$
    \Comment*[r]{The projection criterion} 
    $u_0=\max\{\Psi_0^\gamma-\xi,\ 0\}$
    \Comment*[r]{Thresholding}
    \Return{$u_0$}
\end{algorithm} 
Hence, we have Algorithm \ref{alg:u_binary} for binary partition, and obtain the segmentation by solving:
\begin{align}\label{eq:main_model_binary}
    \min_{v_0\in BV(\Omega),v_0(x)\in[0,1]} \int_\Omega \left(1-2u_0(x)\right)v_0(x)dx + \lambda \int_\Omega \left(1-v_0(x)\right)(G_\sigma\ast v_0)(x)dx .
\end{align}

\section{Experimental Results for Single Image Segmentation}\label{Sec:Single_Image_Segmentation}

In this section, we present implementation details and practical performance of the proposed model in the single-image setting. We first study the pre-segmentation obtained by thresholding the fuzzy membership function $\mathbf u$ in Section~\ref{sec:rkhs_thre_effect}. We found that, on the one hand, $\mathbf u$ effectively propagates sparse labels and can produce high-quality pre-segmentations for relatively simple images. On the other hand, for semantically complex images, these pre-segmentations may miss edge information and become sensitive to parameter choices, especially in large inhomogeneous regions. Nevertheless, this issue can be well addressed with the additional perimeter regularization through the smooth approximation in our standard full models~\eqref{eq:main_model} or \eqref{eq:main_model_binary}. We, therefore, present the experimental results in Section~\ref{sec:td_results}. The final output $\mathbf v$ consistently improves over $\mathbf u$ and performs robustly in challenging scenarios, including water splashes, illumination bias, and noise. Noteworthy, we solve \eqref{eq:main_model} and \eqref{eq:main_model_binary} using the threshold dynamics (TD) \cite{Liu2011,wang2017efficient,liu2022deep} algorithm (see Appendix~\ref{sec:td_implementation}). For simplicity, we will refer to the additional smooth perimeter regularization as TD regularization in the rest of the paper. 

% Results of RKHS prob and RKHS thre
\subsection{Effect on the RKHS Pre-segmentation} \label{sec:rkhs_thre_effect}

\subsubsection{Implementation details} \label{sec:u_implementation}

We begin with discretization. Let $\hat I$ be a digital image on a discrete domain $\hat\Omega$ of size $h\times w$. Since pre-segmentation only requires a coarse partition, we downsample $\hat I$ (bilinear interpolation) to reduce cost. We use a scale
\[
c_0 := \max\left\{1,\left\lceil \frac{\max(h,w)}{150}\right\rceil\right\},
\]
which yields $\hat I_0$ on a coarser domain $\hat\Omega_0$ of size $h_0\times w_0=(h/c_0)\times (w/c_0)$, with at most about $150^2$ pixels. For typical images of size $300\times 400$, this reduces the number of pixels to roughly one quarter. We then apply the discretized versions of Algorithms~\ref{alg:u_multi} and~\ref{alg:u_binary}: all operators are first computed on $\hat\Omega_0$ (with zero Dirichlet boundary condition) to obtain $\Psi^\gamma$, then upsampled back to size $h\times w$, followed by simplex projection to obtain $\hat{\mathbf u}$.

We use the reproducing kernels from \cite{ha2010image,kang2014supervised}:
\begin{align}\label{eq:kernel}
    \mathcal{K}_k(x,y)=\exp{\left(-\frac{\|\hat I_0(\mathbf{x})-\hat I_0(\mathbf{y})\|_2^2}{2\sigma_{I,k}(2R_k+1)^2}\right)}\exp{\left(-\frac{|x-y|^2}{\sigma_{s,k}(h_0^2+w_0^2)}\right)}
\end{align}
for $k=0,1,\ldots,K$, where $x,y\in\hat\Omega_0$, $\mathbf x$ and $\mathbf y$ are $(2R_k+1)\times(2R_k+1)$ patches centered at $x$ and $y$, respectively. The kernel combines patch-intensity similarity and spatial closeness, controlled by $\sigma_{I,k}$ and $\sigma_{s,k}$.

With these kernels, the extended function is computed by solving
\begin{align}\label{eq:sol_Psi_backward_discrete}
    (\hat{\mathcal{K}}_k+m\gamma\hat{\mathcal{I}}) \mathbf{a}^k=\hat{\psi}_k
\end{align}
followed by
\begin{align}\label{eq:sol_Psi_forward_discrete}
    \hat{\Psi}_k=\check{\mathcal{K}}_k\mathbf{a}^k .
\end{align}
Here $\hat{\mathcal I}$ is the identity matrix, $\hat\psi_k$ is the discretized label vector, and $\mathbf a^k=(a_1^k,\ldots,a_m^k)^\top$. The matrices $\hat{\mathcal K}_k\in(0,1]^{m\times m}$ and $\check{\mathcal K}_k\in(0,1]^{h_0w_0\times m}$ contain kernel values $\mathcal K_k(x_i,x_j)$ ($x_i,x_j\in D_0$) and $\mathcal K_k(x,x_i)$ ($x\in\Omega_0$, $x_i\in D_0$), respectively. Since $\hat{\mathcal K}_k$ is symmetric positive definite, it is invertible; we set $\gamma=0$ and solve \eqref{eq:sol_Psi_backward_discrete} using conjugate gradient.

\subsubsection{Real Image Examples}

% Content:
% \begin{enumerate}
%     \item (Good) Perfect example
%     \item (Good) Multiphase, control of kernel parameter in different phases for different textures
%     \item (Bad) Sensitive to parameter
%     \item (Bad) Cannot deal with the Big Noise region
% \end{enumerate}

Figure~\ref{fig:dolphin} shows a dolphin in an underwater scene. The target is blurred and the intensity distribution is fairly homogeneous, which makes phase separation difficult. Our method first downsamples the raw image from size $300\times 400$ to $112\times 150$ and then applies Algorithm~\ref{alg:u_multi} to compute $\Psi^\gamma$. The result is upsampled back to the original resolution and projected onto the simplex. We use $\sigma_{I,0}=0.05$, $\sigma_{s,0}=3$, and $R_0=3$ as kernel parameters to balance patch-intensity similarity and spatial closeness. The resulting label-based fuzzy membership function $\mathbf u$ clearly separates the target from the background. The thresholded result $\mathbbm{1}_{\{x:\hat{u}_0(x)>0.5\}}$, shown in subfigure~(d), is already a satisfactory segmentation.

%=========Dolphin==============
\begin{figure}[!ht]
    \centering
    \begin{subfigure}[b]{0.22\textwidth}
        \centering
        \includegraphics[width=\textwidth]{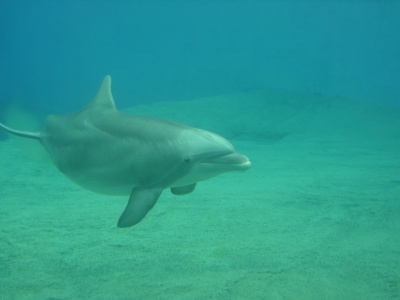} \\
        \caption{Image}
    \end{subfigure}
    \begin{subfigure}[b]{0.22\textwidth}
        \centering
        \includegraphics[width=\textwidth]{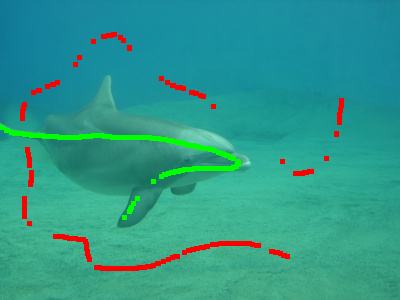} \\
        \caption{Scribbles}
    \end{subfigure}
    \begin{subfigure}[b]{0.22\textwidth}
        \centering
        \includegraphics[width=\textwidth]{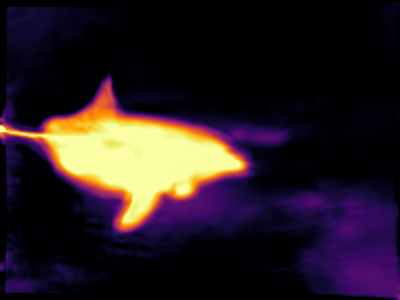} \\
        \caption{Proj}
    \end{subfigure}
    \begin{subfigure}[b]{0.22\textwidth}
        \centering
        \includegraphics[width=\textwidth]{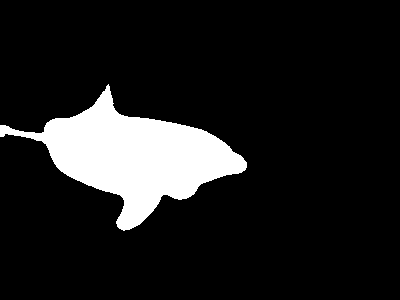} \\
        \caption{Thre}
    \end{subfigure}
    % \begin{subfigure}[b]{0.18\textwidth}
    %     \centering
        % \includegraphics[width=\textwidth]{Figures/iterative_method/window/0630_PottsTD_ea10sig3B0_iter14.png} \\
    %     \includegraphics[width=\textwidth]{Figures/Sal500/selected/complicated/0797_PottsTD_05.png} \\
    %     \caption{TD}
    % \end{subfigure}
    
    \caption{$u_0$ is computed in a domain smaller than $150\times150$, and sampled back to the original scale by bicubic interpolation. (a) Image: The raw image. (b) Scribbles: The drawn scribbles. (c) Proj: The fuzzy membership function $\hat{u}_0$. (d) Thre: The $\hat{u}_0$ threshold, $\mathbbm{1}_{\{x:\hat{u}_0(x)>0.5\}}$}
    
    \label{fig:dolphin}
\end{figure}

\paragraph{Effect of parameter choice}

We further illustrate the effect of parameter selection in Figure~\ref{fig:smooth stable2}. The example is a short-axis cardiac MRI slice from \cite{bernard2018deep}. The bright circular region in the ring-shaped object is the target, namely the left ventricle. We fix $\sigma_{I,0}=0.01$ and $R_0=2$ based on the intensity distribution, and vary $\sigma_{s,0}$. In column (c), a larger $\sigma_{s,0}$ makes the spatial kernel
\[
\exp\!\left(-\frac{|x-y|^2}{\sigma_{s,0}(h_0^2+w_0^2)}\right)
\]
close to $1$ for most pixels, so $\Psi_0^\gamma$ and hence $u_0$ depend mainly on patch intensity. In column (a), a smaller $\sigma_{s,0}$ makes spatial proximity dominant, so only pixels near the scribbles are assigned to the same phase. Pixels far from all scribbles are classified as background. Column (b) gives an intermediate choice and produces a more balanced and noise-free pre-segmentation. We will use a similar strategy in choosing the kernel parameters for other examples. 

\begin{figure*}[!htb]
    \centering
    \addtolength{\tabcolsep}{-5pt}
    \begin{tabular}[c]{cccccc}
    
    \begin{subfigure}[h]{0.17\textwidth}
        \centering
        \includegraphics[width=\linewidth]{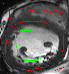}
    \end{subfigure}
    \hfill
    &
    \hfill
    &
    \hfill
    &
        \begin{subfigure}[h]{0.17\textwidth}
            \centering
            \includegraphics[width=\textwidth]{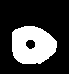}
            \includegraphics[width=\textwidth]{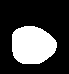}
            \caption{$\sigma_{s,0}=0.01$}
        \end{subfigure}
    &
        \begin{subfigure}[h]{0.17\textwidth}
            \centering
            \includegraphics[width=\textwidth]{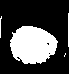}
            \includegraphics[width=\textwidth]{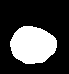}
            \caption{$\sigma_{s,0}=0.1$}
        \end{subfigure}
    &
        \begin{subfigure}[h]{0.17\textwidth}
            \centering
            \includegraphics[width=\textwidth]{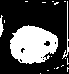}
            \includegraphics[width=\textwidth]{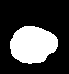}
            \caption{$\sigma_{s,0}=1$}
        \end{subfigure}
    
    \end{tabular}
    \addtolength{\tabcolsep}{5pt}
    \caption{Smoothing effect of the perimeter term on a cardiac MRI image under different values of $\sigma_{s,0}$. Left: input. Top: thresholded $\hat{u}_0$. Bottom: $\hat{v}_0$ obtained by Algorithm~\ref{alg:TD} with $\lambda=10$.}
    \label{fig:smooth stable2}
\end{figure*}

\paragraph{Multiphase}
Parameter selection is more delicate in the multiphase setting, since changing one $\sigma_{s,k}$ can alter the relative memberships across all phases after simplex projection. Our practical strategy is to fix the common kernel parameters across $\mathcal{K}_k$ and tune only the phase-specific spatial scales. Figure~\ref{fig:butterfly} shows a three-phase example (sky/background, butterfly, flowers). We first downsample the image from $300\times 400$ to $112\times 150$, compute $\Psi^\gamma$ on the coarse grid, and then upsample to obtain $\mathbf{u}$. We fix the patch-intensity variance and patch radius to $\sigma_{I,k}=0.01$ and $R_k=3$ for all phases, and tune only $\sigma_{s,k}$. Since the sky is nearly homogeneous, we set $\sigma_{s,1}=\infty$. The projected map in Figure~\ref{fig:butterfly}(e) is already close to the final partition; the remaining pepper noise mainly appears in the flower region and is removed by the final Potts regularization.

\begin{figure}[!ht]
    \centering
    \begin{subfigure}[b]{0.16\textwidth}
        \centering
        \includegraphics[width=\textwidth]{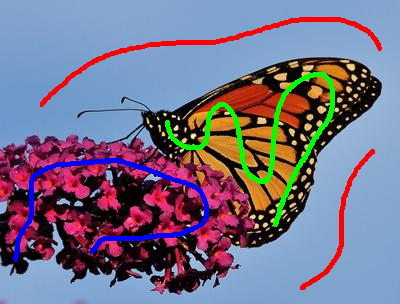} \\
        \caption{Scribbles}
    \end{subfigure}
    \begin{subfigure}[b]{0.16\textwidth}
        \centering
        \includegraphics[width=\textwidth]{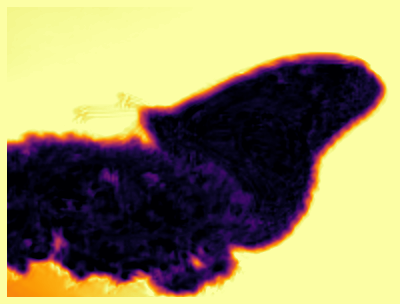} \\
        \caption{Proj-BG}
    \end{subfigure}
        \begin{subfigure}[b]{0.16\textwidth}
        \centering
        \includegraphics[width=\textwidth]{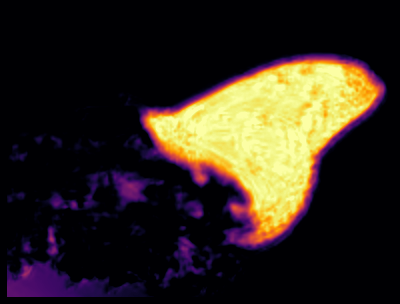} \\
        \caption{Proj-FG1}
    \end{subfigure}
        \begin{subfigure}[b]{0.16\textwidth}
        \centering
        \includegraphics[width=\textwidth]{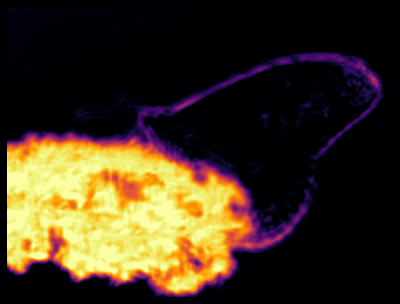} \\
        \caption{Proj-FG2}
    \end{subfigure}
    \begin{subfigure}[b]{0.16\textwidth}
        \centering
        \includegraphics[width=\textwidth]{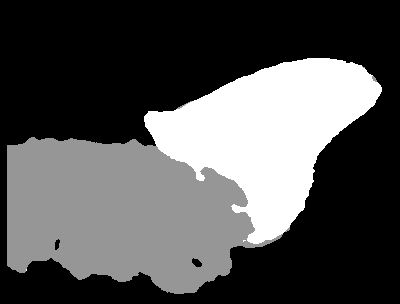} \\
        \caption{Thre}
    \end{subfigure}
    \begin{subfigure}[b]{0.16\textwidth}
        \centering
        \includegraphics[width=\textwidth]{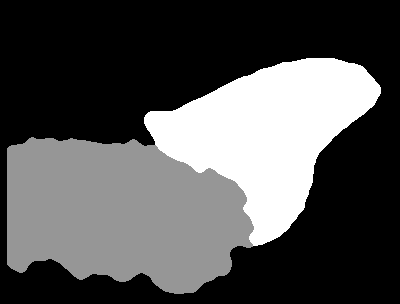} \\
        \caption{Iter}
    \end{subfigure}

    \caption{A result of the multiphase model. We set $\sigma_{I,1}=\sigma_{I,2}=\sigma_{I,3}=0.01$, $\sigma_{s,1}=\infty$, $\sigma_{s,2}=\sigma_{s,3}=1$, and $R_1=R_2=R_3=3$. (a) Scribbles: raw image with scribbles. (b)--(d) Proj: $\hat{u}_k$ for $k=1,2,3$. (e) Thre: $\mathbbm{1}_{\{x:u_k(x)>\max_{k'\neq k}u_{k'}(x)\}}$. (f) Iter: $\hat{\mathbf{v}}$.}
    \label{fig:butterfly}
\end{figure}

\paragraph{An ineffective example}

Nevertheless, some real images remain difficult: no preferable partition is obtained, regardless of kernel-parameter choices. Figure~\ref{fig:cactus} shows such a case. The small white spines have a strong intensity contrast with the green stem, although both belong to the foreground. As a result, patch-intensity similarity tends to classify these spines as background. In contrast, spatial closeness provides only limited correction because the two scribbles are close to each other and near the boundary. To propagate labels to distant foreground regions, $\sigma_{s,0}$ must be set relatively large. Figure~\ref{fig:cactus}(c) presents the best thresholded $\hat{u}_0$ we obtained with $\sigma_{I,0}=0.05$, $\sigma_{s,0}=60$, and $R_0=3$. Even in this setting, the spines still appear as pepper noise in the pre-segmentation.

\begin{figure}[!ht]
    \centering
    \begin{subfigure}[b]{0.2\textwidth}
        \centering
        \includegraphics[width=\textwidth]{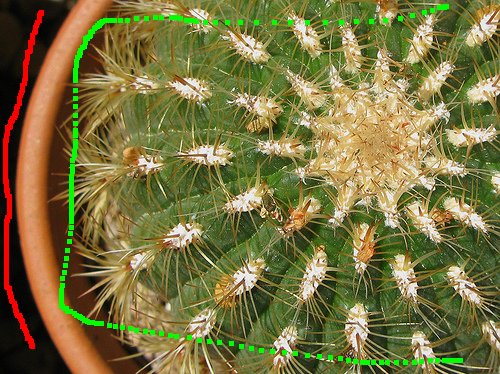} \\
        \caption{Scribbles}
    \end{subfigure}
    \begin{subfigure}[b]{0.2\textwidth}
        \centering
        \includegraphics[width=\textwidth]{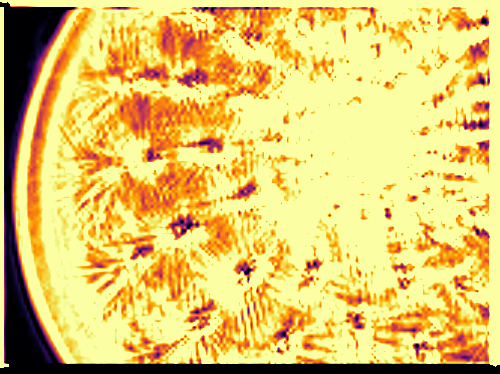} \\
        \caption{Proj}
    \end{subfigure}
    \begin{subfigure}[b]{0.2\textwidth}
        \centering
        \includegraphics[width=\textwidth]{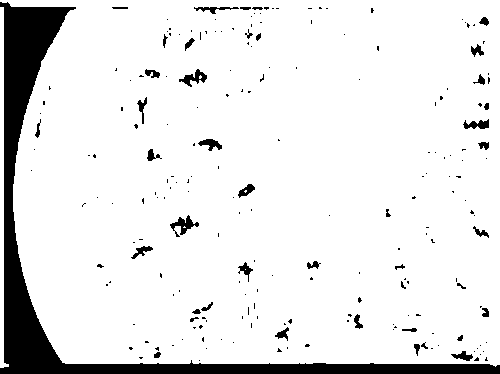} \\
        \caption{Thre}
    \end{subfigure}
        \begin{subfigure}[b]{0.2\textwidth}
        \centering
        \includegraphics[width=\textwidth]{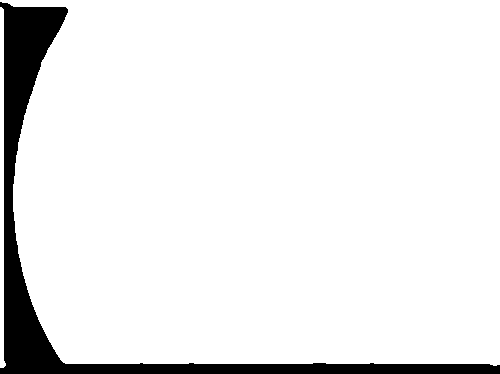} \\
        \caption{Iter}
    \end{subfigure}
    
    \caption{Cactus. The white glochids have sharp intensity differences with the green stem. Our model effectively smooths the pepper noises. (a) Scribbles: The raw image along with drawn scribbles. (b) Proj: $\hat{u}_0$. (c) Thre: $\mathbbm{1}_{\{x:\hat{u}_0(x)>0.5\}}$. (d) Iter: $\hat{v}_0$}
    \label{fig:cactus}
\end{figure}

These examples indicate that the pre-segmentation may not be enough for a perfect partition. Later, in the results of our proposed model (\ref{eq:main_model}) or (\ref{eq:main_model_binary}), we will see that the perimeter term can effectively remove this salt and pepper region. First, we first present the optimization details of our model.

%=================Iterative=====================

\subsection{Experiments Results}\label{sec:td_results}

All experiments in this section were conducted on a laptop with a 13th Gen Intel(R) Core(TM) i5-13500H (2.60\,GHz) CPU and 16\,GB RAM. Unless otherwise specified, we set $\sigma=3$. The implementation is written in Python. We use the \texttt{GaussianBlur} operator in PyTorch \cite{paszke2019pytorch} to approximate Gaussian convolution. This implementation applies kernel-based convolution rather than FFT. Although this approximation of the perimeter term is less accurate than total variation (TV), the results below show that it still provides effective smoothing.

\subsubsection{Effect of Threshold Dynamics Regularization}

Figure~\ref{fig:cactus}(d) shows that the solution $v_0$ from the full model yields a more compact mask. We solve it using Algorithm~\ref{alg:TD} with $\lambda=35$. The threshold dynamics (TD) regularization effectively removes pepper noise. This effect is also observed in the multiphase example in Figure~\ref{fig:butterfly}: small pepper noise in the flower phase is suppressed, while cleaner phases are preserved. A more challenging case is shown in Figure~\ref{fig:rabbit}. Since the rabbit and background have similar intensity distributions, the thresholded $\hat{u}_0$ contains both salt and pepper noise. TD regularization reduces both types of noise in foreground and background regions.

\begin{figure}[!ht]
    \centering
    \begin{subfigure}[b]{0.18\textwidth}
        \centering
        \includegraphics[width=\textwidth]{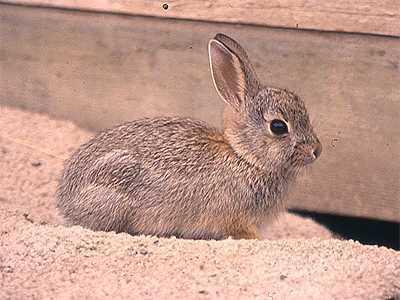} \\
        \caption{Image}
    \end{subfigure}
    \begin{subfigure}[b]{0.18\textwidth}
        \centering
        \includegraphics[width=\textwidth]{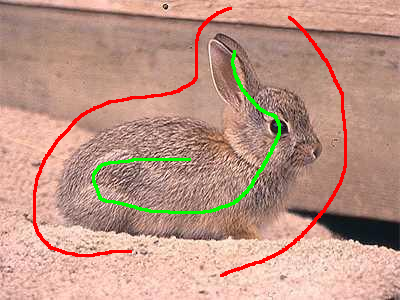} \\
        \caption{Scribbles}
    \end{subfigure}
    \begin{subfigure}[b]{0.18\textwidth}
        \centering
        \includegraphics[width=\textwidth]{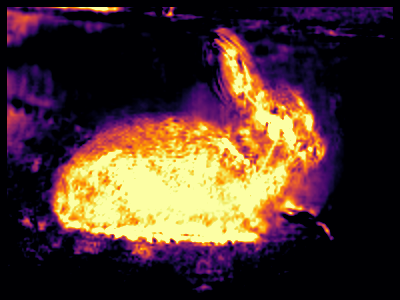} \\
        \caption{Proj}
    \end{subfigure}
    \begin{subfigure}[b]{0.18\textwidth}
        \centering
        \includegraphics[width=\textwidth]{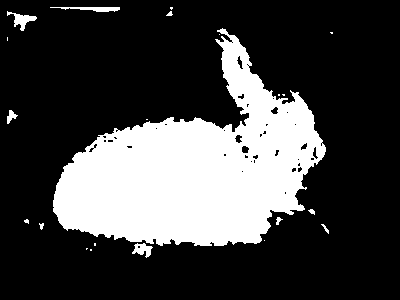} \\
        \caption{Thre}
    \end{subfigure}
        \begin{subfigure}[b]{0.18\textwidth}
        \centering
        \includegraphics[width=\textwidth]{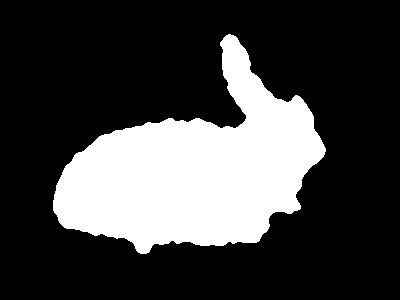} \\
        \caption{Iter}
    \end{subfigure}
    
    \caption{Homogeneous image. Objects and background intensity distribution are similar. Our model is solved under the parameters: $\sigma_{I,0}=0.01$, $\sigma_{s,0}=8$, $R_0=3$, $\lambda=30$}
    
    \label{fig:rabbit}
\end{figure}

The additional term also makes the partition more robust against parameter choices. In figure~\ref{fig:smooth stable2}, we demonstrate the effect of $\hat{u}_0$ threshold under different choices of kernel parameter. The 2nd row images are the corresponding $\hat{v}_0$ obtained. All three $\hat{v}_0$ are compact and noise-free. This suggests that our method is robust under parameter choices with the spatial regularization in the Potts model.

\subsubsection{Noisy Images}

The full model is also robust when images are heavily corrupted by noise. In Figure~\ref{fig:noisy}, we show two skin-lesion images with several types of additive noise, with intensity up to $0.5$, under sparse labels.  
For the first image, only one foreground point is provided; we therefore add the four image corners as background points.  
For the second image, we use a short foreground scribble on the lesion and a sparse circular background scribble.  

We use
$(\sigma_{I,0},\sigma_{s,0},R_0,\lambda)=(0.03,3,2,10)$ for the first image and
$(0.01,1,3,10)$ for the second image, for all noise settings.  
The resulting masks preserve consistent lesion shapes and remain close to the ground truth, showing robustness to different noise types.

%==========Noisy ===================
\begin{figure}[!h]
    \centering
        \begin{tabular}{ccccc}
            \begin{subfigure}[b]{0.18\textwidth}
                \centering
                \includegraphics[width=\textwidth]{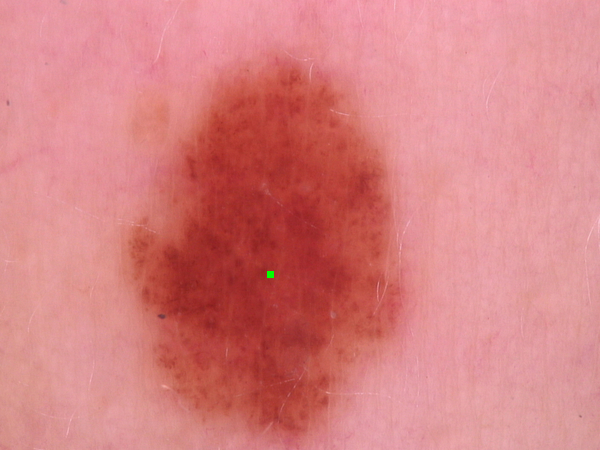} \\
                \includegraphics[width=\textwidth]{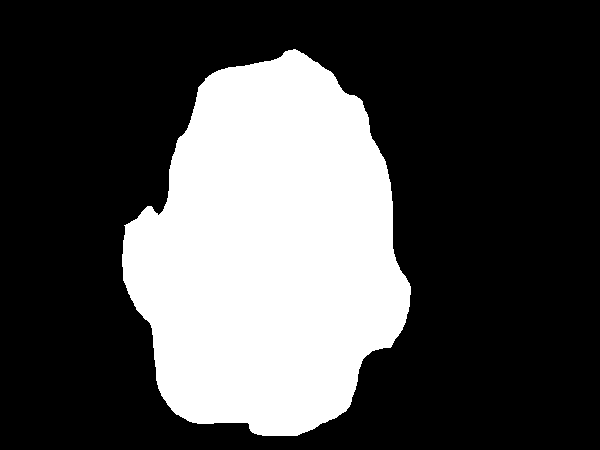}
                %\caption{Input\&CE}
            \end{subfigure}
            &
            \begin{subfigure}[b]{0.18\textwidth}
                \centering
                \includegraphics[width=\textwidth]{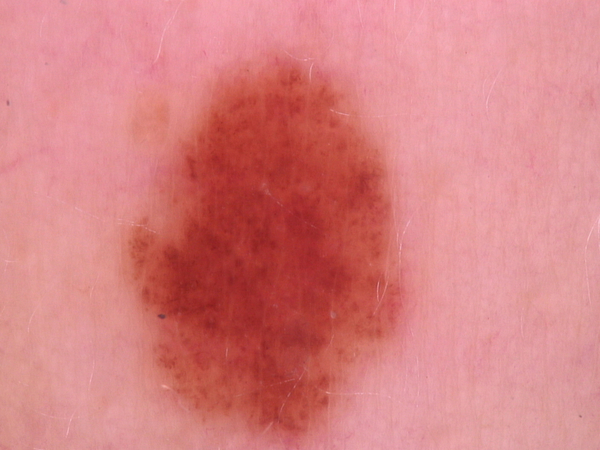} 
                \\
                % \includegraphics[width=\textwidth]{Figures/iterative_method/noisy/ISIC_0024334_thre_rgb0003xy3r2.png}
                % \\
                \includegraphics[width=\textwidth]{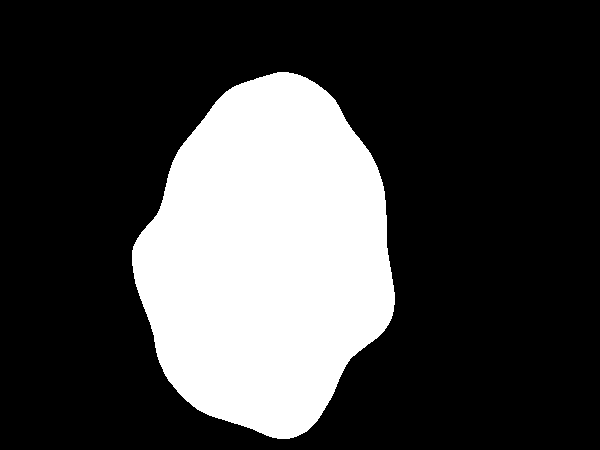}
                %\caption{raw}
            \end{subfigure}
            &
            \begin{subfigure}[b]{0.18\textwidth}
                \centering
                \includegraphics[width=\textwidth]{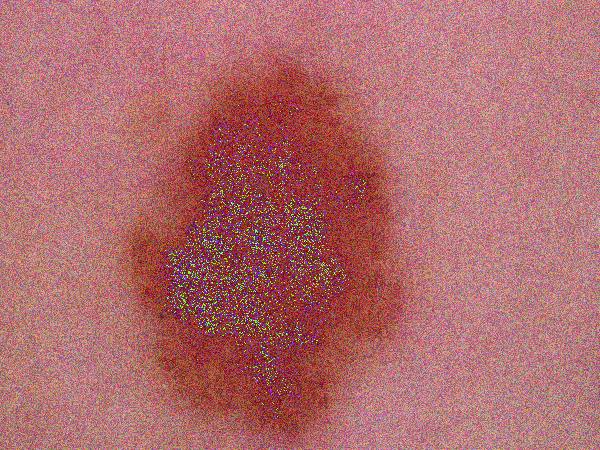} \\
                \includegraphics[width=\textwidth]{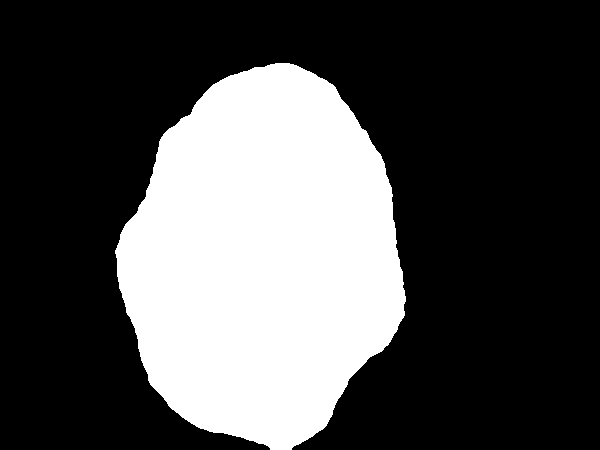}
                %\caption{Uniform}
            \end{subfigure}
            &
            \begin{subfigure}[b]{0.18\textwidth}
                \centering
                \includegraphics[width=\textwidth]{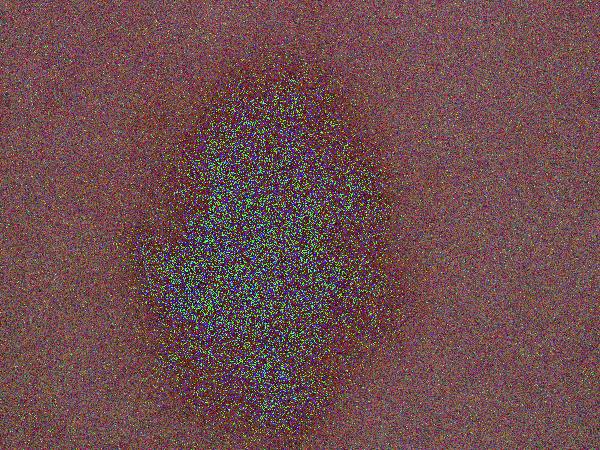} \\
                \includegraphics[width=\textwidth]{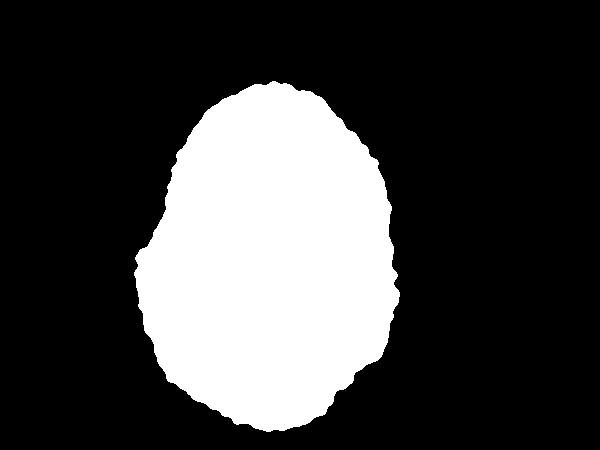}
                %\caption{Gaussian}
            \end{subfigure}
            &
            \begin{subfigure}[b]{0.18\textwidth}
                \centering
                \includegraphics[width=\textwidth]{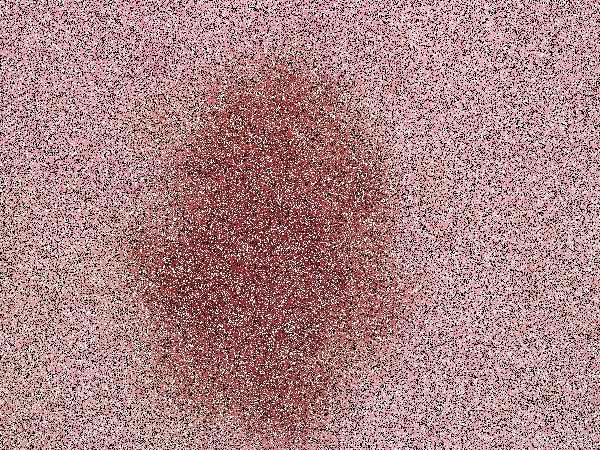} \\
                \includegraphics[width=\textwidth]{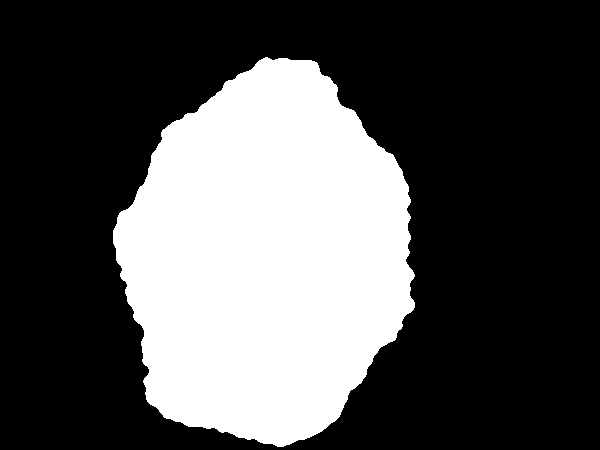}
                %\caption{Salt}
            \end{subfigure}
        \\
            \begin{subfigure}[b]{0.18\textwidth}
                \centering
                \includegraphics[width=\textwidth]{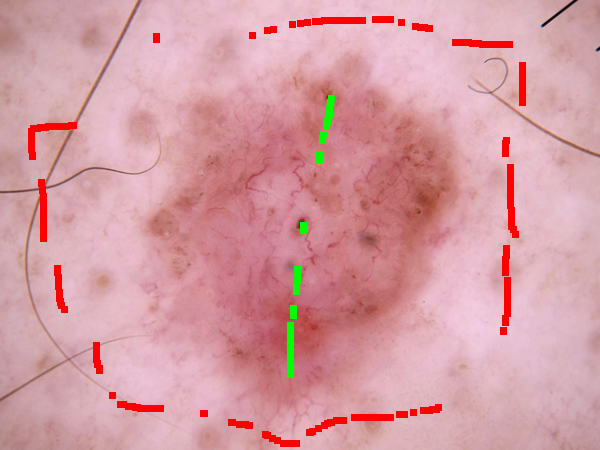} \\
                \includegraphics[width=\textwidth]{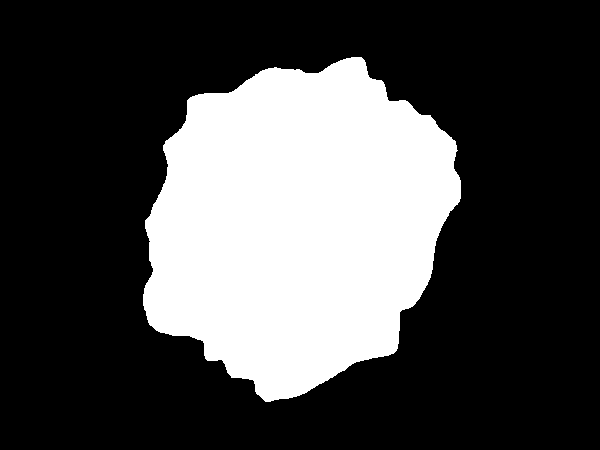}
                \caption{Scribbles\&GT}
            \end{subfigure}
            &
            \begin{subfigure}[b]{0.18\textwidth}
                \centering
                \includegraphics[width=\textwidth]{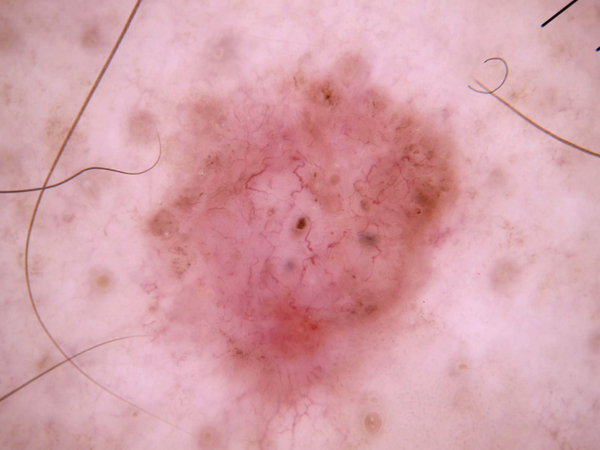} \\
                \includegraphics[width=\textwidth]{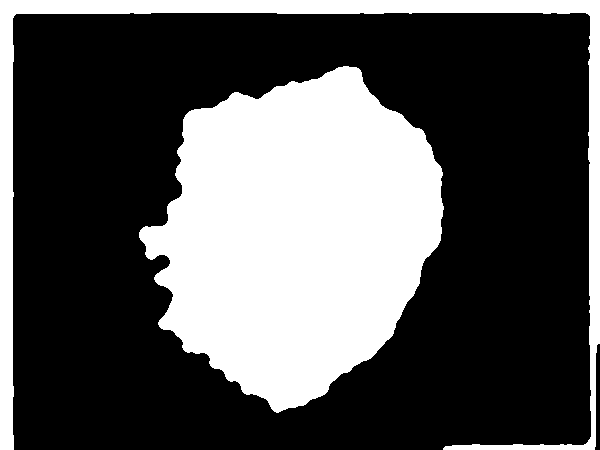}
                \caption{\\No noise}
            \end{subfigure}
            &
            \begin{subfigure}[b]{0.18\textwidth}
                \centering
                \includegraphics[width=\textwidth]{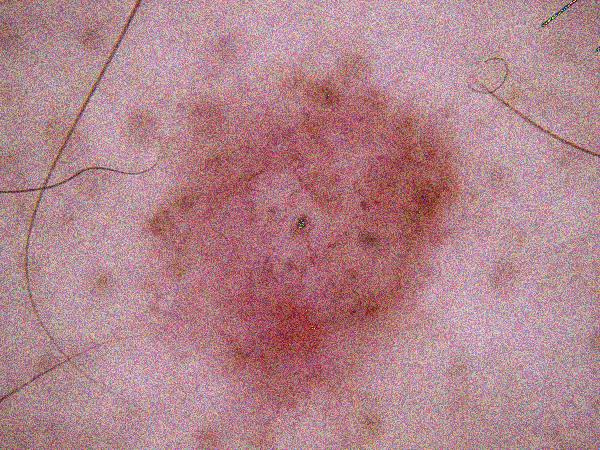} \\
                \includegraphics[width=\textwidth]{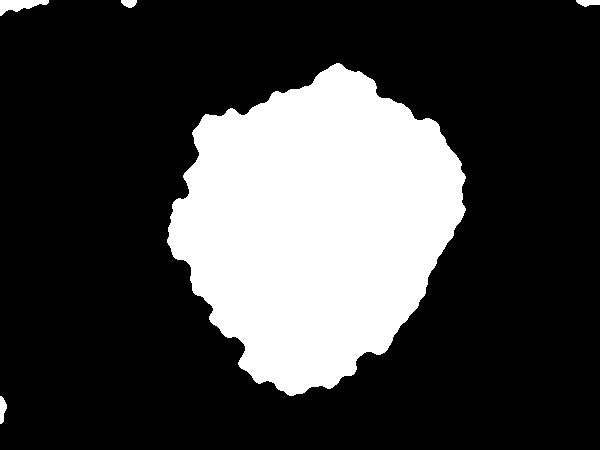}
                \caption{\\Uniform-$0.5$}
            \end{subfigure}
            &
            \begin{subfigure}[b]{0.18\textwidth}
                \centering
                \includegraphics[width=\textwidth]{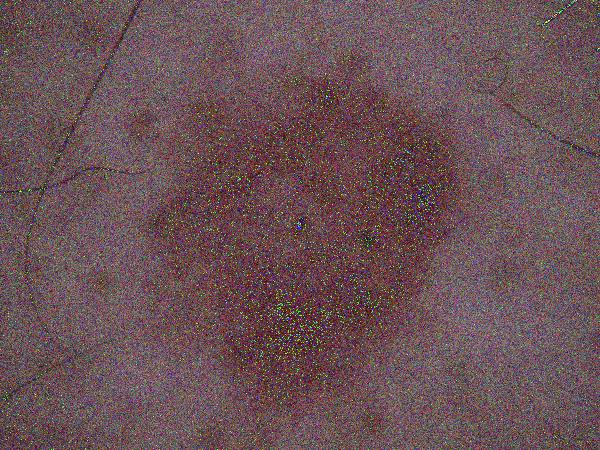} \\
                \includegraphics[width=\textwidth]{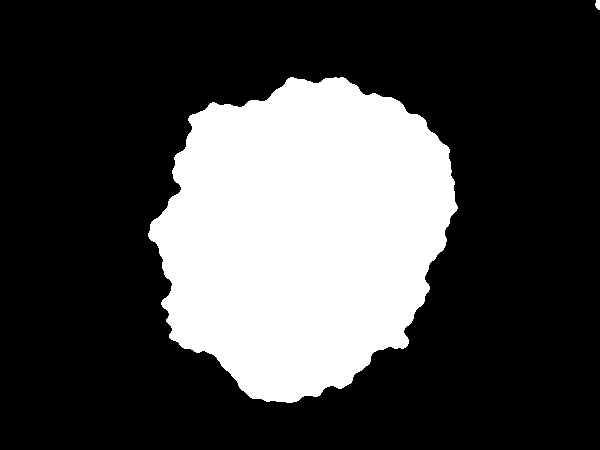}
                \caption{\\Gaussian-$0.25$}
            \end{subfigure}
            &
            \begin{subfigure}[b]{0.18\textwidth}
                \centering
                \includegraphics[width=\textwidth]{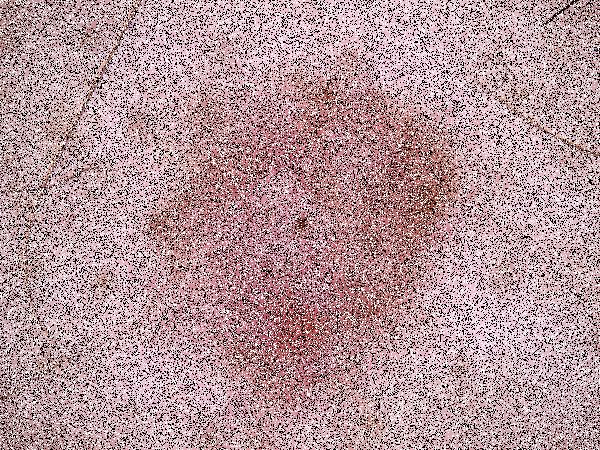} \\
                \includegraphics[width=\textwidth]{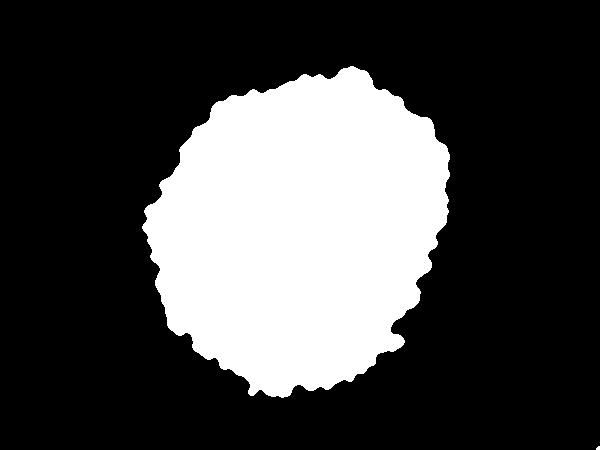}
                \caption{\\Salt-$0.5$}
            \end{subfigure}
        \end{tabular}
    \caption{Noise-corrupted skin-lesion images. Column (a) shows sparse labels and expert ground truth. Columns (b)--(e) show results on clean images, uniform noise ($0.5$), Gaussian noise ($0.25$), and salt-and-pepper noise ($0.5$), respectively.}
    \label{fig:noisy}
\end{figure}

\subsubsection{More Real Image Examples}

We provide additional examples of the standard full model on real images with different characteristics.

\paragraph{Homogeneous image}

We call an image homogeneous when the object and background intensity distributions strongly overlap, even if each class is internally inhomogeneous. Examples are shown in Figures~\ref{fig:dolphin} and \ref{fig:rabbit}; their distributions are provided in Appendix~\ref{sec:homo_img_dist}.  
Classical data-fidelity terms often struggle in this case because they cannot separate overlapping distributions. Background pixels with similar intensity and proximity to the object are easily misclassified, producing artifacts or salt-and-pepper noise.  
Our kernel-based RKHS construction captures subtle distributional differences. In Figure~\ref{fig:dolphin}, both regions are underwater emerald tones, with only slight gray differences on the dolphin; $\hat{u}_0$ already gives a clean segmentation. In Figure~\ref{fig:rabbit}, both object and background are tan, but the rabbit has a richer texture. The resulting $\hat{u}_0$ highlights the rabbit with limited noise, and $\hat{v}_0$ yields a compact final mask.

\paragraph{Illumination bias}

Figure~\ref{fig:low_light} shows examples with illumination bias under low-light conditions.  
In the horse image, partial darkening increases intensity inhomogeneity and blurs object boundaries. We use $\sigma_{I,0}=0.05$, $\sigma_{s,0}=3$, $R_0=3$, and $\lambda=10$. A slightly larger $\sigma_{I,0}$ allows sufficient intensity variation. Although the ears are partially missed, $\hat{u}_0$ still captures most of the dark object region.  
In the second row, the boat image is strongly underexposed. We use $\sigma_{I,0}=0.003$, $\sigma_{s,0}=1$, $R_0=4$, and $\lambda=10$ to exploit subtle intensity differences and obtain a smooth result in $\hat{v}_0$.

\begin{figure}[!h]
    \centering
    \begin{subfigure}[b]{0.18\textwidth}
        \centering
        \includegraphics[width=\textwidth]{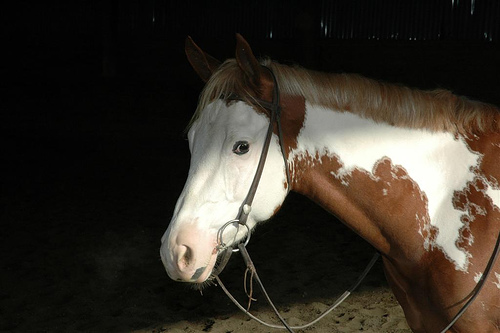} \\
        \includegraphics[width=\textwidth]{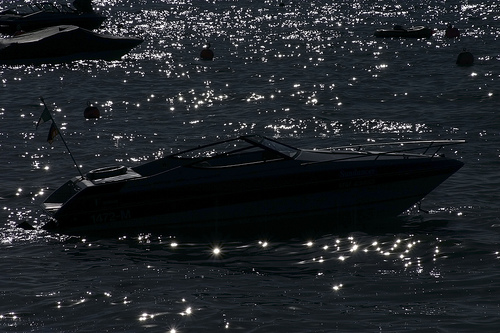} \\
        \caption{Image}
    \end{subfigure}
    \begin{subfigure}[b]{0.18\textwidth}
        \centering
        \includegraphics[width=\textwidth]{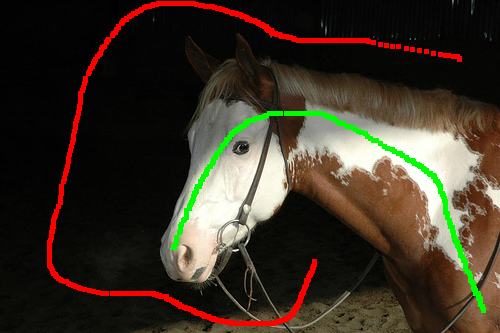} \\
        \includegraphics[width=\textwidth]{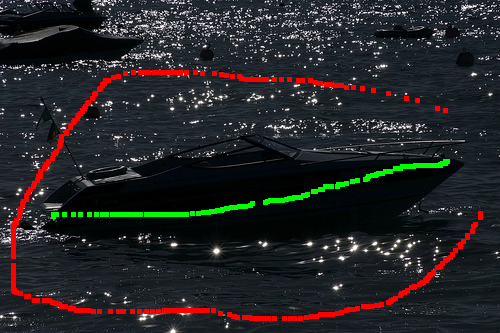} \\
        \caption{Scribbles}
    \end{subfigure}
    \begin{subfigure}[b]{0.18\textwidth}
        \centering
        \includegraphics[width=\textwidth]{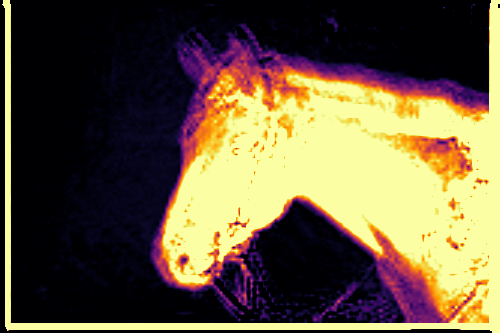} \\
        \includegraphics[width=\textwidth]{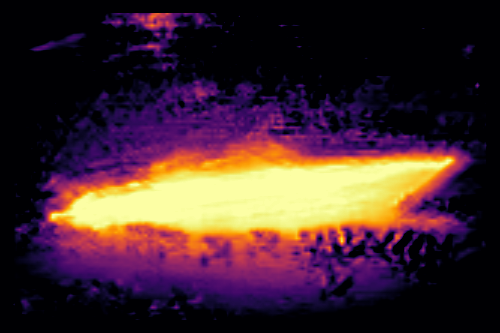} \\
        \caption{Proj}
    \end{subfigure}
    \begin{subfigure}[b]{0.18\textwidth}
        \centering
        \includegraphics[width=\textwidth]{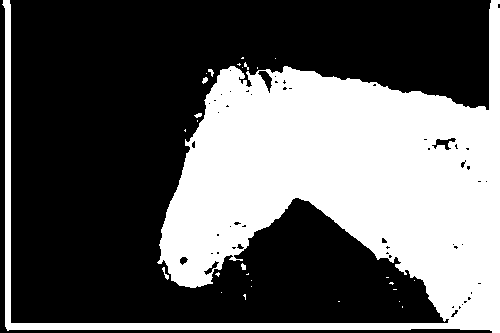} \\
        \includegraphics[width=\textwidth]{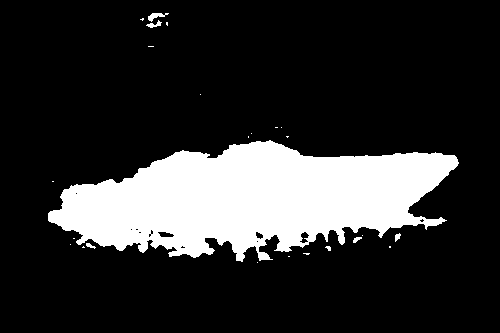} \\
        \caption{Thre}
    \end{subfigure}
    \begin{subfigure}[b]{0.18\textwidth}
        \centering
        \includegraphics[width=\textwidth]{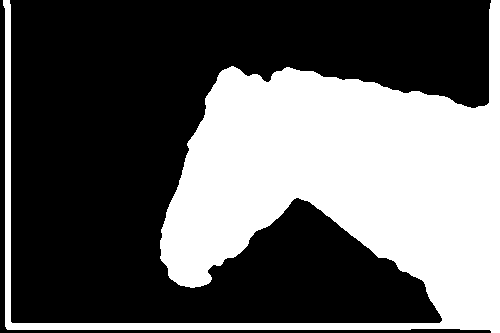} \\
        \includegraphics[width=\textwidth]{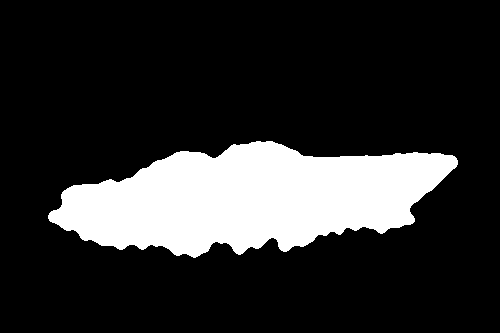} \\
        \caption{Iter}
    \end{subfigure}
    
    \caption{Images exhibit illumination bias due to low light conditions}
    
    \label{fig:low_light}
\end{figure}

\paragraph{Water splashes}

Images captured in rain or water-activity scenes often contain splash artifacts. Boundaries become blurred and intensity distributions become more oscillatory. Figure~\ref{fig:misty} shows two such examples.  
We set $\sigma_{I,0}=0.01$, $R_0=3$, with $\sigma_{s,0}=100$ for the lady image and $\sigma_{s,0}=3$ for the window image. The resulting $\hat{u}_0$ captures the targets with only mild boundary splitting. Remaining salt-and-pepper artifacts are removed in $\hat{v}_0$ using $\lambda=10$.

\begin{figure}[!ht]
    \centering
    \begin{subfigure}[b]{0.2\textwidth}
        \centering
        \includegraphics[width=\textwidth]{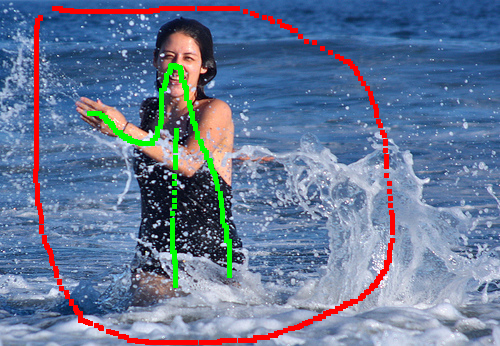} \\
        \includegraphics[width=\textwidth]{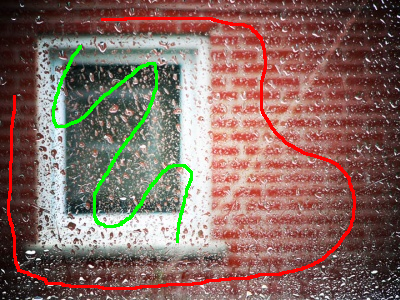} \\
        \caption{Scribbles}
    \end{subfigure}
    \begin{subfigure}[b]{0.2\textwidth}
        \centering
        \includegraphics[width=\textwidth]{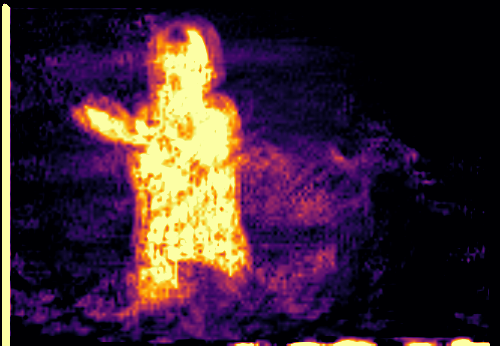} \\
        \includegraphics[width=\textwidth]{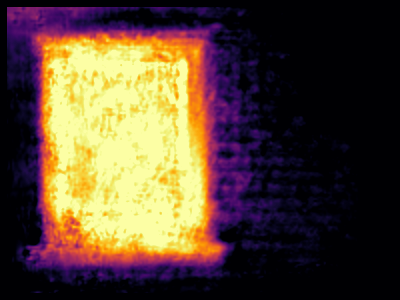} \\
        \caption{Proj}
    \end{subfigure}
    \begin{subfigure}[b]{0.2\textwidth}
        \centering
        \includegraphics[width=\textwidth]{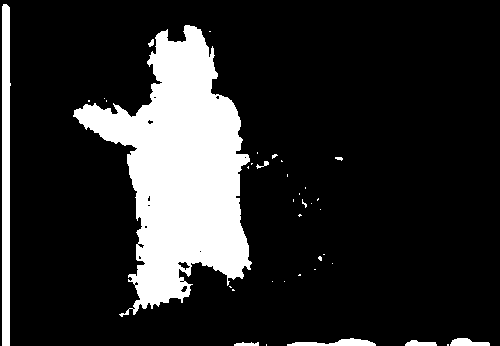} \\
        \includegraphics[width=\textwidth]{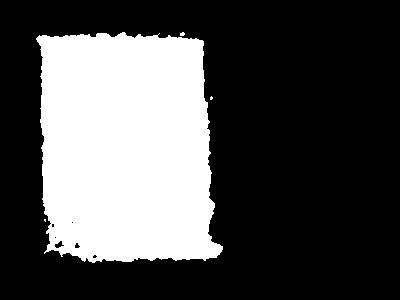} \\
        \caption{Thre}
    \end{subfigure}
        \begin{subfigure}[b]{0.2\textwidth}
        \centering
        \includegraphics[width=\textwidth]{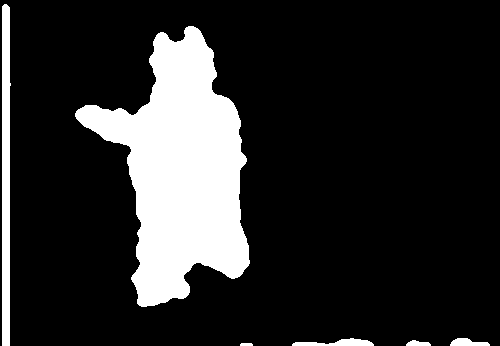} \\
        \includegraphics[width=\textwidth]{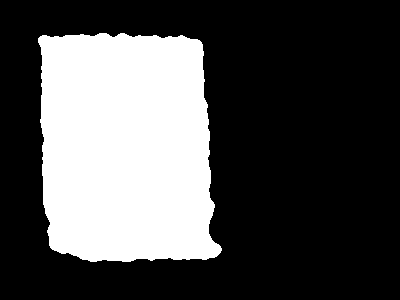} \\
        \caption{Iter}
    \end{subfigure}
    
    \caption{Water splashes blur the images.}
    
    \label{fig:misty}
\end{figure}

\paragraph{Sharp thin objects}

If edge regularization is too strong, thin and sharp object parts may be oversmoothed. With scribble guidance, these components can be preserved by placing foreground labels directly on them.  
Figure~\ref{fig:thin_obj} illustrates this on a bird image: the beak and legs are explicitly labeled as foreground, so $u_0=1$ on those regions. We use $\sigma_{I,0}=0.01$, $\sigma_{s,0}=0.1$, and $R_0=3$ to suppress undesired thin structures. As seen in Figure~\ref{fig:thin_obj}(e), reflection artifacts are removed while the beak and legs are retained.

\begin{figure}[!h]
    \centering
    \begin{subfigure}[b]{0.18\textwidth}
        \centering
        \includegraphics[width=\textwidth]{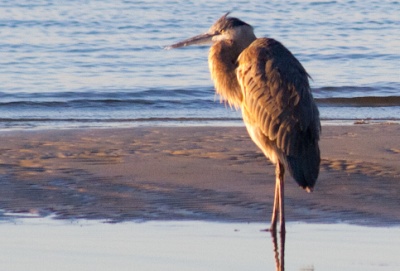} \\
        \caption{Image}
    \end{subfigure}
    \begin{subfigure}[b]{0.18\textwidth}
        \centering
        \includegraphics[width=\textwidth]{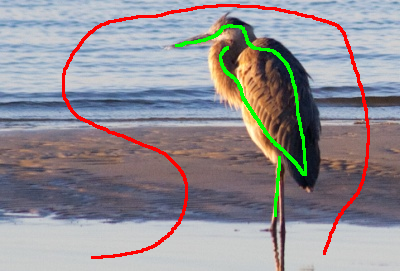} \\
        \caption{Scribbles}
    \end{subfigure}
    \begin{subfigure}[b]{0.18\textwidth}
        \centering
        \includegraphics[width=\textwidth]{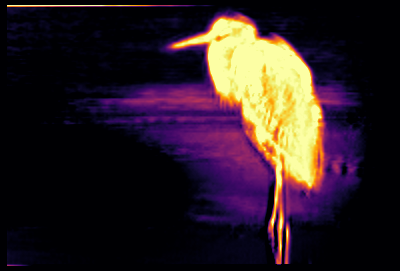} \\
        \caption{Proj}
    \end{subfigure}
    \begin{subfigure}[b]{0.18\textwidth}
        \centering
        \includegraphics[width=\textwidth]{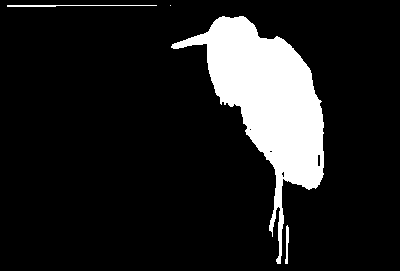} \\
        \caption{Thre}
    \end{subfigure}
    \begin{subfigure}[b]{0.18\textwidth}
        \centering
        \includegraphics[width=\textwidth]{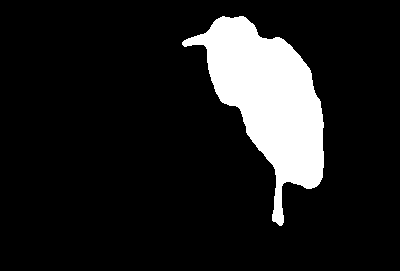} \\
        \caption{Iter}
    \end{subfigure}
    
    \caption{Thin legs of the bird are preserved. $\lambda=10$}
    
    \label{fig:thin_obj}
\end{figure}

\paragraph{Oscillating regions}

Regions with rapidly varying intensity are semantically complex and difficult for simple data-fidelity terms. Figure~\ref{fig:oscillate} shows three such examples.  
We set $\sigma_{s,0}=0.1$ so dense scribbles can guide coarse localization, and use $\sigma_{I,0}=0.01$, $R_0=3$ to balance patch-intensity information. As shown in Figure~\ref{fig:oscillate}(c)--(d), $\hat{u}_0$ captures the target but includes boundary splits. With larger $\lambda$, the final $\hat{v}_0$ becomes more compact (Figure~\ref{fig:oscillate}(e)).

\begin{figure}[!ht]
    \centering
    \begin{subfigure}[b]{0.18\textwidth}
        \centering
        \includegraphics[width=\textwidth]{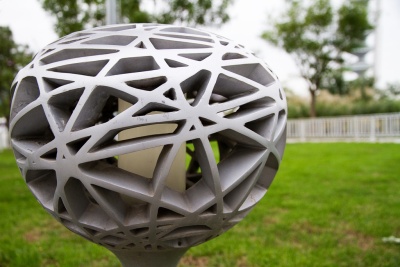} \\
        \includegraphics[width=\textwidth]{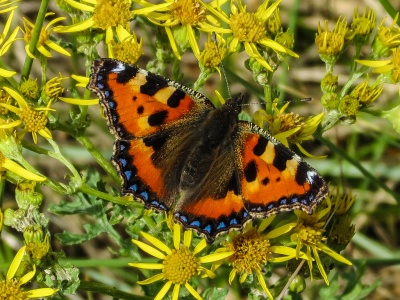} \\
        \includegraphics[width=\textwidth]{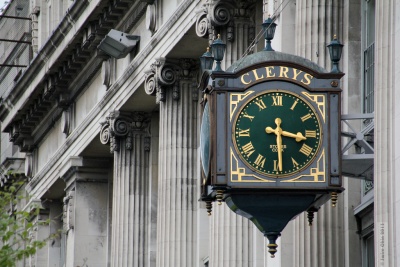} \\
        \caption{Image}
    \end{subfigure}
    \begin{subfigure}[b]{0.18\textwidth}
        \centering
        \includegraphics[width=\textwidth]{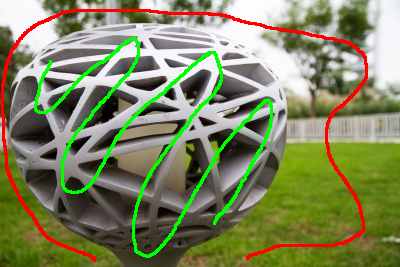} \\
        \includegraphics[width=\textwidth]{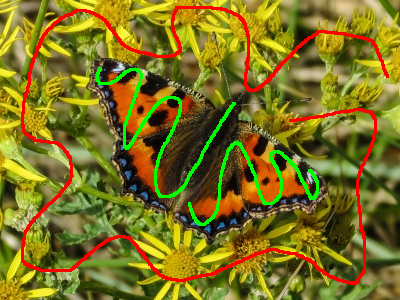} \\
        \includegraphics[width=\textwidth]{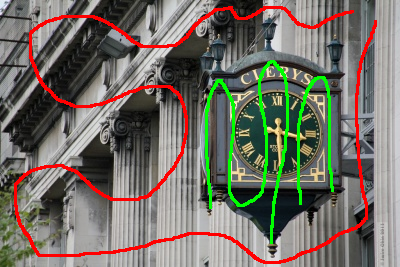} \\
        \caption{Scribbles}
    \end{subfigure}
    \begin{subfigure}[b]{0.18\textwidth}
        \centering
        \includegraphics[width=\textwidth]{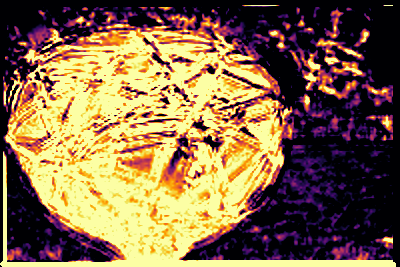} \\
        \includegraphics[width=\textwidth]{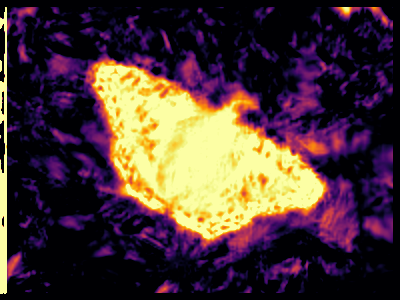} \\
        \includegraphics[width=\textwidth]{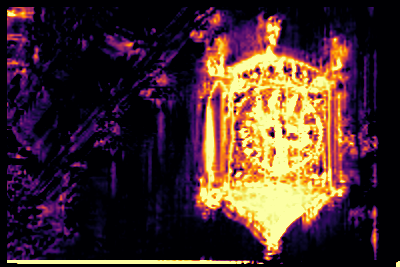} \\
        \caption{Proj}
    \end{subfigure}
    \begin{subfigure}[b]{0.18\textwidth}
        \centering
        \includegraphics[width=\textwidth]{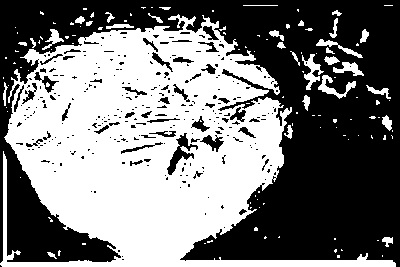} \\
        \includegraphics[width=\textwidth]{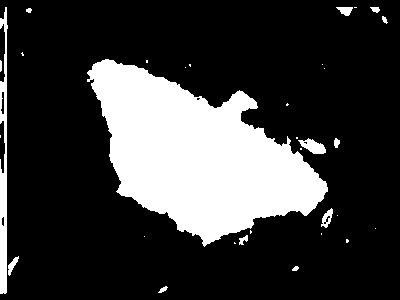} \\
        \includegraphics[width=\textwidth]{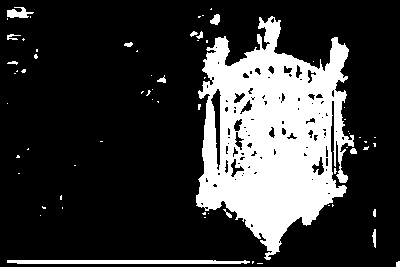} \\
        \caption{Thre}
    \end{subfigure}
        \begin{subfigure}[b]{0.18\textwidth}
        \centering
        \includegraphics[width=\textwidth]{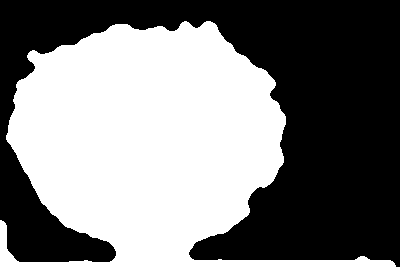} \\
        \includegraphics[width=\textwidth]{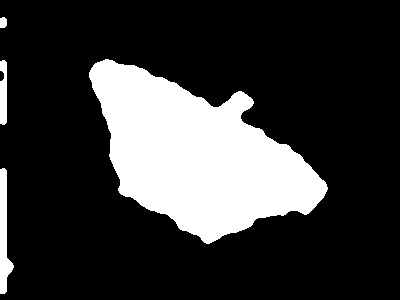} \\
        \includegraphics[width=\textwidth]{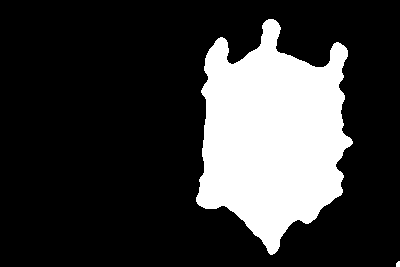} \\
        \caption{Iter}
    \end{subfigure}
    
    \caption{Images with oscillatory regions. $\lambda=70,30,30$ (top to bottom), respectively.}
    \label{fig:oscillate}
\end{figure}

% \begin{figure}[!ht]
%     \centering
%     \begin{subfigure}[b]{0.2\textwidth}
%         \centering
%         \includegraphics[width=\textwidth]{Figures/iterative_method/complicated/2010_005320_overlap.png} \\
%         \caption{Input}
%     \end{subfigure}
%     \begin{subfigure}[b]{0.2\textwidth}
%         \centering
%         \includegraphics[width=\textwidth]{Figures/iterative_method/complicated/2010_005320_prob0_rgb001xyInfr3.png} \\
%         \caption{Prob}
%     \end{subfigure}
%     \begin{subfigure}[b]{0.2\textwidth}
%         \centering
%         \includegraphics[width=\textwidth]{Figures/iterative_method/complicated/2010_005320_thre.png} \\
%         \caption{Thre}
%     \end{subfigure}
%         \begin{subfigure}[b]{0.2\textwidth}
%         \centering
%         \includegraphics[width=\textwidth]{Figures/iterative_method/complicated/2010_005320_PottsTD_ea50sig3B0_iter141.png} \\
%         \caption{Iter}
%     \end{subfigure}
    
%     \caption{The spatial kernel is not enough to detect and eliminate the noisy background}
    
%     \label{fig:eangle}
% \end{figure}

\paragraph{Discontinuous edges}

Some images contain weak or discontinuous boundaries between the object and the background. Gradient-based methods often fail in such cases, while region-based modeling remains effective.  
Figure~\ref{fig:europe} shows a high-resolution ($1008\times756$) night-light image of Europe cropped from \cite{roman2018nasa}. Object edges are only partially visible, and we provide two rough, thick scribbles. Using $\sigma_{I,0}=0.05$, $\sigma_{s,0}=3$, $R_0=3$, and $\lambda=10$, the model captures the coarse continental shape.

%=========Europe==================
\begin{figure}[!ht]
    \centering
    \begin{subfigure}[b]{0.3\textwidth}
        \centering
        \includegraphics[width=\textwidth]{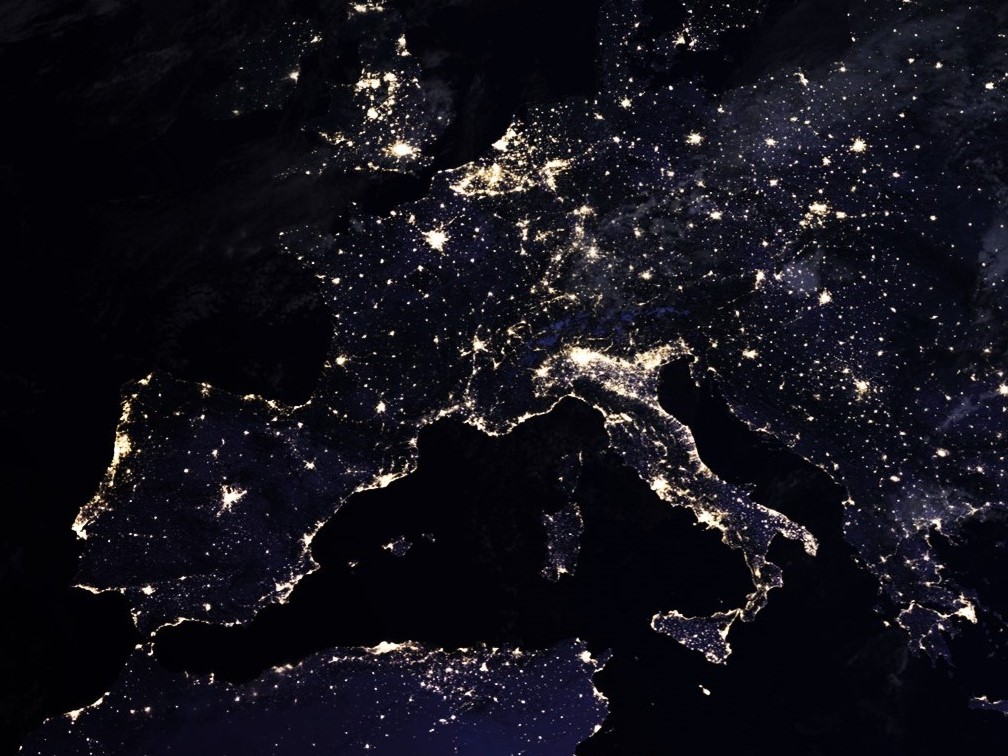} \\
        \caption{Image}
    \end{subfigure}
    \begin{subfigure}[b]{0.3\textwidth}
        \centering
        \includegraphics[width=\textwidth]{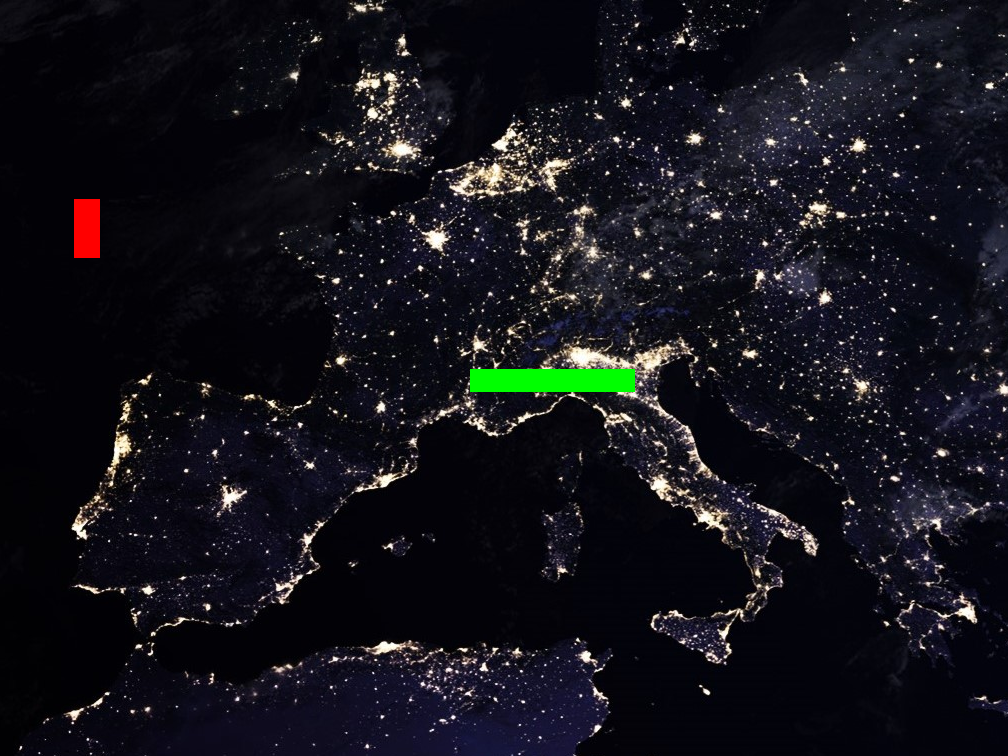} \\
        \caption{Scribbles}
    \end{subfigure}
    \begin{subfigure}[b]{0.3\textwidth}
        \centering
        \includegraphics[width=\textwidth]{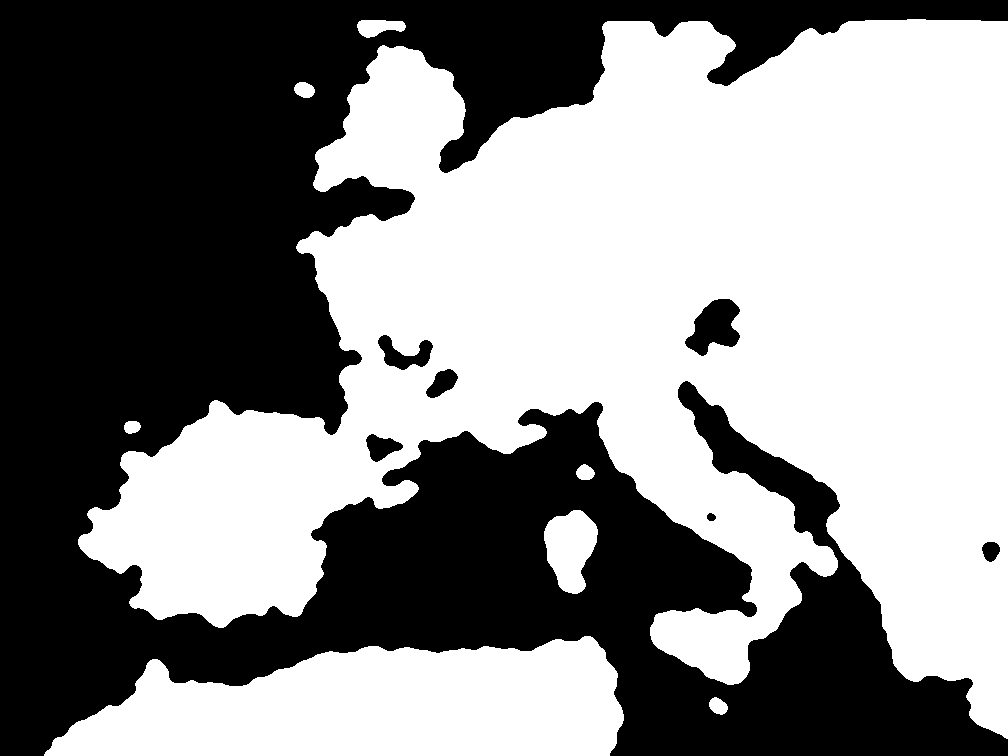} \\
        \caption{Iter}
    \end{subfigure}
    
    \caption{Discontinuous edges. European night light. Image size $1008\times756$.}
    
    \label{fig:europe}
\end{figure}

%=================Neural Network==============
\section{Experimental Results for Learning-Based Image Segmentation} \label{Sec:Network}

This section reports experiment results for the relaxed model in (\ref{eq:Potts_rkhs_loss}) under weak supervision. We analyze training effects and compare predictive performance with existing baselines.

\subsection{Implementation details}\label{sec:network_implementation}

We used ECSSD \cite{shi2015hierarchical}. The dataset contains 1000 semantically meaningful yet structurally complex images for salient-object detection. We manually annotated scribbles for the images. To improve robustness to scribble styles and semantic diversity, we additionally used 600 images from PASCAL VOC 2012 \cite{pascal-voc-2012} with scribble labels from \cite{lin2016scribblesup}.  
The first 200 images of ECSSD \cite{shi2015hierarchical}, also known as CSSD \cite{yan2013hierarchical}, were used as the test set. In total, we used 1400 training images and 200 test images.

We first precomputed all $\mathbf{u}_i$ with Algorithm~\ref{alg:u_binary}. Details follow Section~\ref{sec:u_implementation}. We fixed $R_{i,0}=3$. The parameters $\sigma_{I,i,0}$ and $\sigma_{s,i,0}$ were selected from three combinations:
\begin{itemize}
    \item PASCAL VOC 2012 images: $\sigma_{I,i,0}=0.05$, $\sigma_{s,i,0}=3$;
    \item ECSSD images: $\sigma_{I,i,0}=0.01$, $\sigma_{s,i,0}=1$;
    \item a small subset with easy color-based separation: $\sigma_{I,i,0}=0.01$, $\sigma_{s,i,0}=100$.
\end{itemize}
The computed $\hat{u}_{i,0}$ maps were stored as PNG files and loaded together with $I_i$ during training.

For the proposed energy $\mathcal{E}$, we set $\sigma=0.1$. Gaussian convolution was approximated by \texttt{GaussianBlur}, as in Section~\ref{sec:td_implementation}. We fixed $\lambda=5\mathrm{E}{-5}$. The ablation study for $\lambda$ is given in Appendix~\ref{sec:network_edge_weight}. Evaluation metrics are defined in Appendix~\ref{sec:evaluation_metrics}.

We used UNet \cite{ronneberger2015u} (31M parameters). Data augmentation was applied to inputs. We randomly flipped and rotated each $I_i$--$\hat{u}_{i,0}$ pair, then cropped patches of size $118\times132$. Images $I_i$ were further augmented with random color jitter, grayscale conversion, blur, and uniform noise.  
We used Adam~\cite{kingma2014adam}. All models were trained for 8000 epochs without strong image augmentation, then for another 8000 epochs with augmentation. The initial learning rate was $10^{-4}$ and halved halfway through each stage. Training dynamics are shown in Appendix~\ref{sec:network_training_dynamic}.

All code was implemented in PyTorch \cite{paszke2019pytorch}. Experiments were run on an NVIDIA RTX A6000 (48GB) GPU and an Intel Core i9-10980XE (3.00 GHz) CPU.

\subsection{Results on the Training Images} \label{sec:network_training}

We analyze the training effects through the evaluation scores and the generated images of the networks on the training images. In this sense, the predictions can be regarded as the solutions when jointly optimizing the individual energies via a common operator. The effect of the network and the TD regularization can be easily recognized by comparing the results of 'Thre' vs 'RKHS', and  'RKHS' vs 'RKHS $+$ TD', respectively. We find that training exhibits three major effects: denoising, object edge refinement, and boundary artifact elimination, in comparison to the no-training model. We also find that the TD regularization stabilizes the diffusion caused by the network.

\begin{table}[!h]
    \centering
    \begin{tabular}{l|c|ccc}
        \hline
        Methods & $\lambda$ & \hspace{2em} mIoU \hspace{2em} & \hspace{2em} mDice \hspace{2em} & \hspace{2em} mAcc \hspace{2em} \\
        \hline 
        Thre & (no training) & 81.56 & 89.65 & 91.72 \\
        RKHS & 0 & 82.04 (+0.48) & 89.96 (+0.31) & 92.21 (+0.49) \\
        RKHS $+$ TD & 5e-5 & 85.11 (+3.55) & 91.82 (+2.17) & 93.07 (+1.35) \\
        \hline
    \end{tabular}
    \caption{Evaluation scores of the network on the training set. 'Thre': the collection of $\hat{u}_0$ threshold, 'RKHS': results by the network trained with the fidelity term of (\ref{eq:Potts_rkhs_loss}) only, 'RKHS $+$ TD': that of trained via our full loss (\ref{eq:Potts_rkhs_loss}).}
    \label{tab:network_train_notrain}
\end{table}

\subsubsection{With/without training}

Table~\ref{tab:network_train_notrain} shows the gains from training. RKHS improves mIoU/mDice/mAcc by $+0.48/+0.31/+0.49$, and RKHS + TD improves by $+3.55/+2.17/+1.35$. In viewing the generated images, we summarize three phenomena.

\paragraph{Denoising}

This effect is common in homogeneous images or when kernel parameters for $u_0$ are suboptimal. Although the objects are visible in $u_0$, boundaries are not sharp, and many background pixels remain near $0.5$. Thresholding at $0.5$ can produce salt-and-pepper noise or edge leakage; as illustrated in Figure~\ref{fig:train_effect_denoise}(b)(c).  
Even with a large $\lambda$, solving (\ref{eq:main_model}) reduces but does not fully remove these noises, and may oversmooth details (see Figure~\ref{fig:train_effect_denoise}(d)).

\begin{figure}[!ht]
    \centering
    \begin{subfigure}[b]{0.15\textwidth}
        \centering
        \includegraphics[width=\textwidth]{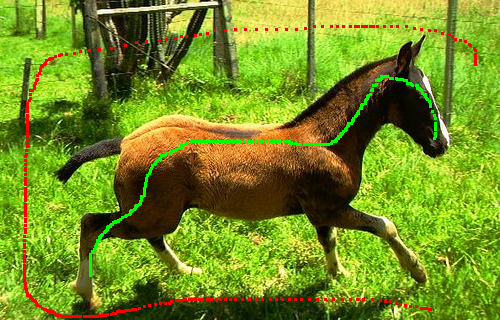} \\
        \includegraphics[width=\textwidth]{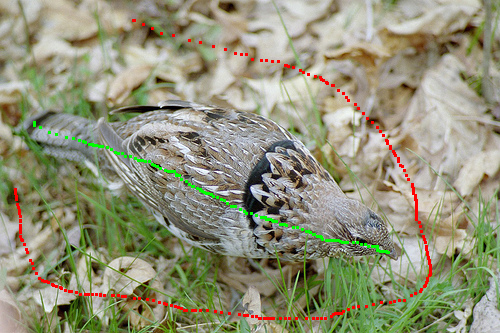} \\
        \caption{\\Input}
    \end{subfigure}
    \begin{subfigure}[b]{0.15\textwidth}
        \centering
        \includegraphics[width=\textwidth]{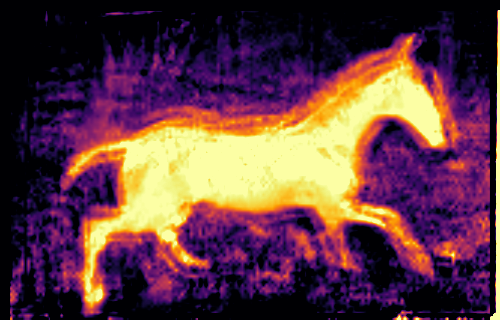} \\
        \includegraphics[width=\textwidth]{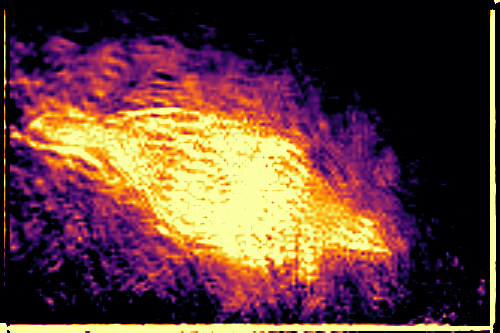} \\
        \caption{\\Proj}
    \end{subfigure}
    \begin{subfigure}[b]{0.15\textwidth}
        \centering
        \includegraphics[width=\textwidth]{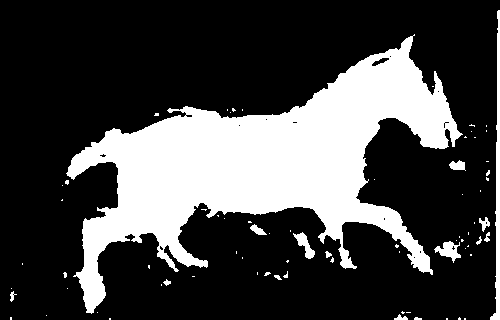} \\
        \includegraphics[width=\textwidth]{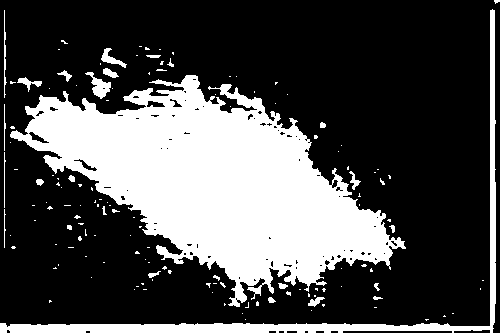} \\
        \caption{\\Thre}
    \end{subfigure}
        \begin{subfigure}[b]{0.15\textwidth}
        \centering
        \includegraphics[width=\textwidth]{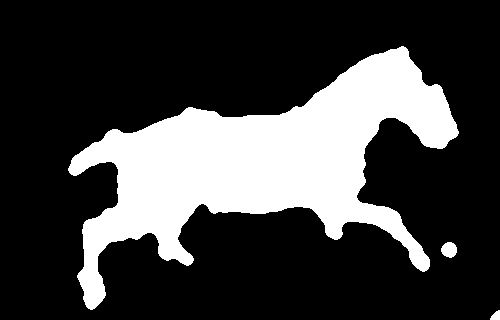} \\
        \includegraphics[width=\textwidth]{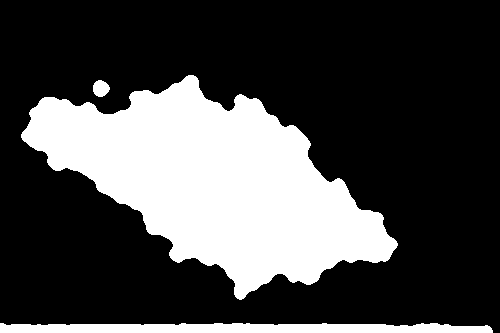} \\
        \caption{\\Iter}
    \end{subfigure}
        \begin{subfigure}[b]{0.15\textwidth}
        \centering
        \includegraphics[width=\textwidth]{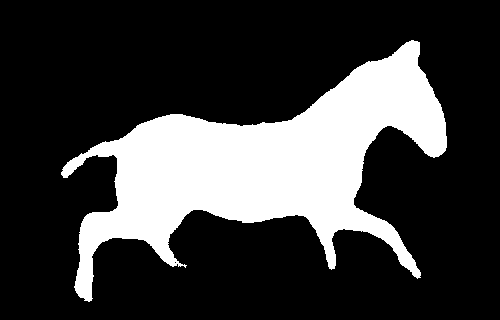} \\
        \includegraphics[width=\textwidth]{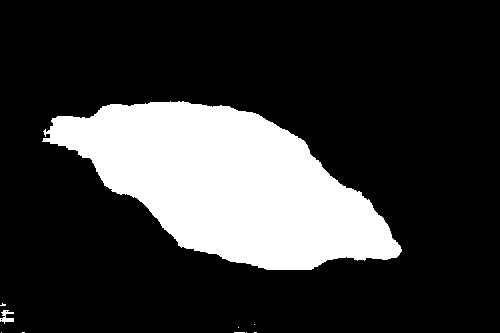} \\
        \caption{\\RKHS}
    \end{subfigure}
        \begin{subfigure}[b]{0.15\textwidth}
        \centering
        \includegraphics[width=\textwidth]{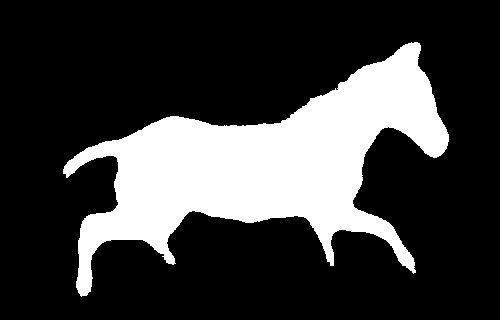} \\
        \includegraphics[width=\textwidth]{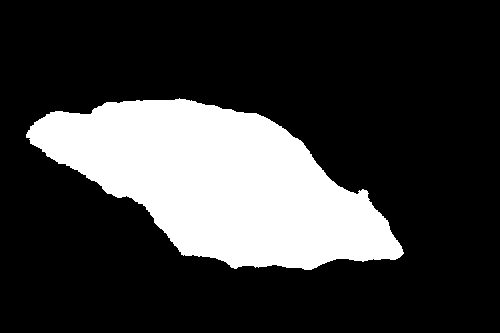} \\
        \caption{RKHS $+$ TD}
    \end{subfigure}
    
    \caption{Training effect: denoising.}
    
    \label{fig:train_effect_denoise}
\end{figure}

Training suppresses these noises effectively. As illustrated in Figure~\ref{fig:train_effect_denoise}(e)(f), the network seems to learn to threshold at a proper level, thereby having cleaner and sharper boundaries in the prediction. For example, we find it matches the threshold of 0.7 for the pigeon image. This behavior supports edge refinement.

\paragraph{Object-edge refinement}

Edges may be poorly preserved in $u_0$. In Figure~\ref{fig:train_effect_refine}, sheep legs are blurred because shadow regions are misclassified, and the parrot suffers erosion due to an overlarge $\sigma_{s,0}$. The standard model \eqref{eq:main_model_binary} has limited correction when initial edges are already smooth.

\begin{figure}[!ht]
    \centering
    \begin{subfigure}[b]{0.15\textwidth}
        \centering
        \includegraphics[width=\textwidth]{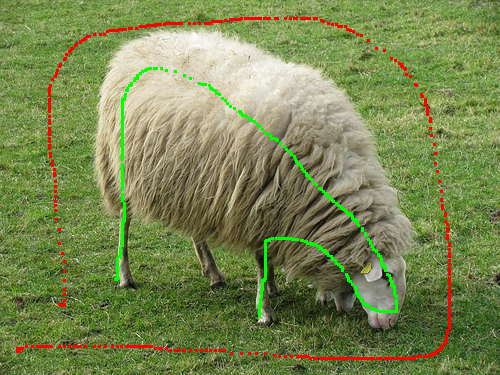} \\
        \includegraphics[width=\textwidth]{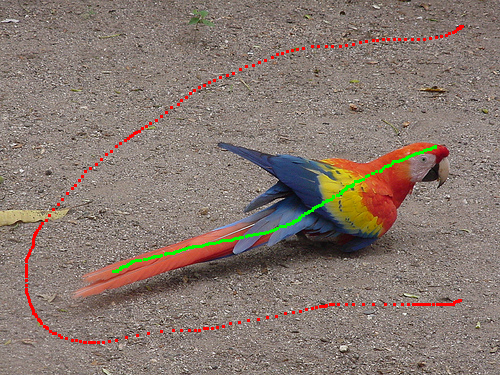} \\
        \caption{\\Input}
    \end{subfigure}
    \begin{subfigure}[b]{0.15\textwidth}
        \centering
        \includegraphics[width=\textwidth]{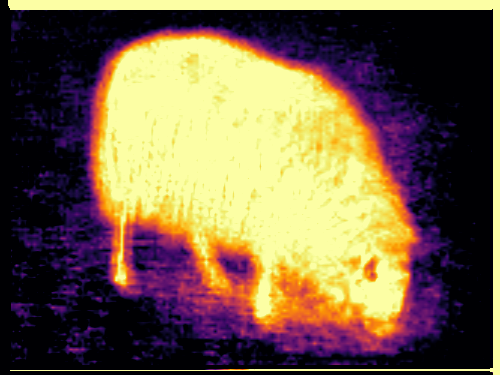} \\
        \includegraphics[width=\textwidth]{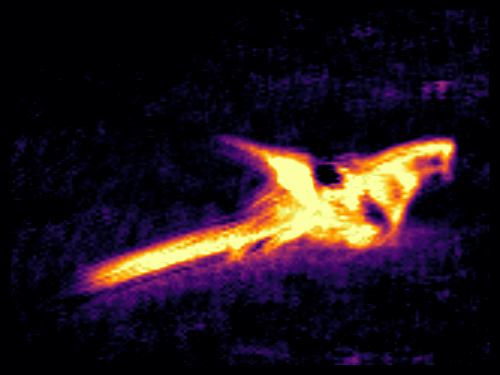} \\
        \caption{\\Proj}
    \end{subfigure}
    \begin{subfigure}[b]{0.15\textwidth}
        \centering
        \includegraphics[width=\textwidth]{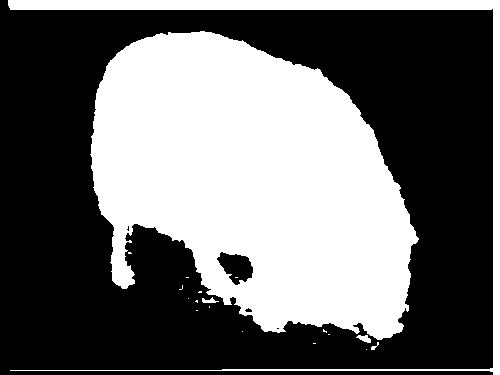} \\
        \includegraphics[width=\textwidth]{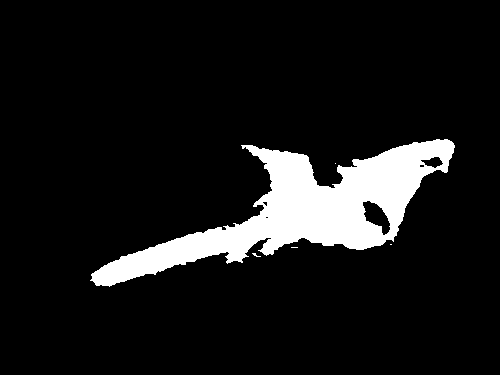} \\
        \caption{\\Thre}
    \end{subfigure}
        \begin{subfigure}[b]{0.15\textwidth}
        \centering
        \includegraphics[width=\textwidth]{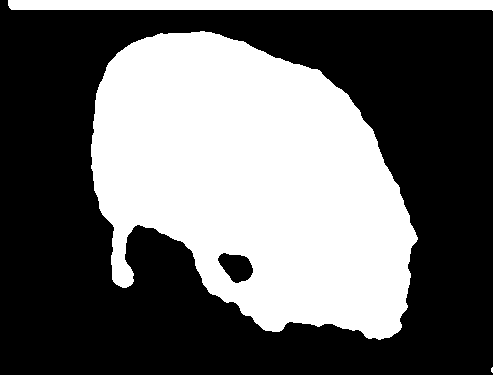} \\
        \includegraphics[width=\textwidth]{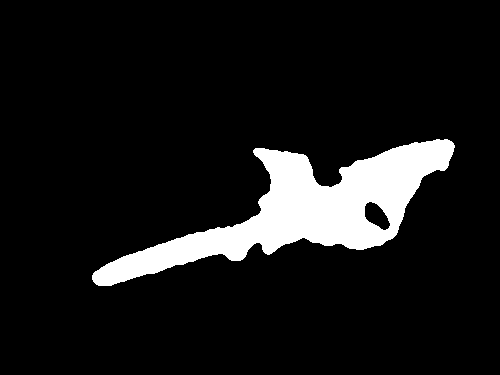} \\
        \caption{\\Iter}
    \end{subfigure}
        \begin{subfigure}[b]{0.15\textwidth}
        \centering
        \includegraphics[width=\textwidth]{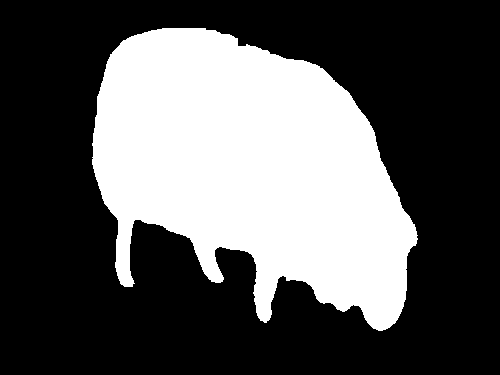} \\
        \includegraphics[width=\textwidth]{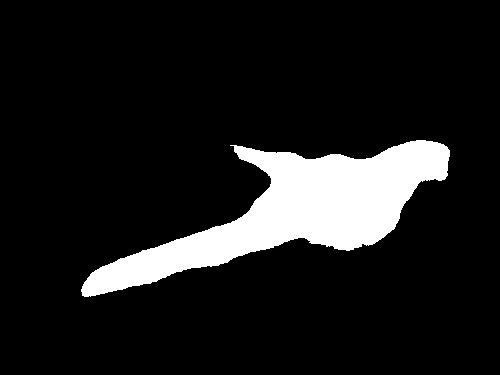} \\
        \caption{\\RKHS}
    \end{subfigure}
        \begin{subfigure}[b]{0.15\textwidth}
        \centering
        \includegraphics[width=\textwidth]{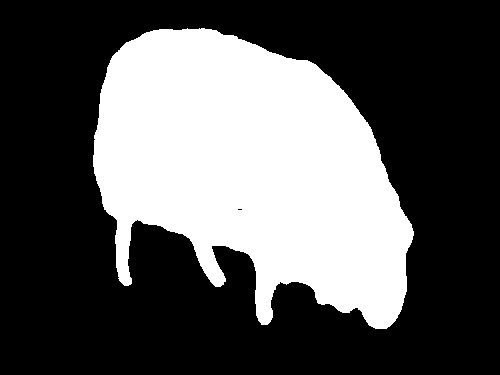} \\
        \includegraphics[width=\textwidth]{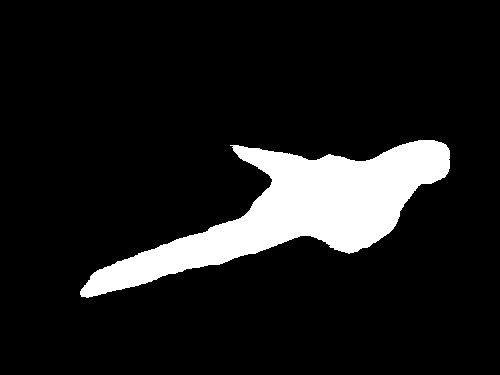} \\
        \caption{RKHS $+$ TD}
    \end{subfigure}
    
    \caption{Training effect: object-edge refinement.}
    
    \label{fig:train_effect_refine}
\end{figure}

After training, both RKHS and RKHS + TD align predictions better with object boundaries. In the sheep image, leg boundaries are recovered; in the parrot image, missing parts are reconstructed. This demonstrates that the networks have a deep understanding of what and where the foreground is. 

\paragraph{Boundary-artifact elimination}

Boundary artifacts appear irregularly in $u_0$, often as straight segments. Their cause can be the zero boundary condition, misleading the kernel value. Examples are shown in Figure~\ref{fig:train_effect_boundary}. The standard model \eqref{eq:main_model_binary} reduces but may not eliminate them, even for large $\lambda$. In contrast, both trained models remove these artifacts.

\begin{figure}[!ht]
    \centering
    \begin{subfigure}[b]{0.15\textwidth}
        \centering
        \includegraphics[width=\textwidth]{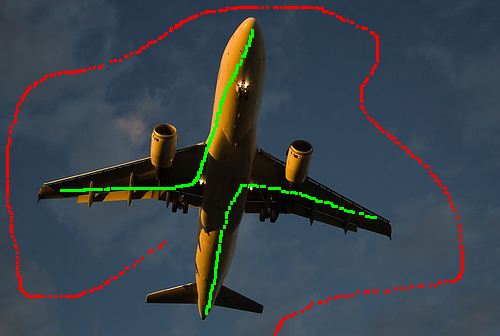} \\
        \includegraphics[width=\textwidth]{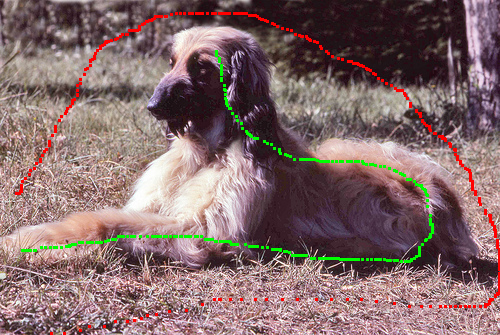} \\
        \caption{\\Input}
    \end{subfigure}
    \begin{subfigure}[b]{0.15\textwidth}
        \centering
        \includegraphics[width=\textwidth]{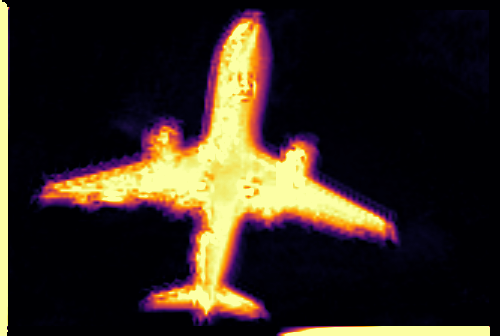} \\
        \includegraphics[width=\textwidth]{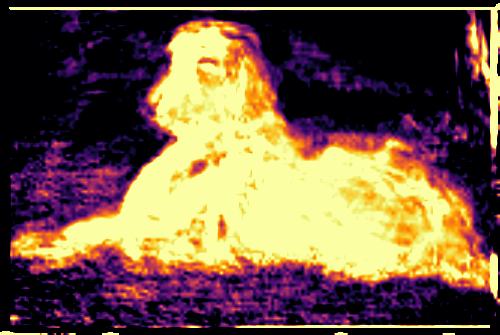} \\
        \caption{\\Proj}
    \end{subfigure}
    \begin{subfigure}[b]{0.15\textwidth}
        \centering
        \includegraphics[width=\textwidth]{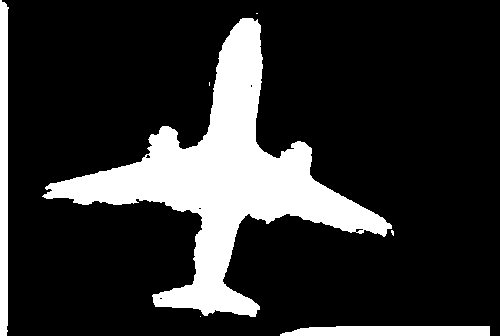} \\
        \includegraphics[width=\textwidth]{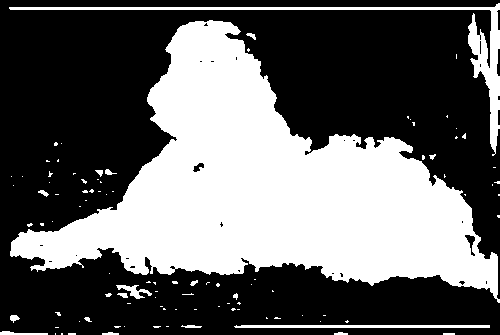} \\
        \caption{\\Thre}
    \end{subfigure}
        \begin{subfigure}[b]{0.15\textwidth}
        \centering
        \includegraphics[width=\textwidth]{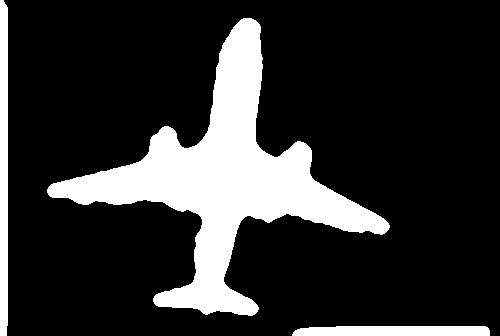} \\
        \includegraphics[width=\textwidth]{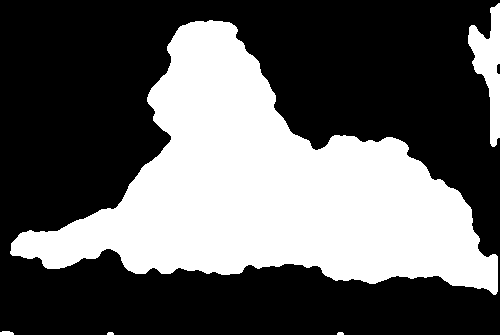} \\
        \caption{\\Iter}
    \end{subfigure}
        \begin{subfigure}[b]{0.15\textwidth}
        \centering
        \includegraphics[width=\textwidth]{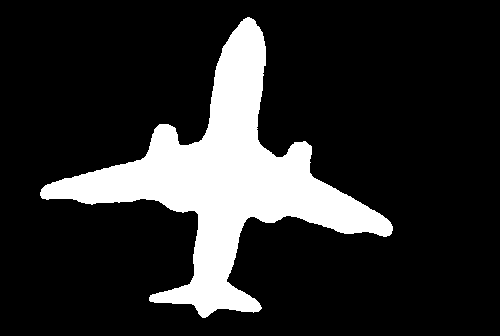} \\
        \includegraphics[width=\textwidth]{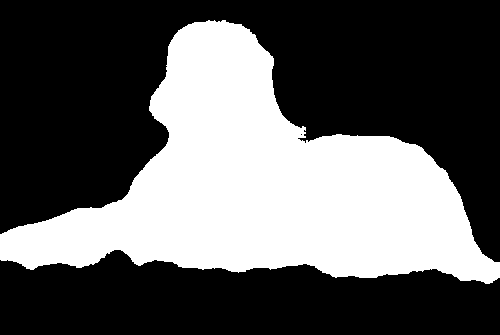} \\
        \caption{\\RKHS}
    \end{subfigure}
        \begin{subfigure}[b]{0.15\textwidth}
        \centering
        \includegraphics[width=\textwidth]{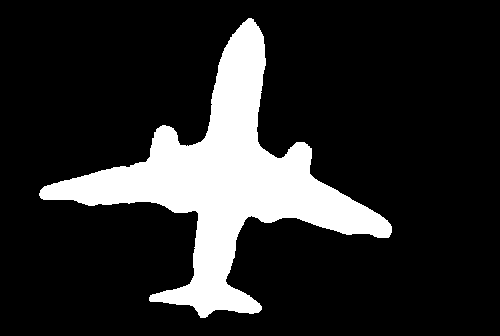} \\
        \includegraphics[width=\textwidth]{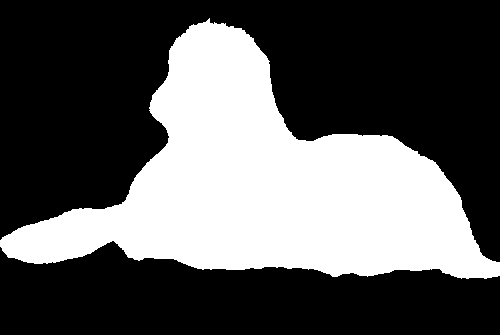} \\
        \caption{RKHS $+$ TD}
    \end{subfigure}
    
    \caption{Training effect: boundary-artifact elimination.}
    
    \label{fig:train_effect_boundary}
\end{figure}

In practice, these three effects often appear simultaneously. Figure~\ref{fig:training_difficult} shows several difficult cases. The iterative model improves pre-segmentation partially, while trained models recover object structure more completely. RKHS + TD gives the most stable results in severe cases. 

In the $\hat{u}_0$ of the first image, the edges of the horn are split. The thickets are misclassified as foreground. The body of the bull is corrupted by noise. Our standard model (\ref{eq:main_model_binary}) can only smooth part of the split edges, while the trained models regenerate the target shape, demonstrating both the denoising and object edge refinement effects. The second image has a much worse pre-segmentation using only $\hat{u}_0$. The salt and pepper noises caused by the houses and the large artifact produced from the shimmering water destroyed the target edges. The $\hat{v}_0$ is still bad, but 'RKHS' and 'RKHS $+$ TD' can restore the target. This again suggests that training can denoise and refine object edges even in more severe cases. For the third image, the $\hat{u}_0$ suffers from heavy salt and pepper noise caused by the inhomogeneous distributions, and thin boundary artifacts that appear at the bottom, top, and right boundary. These noises are damped in $v_0$, but the segmentation is not ideal. The 'RKHS' gives a more compact result, but it also generates new artifacts that may be caused by the background components. The problem is solved in 'RKHS $+$ TD'. A compact but more accurate result is provided.

\begin{figure}[!ht]
    \centering
    \begin{subfigure}[b]{0.15\textwidth}
        \centering
        \includegraphics[width=\textwidth]{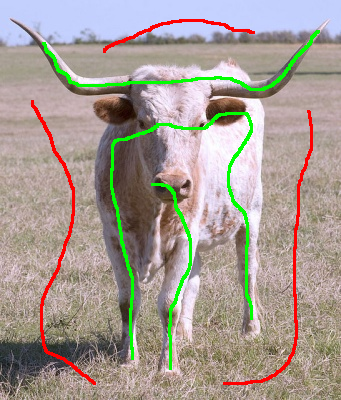} \\
        \includegraphics[width=\textwidth]{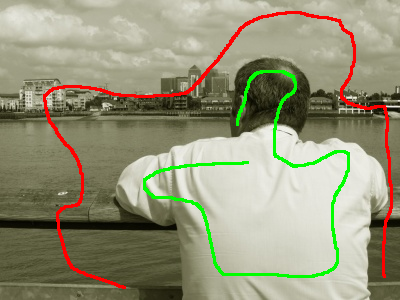} \\
        \includegraphics[width=\textwidth]{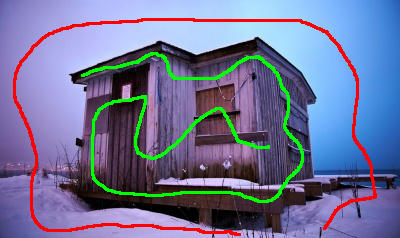} \\
        \caption{\\Input}
    \end{subfigure}
    \begin{subfigure}[b]{0.15\textwidth}
        \centering
        \includegraphics[width=\textwidth]{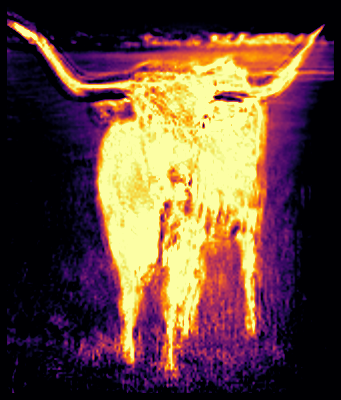} \\
        \includegraphics[width=\textwidth]{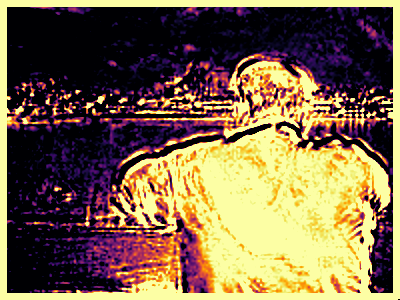} \\
        \includegraphics[width=\textwidth]{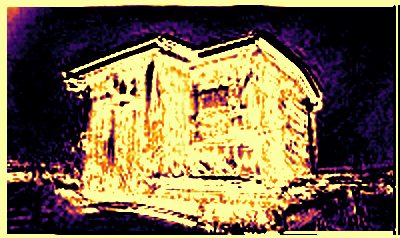} \\
        \caption{\\Proj}
    \end{subfigure}
    \begin{subfigure}[b]{0.15\textwidth}
        \centering
        \includegraphics[width=\textwidth]{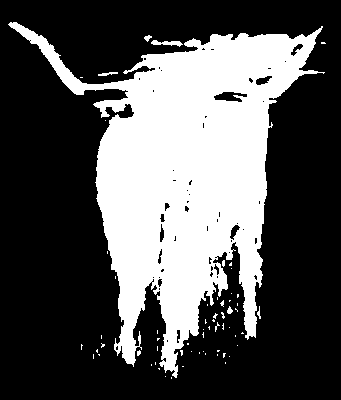} \\
        \includegraphics[width=\textwidth]{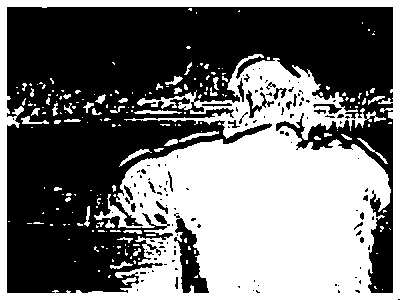} \\
        \includegraphics[width=\textwidth]{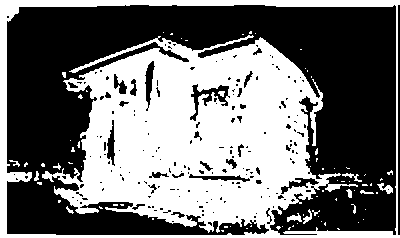} \\
        \caption{\\Thre}
    \end{subfigure}
        \begin{subfigure}[b]{0.15\textwidth}
        \centering
        \includegraphics[width=\textwidth]{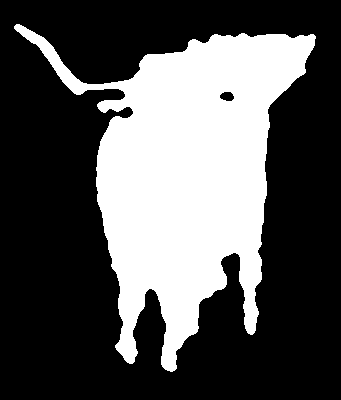} \\
        \includegraphics[width=\textwidth]{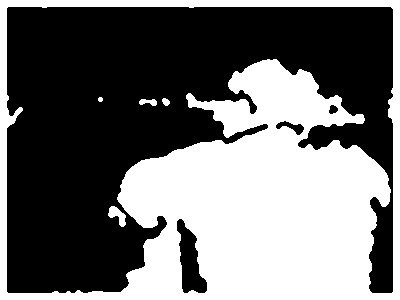} \\
        \includegraphics[width=\textwidth]{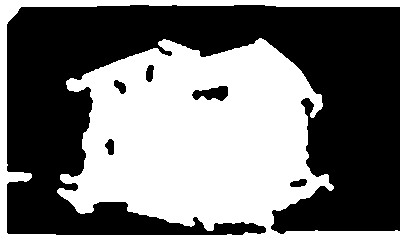} \\
        \caption{\\Iter}
    \end{subfigure}
        \begin{subfigure}[b]{0.15\textwidth}
        \centering
        \includegraphics[width=\textwidth]{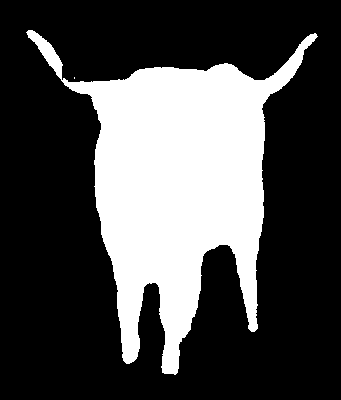} \\
        \includegraphics[width=\textwidth]{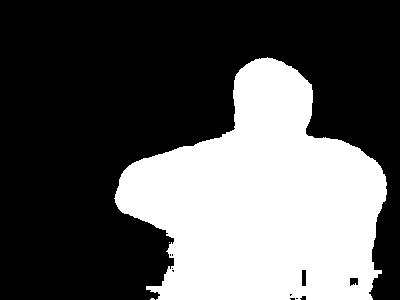} \\
        \includegraphics[width=\textwidth]{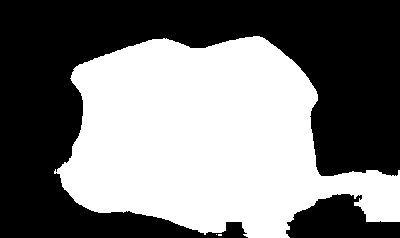} \\
        \caption{\\RKHS}
    \end{subfigure}
        \begin{subfigure}[b]{0.15\textwidth}
        \centering
        \includegraphics[width=\textwidth]{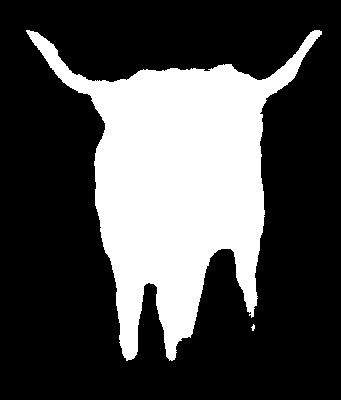} \\
        \includegraphics[width=\textwidth]{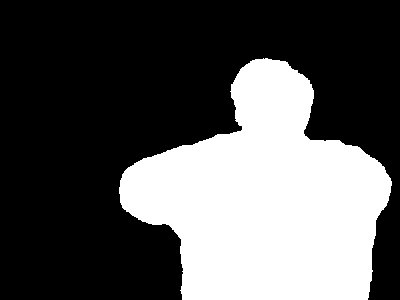} \\
        \includegraphics[width=\textwidth]{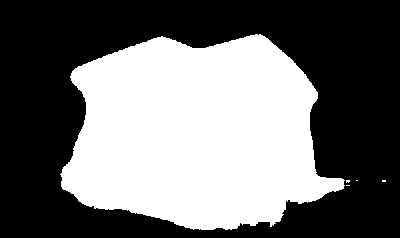} \\
        \caption{RKHS $+$ TD}
    \end{subfigure}
    
    \caption{Training effects in difficult cases.}
    
    \label{fig:training_difficult}
\end{figure}

\subsubsection{Effect of TD regularization during training}

The last case in figure~\ref{fig:training_difficult} may reveal the reason why there is a much more significant gain in evaluation scores, as stated in table~\ref{tab:network_train_notrain}, that the network trained by 'RKHS' may be misclassifying other components as the foreground. Some more instances are illustrated in figure~\ref{fig:training_regularization}. The network with 'RKHS' interprets the branches and rails to be part of the foreground, although they are not misclassified in $u_0$. This may be because the networks learn many other examples that are correct to do so. Notably, these artifacts significantly increase the total perimeter length. The additional TD term in 'RKHS $+$ TD' will penalize more heavily on these false positives (over-segmentation), which can drive the predictions to stop at the smallest objects that are most alike by $u_0$. Hence, the additional TD regularization can stabilize training by damping artifacts and achieve a higher evaluation score.

\begin{figure}[!ht]
    \centering
    \begin{subfigure}[b]{0.15\textwidth}
        \centering
        \includegraphics[width=\textwidth]{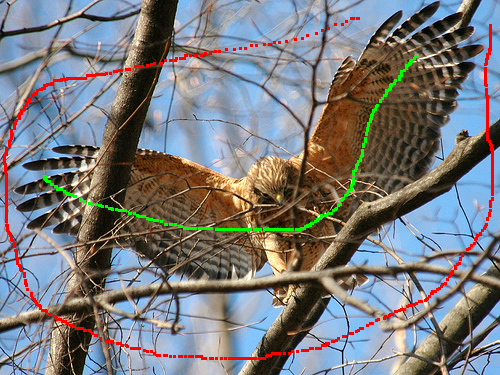} \\
        \includegraphics[width=\textwidth]{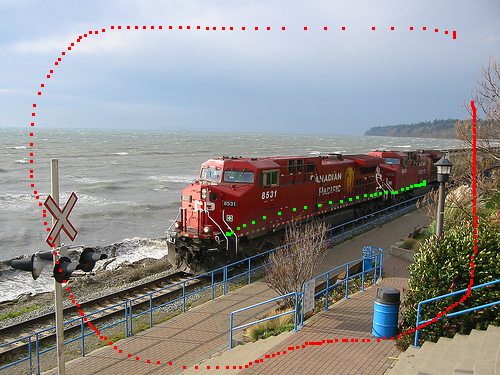} \\
        \caption{\\Input}
    \end{subfigure}
    \begin{subfigure}[b]{0.15\textwidth}
        \centering
        \includegraphics[width=\textwidth]{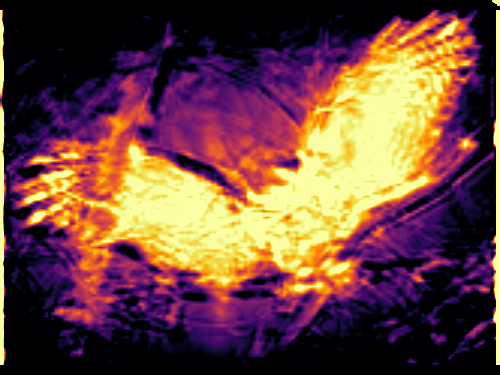} \\
        \includegraphics[width=\textwidth]{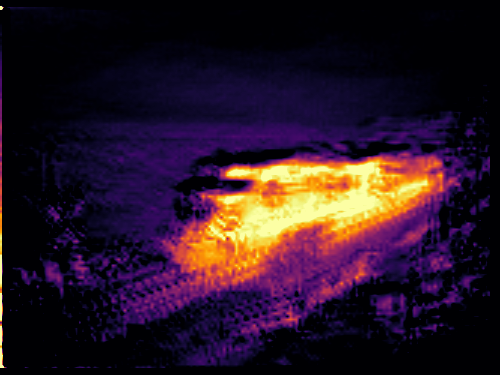} \\
        \caption{\\Proj}
    \end{subfigure}
    \begin{subfigure}[b]{0.15\textwidth}
        \centering
        \includegraphics[width=\textwidth]{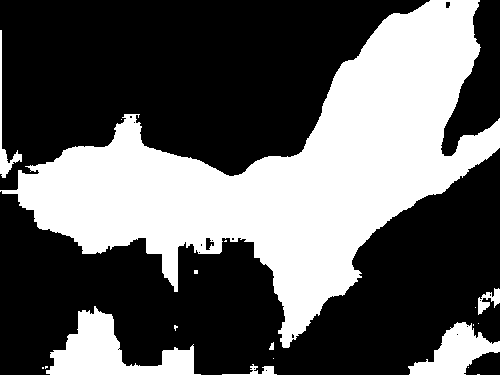} \\
        \includegraphics[width=\textwidth]{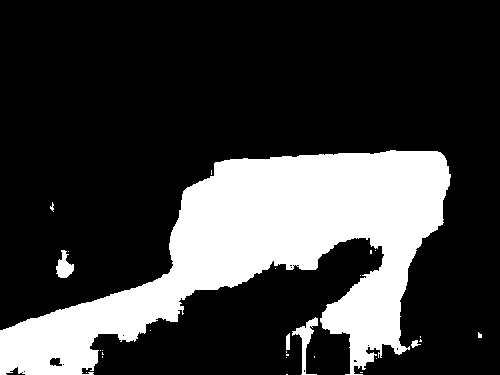} \\
        \caption{\\RKHS}
    \end{subfigure}
    \begin{subfigure}[b]{0.15\textwidth}
        \centering
        \includegraphics[width=\textwidth]{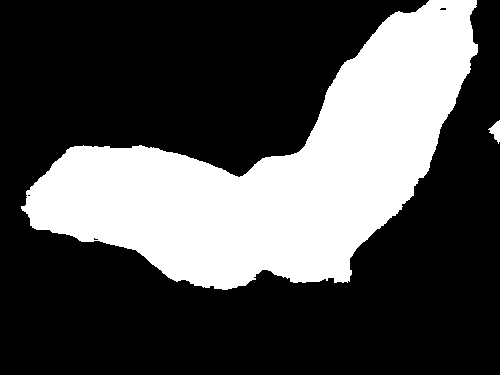} \\
        \includegraphics[width=\textwidth]{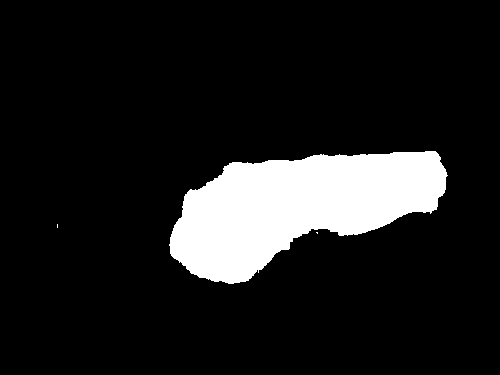} \\
        \caption{RKHS $+$ TD}
    \end{subfigure}
    
    \caption{TD regularization stabilizes training.}
    
    \label{fig:training_regularization}
\end{figure}

\subsection{Results on the Test Images}

We evaluate predictive performance against fully supervised and weakly supervised baselines. All models follow Section~\ref{sec:network_implementation}, but use different losses. Loss details are in Appendix~\ref{sec:network_energy_compare}.

\subsubsection{Evaluation scores}
Table~\ref{tab:network_all} lists the scores of the methods across different evaluation metrics. 'RKHS' denotes our loss function (\ref{eq:Potts_rkhs_loss}) that contains only the data-fidelity term. Comparing with the partial cross-entropy (PCE) baselines, our model has a significant gain in all three metrics, for $+2.98$, $+2.09$, and $+1.85$, respectively. Indeed, the performances do not lag far behind the fully-supervised setting ('CE'), with a difference in score for $-3.23$, $-2.13$, and $-0.51$, respectively, even though we are training with sparse annotations. This suggests that our model can provide good regularization to guide the training.

Our models also outperform other weakly-supervised methods. All these methods are based on 'PCE' plus regularization. They make a gain of at most $+1.54$, $+1.07$, $+1.61$ among the three evaluation metrics when comparing to vanilla 'PCE'. However, none of them outperforms our 'RKHS'. It may be because they were limited by the capability of 'PCE' in this experiment, where our 'RKHS', which used a completely different data-fidelity term, portrays the objects in a more effective way.

The TD regularization also raises the evaluation scores. Comparing 'PCE' and 'PCE $+$ TD', the additional TD regularization boosts the scores with $+0.73$, $+0.54$, $+1.01$, respectively. While for 'RKHS', it gives a marginal gain of $+0.1$, $+0.03$ on mIoU and mDice. Both effects are less significant than we have observed in training.
% {\color{red} One possible reason is the unbalanced distribution of the training and testing data, which may cause the TD regularization to be more effective on the training set than on the test set. The training set contains images from both ECSSD \cite{shi2015hierarchical} and PASCAL VOC 12 \cite{pascal-voc-2012} datasets, but the test set is chosen as the CSSD \cite{yan2013hierarchical}, which is a subset of the ECSSD. Even though images from two datasets are very close in content, this splitting may still cause a slight difference in the data distribution. Besides, the fixed TD strength $\lambda$ may not be optimal for all images, and it may over-smooth some test images while under-smoothing some training images. Moreover, in the test set, the baseline RKHS predictions are already reasonably good and very close to the 'supervised' case, leaving less room for TD to help. Even though the gain in metrics is small, we still observe many visual improvements as described in Section 5.3.2, indicating the TD term is still very helpful.}
The mild gain against 'RKHS' is because the benefit of TD regularization is most effective when the baseline RKHS prediction already captures the main part of the object, and the remaining errors are primarily false positives. However, if RKHS predictions are weak and contain many false negatives (under-segmentation), the same TD term can remove true object regions, thereby introducing more false negatives. An example can be found in Appendix~\ref{sec:network_td_undersegment}. During training, since the kernel parameters are chosen based on the distribution of the training images, the 'RKHS' results are already strong (over 80 in mIoU). Hence, the TD term can further improve performance. While in the testing stage, the network has not seen the test images. The predictions of the RKHS are weaker and even far from the targets. Although the TD regularization can still denoise the object boundary, it may introduce more false negatives. This explains why the improvement from `RKHS' to `RKHS $+$ TD' is modest compared with the gain on training sets. Notably, mIoU or mDice penalize false positives (over-segmentation) more heavily than mAcc, and mAcc is more sensitive to false negatives (under-segmentation) than mIoU or mDice. Hence, the gain in mAcc is modest compared with the mIOU in training, as reflected by Table~\ref{tab:network_train_notrain}, and even decreases by $-1.23$ in the testing set.
% The mild gain against 'RKHS' may be due to over-denoising in certain prediction cases, as we have seen in training that 'RKHS' $+$ 'TD' will prefer smaller objects. This can be the reason why it lowers the mAcc by $-1.23$. mAcc is a more sensitive metric. A shift in edge elements can change both the ratio of true positive (TP) and true negative (TN). If the prediction of objects is slightly smaller, the mAcc dropped more significantly than mIoU and mDice. On average, it can cause harm. 
We also note that 'CE $+$ TD' also has a slight drop in mIoU and mDice when compared to 'CE', for at most $0.1$. This can be because the additional TD regularization over-smooths the objects, as 'CE' already well recovered the object shapes. While for 'PCE' and 'RKHS', since we are weakly supervised, the TD term can still provide additional, useful regularization.

\begin{table}[!h]
    \centering
    \begin{tabular}{l|c|ccc}
        \hline
        Methods & $\lambda$ & \hspace{1em} mIoU \hspace{1em} & \hspace{1em} mDice  \hspace{1em} & \hspace{1em} mAcc \hspace{1em} \\
        \hline 
        PCE & 0 & 72.41 & 83.43 & 86.76 \\
        \textbf{RKHS (ours)} & 0 & \color{cyan}{75.39} & \color{cyan}{85.52} & \color{red}{88.61} \\
        \hline
        PCE $+$ TD & 5e-5 & 73.14 & 83.97 & 87.77 \\
        PCE $+$ NCut & 5e-5 & 72.22 & 83.35	& 87.73 \\
        PCE $+$ NCut $+$ KCut & 5e-5 & 73.95 & 84.50 & 87.24 \\
        PCE $+$ CV & 1e-1 & 73.70 & 84.38 & \color{cyan}{88.37} \\
        & 1 & \color{red}{75.76} & \color{red}{85.76} & \color{blue}{88.45} \\
        \textbf{RKHS $+$ TD (ours)} & 5e-5 & \color{blue}{75.49} & \color{blue}{85.55} & 87.38 \\
        \hline
        CE & 0 & 78.62 & 87.65 & 89.12 \\
        CE $+$ TD & 5e-5 & 78.52 & 87.59 & 89.29 \\
        \hline
    \end{tabular}
    \caption{Comparison with variational-loss baselines. Red/blue/cyan indicate first/second/third best, respectively.}
    \label{tab:network_all}
\end{table}

\subsubsection{Visual results}

Figures~\ref{fig:pred_bears}--\ref{fig:pred_flowers} show qualitative comparisons. More examples are provided in Appendix~\ref{sec:network_pred_figs_appendix}.  
Methods sharing the same fidelity term show similar behavior. PCE-based methods often produce thicker foreground masks, while RKHS-based methods are typically closer to object boundaries, suggesting finer foreground structure.

\begin{figure}[!h]
    \centering
    \addtolength{\tabcolsep}{-5pt}
    \begin{tabular}{ccccc}
        \begin{subfigure}[t]{0.15\textwidth}
            \setcounter{subfigure}{0}
            \centering
            \includegraphics[width=\textwidth]{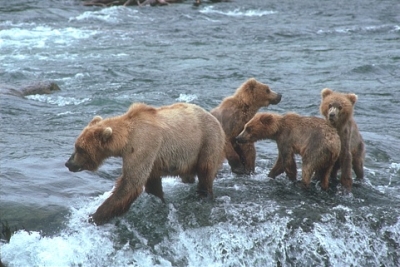}
            \caption{\\Image}
        \end{subfigure}
        &
        \begin{subfigure}[t]{0.15\textwidth}
            \setcounter{subfigure}{1}
            \centering
            \includegraphics[width=\textwidth]{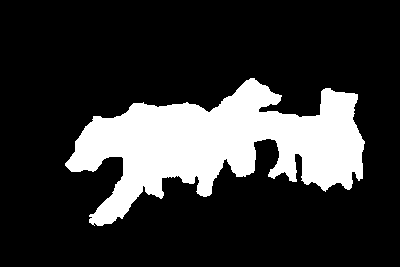}
            \caption{\\Mask}
        \end{subfigure}
        &
        \begin{subfigure}[t]{0.15\textwidth}
            \setcounter{subfigure}{2}
            \centering
            \includegraphics[width=\textwidth]{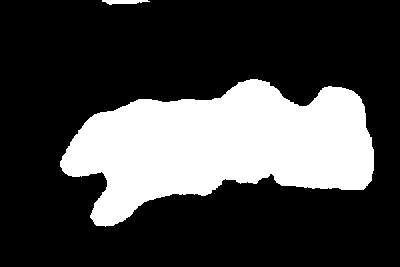}
            \caption{\\PCE}
        \end{subfigure}
        &
        \begin{subfigure}[t]{0.15\textwidth}
            \setcounter{subfigure}{3}
            \centering
            \includegraphics[width=\textwidth]{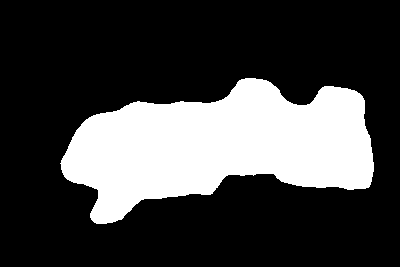}
            \caption{\\PCE+TD}
        \end{subfigure}
        &
        \begin{subfigure}[t]{0.15\textwidth}
            \setcounter{subfigure}{4}
            \centering
            \includegraphics[width=\textwidth]{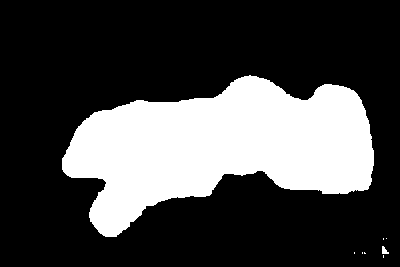}
            \caption{\\PCE+NCut}
        \end{subfigure}
        \\[0.5em]
        \multicolumn{5}{c}{
            \begin{tabular}{cccc}
                \begin{subfigure}[t]{0.15\textwidth}
                    \setcounter{subfigure}{5}
                    \centering
                    \includegraphics[width=\textwidth]{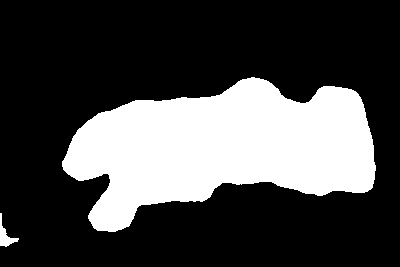}
                    \caption{PCE\\+NCut+KCut}
                \end{subfigure}
                &
                \begin{subfigure}[t]{0.15\textwidth}
                    \setcounter{subfigure}{6}
                    \centering
                    \includegraphics[width=\textwidth]{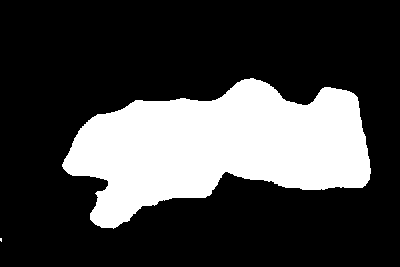}
                    \caption{\\PCE+CV}
                \end{subfigure}
                &
                \begin{subfigure}[t]{0.15\textwidth}
                    \setcounter{subfigure}{7}
                    \centering
                    \includegraphics[width=\textwidth]{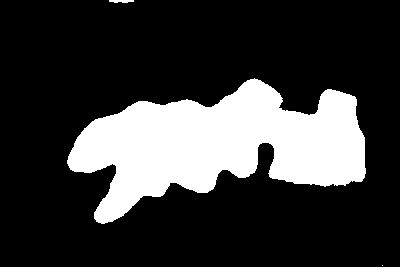}
                    \caption{\\\textbf{RKHS}}
                \end{subfigure}
                &
                \begin{subfigure}[t]{0.15\textwidth}
                    \setcounter{subfigure}{8}
                    \centering
                    \includegraphics[width=\textwidth]{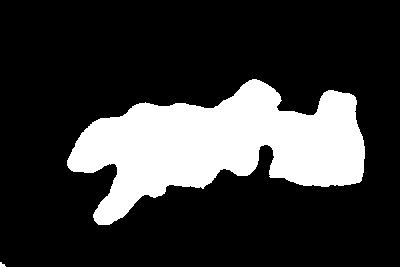}
                    \caption{\\\textbf{RKHS+TD}}
                \end{subfigure}
            \end{tabular}
        }
    \end{tabular}
    \addtolength{\tabcolsep}{5pt}
    \caption{Bears}
    \label{fig:pred_bears}
\end{figure}

Figure~\ref{fig:pred_lemur} shows another example. PCE predicts large boundary artifacts caused by dark background regions. PCE-based regularized variants reduce but do not fully remove them. RKHS reduces artifacts substantially, and RKHS + TD removes them in this case.

\begin{figure}[!h]
    \centering
    \addtolength{\tabcolsep}{-5pt}
    \begin{tabular}{ccccc}
        \begin{subfigure}[t]{0.15\textwidth}
            \setcounter{subfigure}{0}
            \centering
            \includegraphics[width=\textwidth]{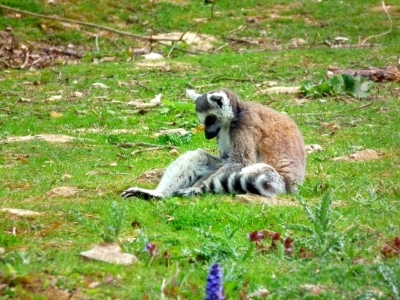}
            \caption{\\Image}
        \end{subfigure}
        &
        \begin{subfigure}[t]{0.15\textwidth}
            \setcounter{subfigure}{1}
            \centering
            \includegraphics[width=\textwidth]{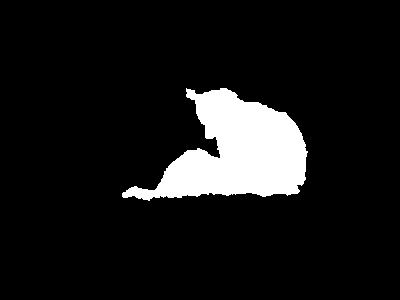}
            \caption{\\Mask}
        \end{subfigure}
        &
        \begin{subfigure}[t]{0.15\textwidth}
            \setcounter{subfigure}{2}
            \centering
            \includegraphics[width=\textwidth]{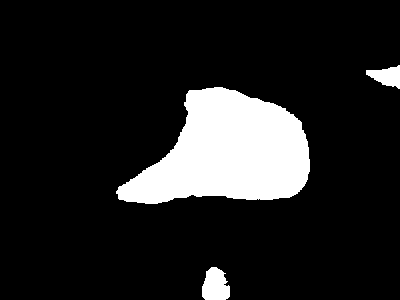}
            \caption{\\PCE}
        \end{subfigure}
        &
        \begin{subfigure}[t]{0.15\textwidth}
            \setcounter{subfigure}{3}
            \centering
            \includegraphics[width=\textwidth]{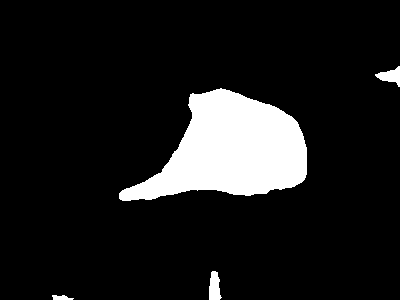}
            \caption{\\PCE+TD}
        \end{subfigure}
        &
        \begin{subfigure}[t]{0.15\textwidth}
            \setcounter{subfigure}{4}
            \centering
            \includegraphics[width=\textwidth]{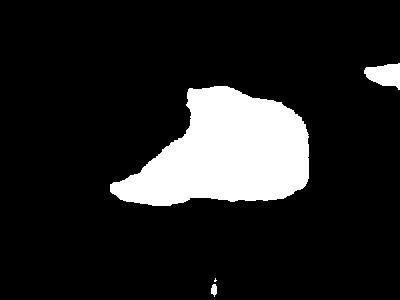}
            \caption{\\PCE+NCut}
        \end{subfigure}
        \\[0.5em]
        \multicolumn{5}{c}{
            \begin{tabular}{cccc}
                \begin{subfigure}[t]{0.15\textwidth}
                    \setcounter{subfigure}{5}
                    \centering
                    \includegraphics[width=\textwidth]{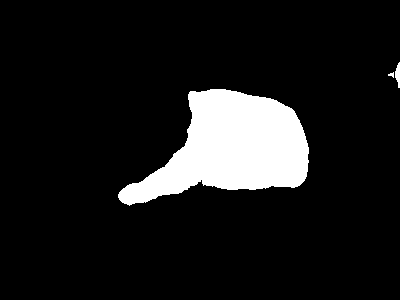}
                    \caption{PCE\\+NCut+KCut}
                \end{subfigure}
                &
                \begin{subfigure}[t]{0.15\textwidth}
                    \setcounter{subfigure}{6}
                    \centering
                    \includegraphics[width=\textwidth]{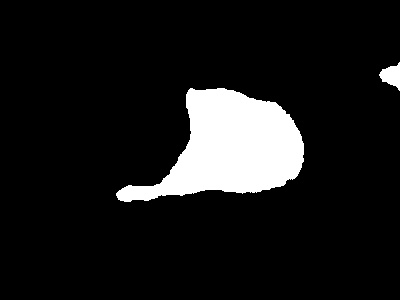}
                    \caption{\\PCE+CV}
                \end{subfigure}
                &
                \begin{subfigure}[t]{0.15\textwidth}
                    \setcounter{subfigure}{7}
                    \centering
                    \includegraphics[width=\textwidth]{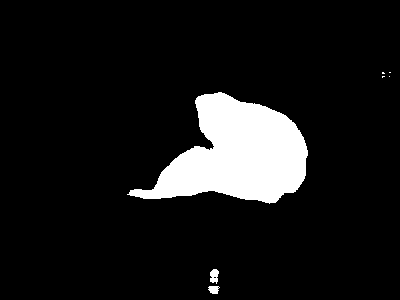}
                    \caption{\\\textbf{RKHS}}
                \end{subfigure}
                &
                \begin{subfigure}[t]{0.15\textwidth}
                    \setcounter{subfigure}{8}
                    \centering
                    \includegraphics[width=\textwidth]{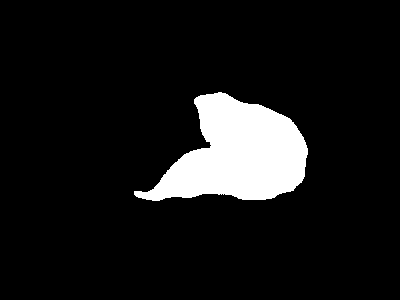}
                    \caption{\\\textbf{RKHS+TD}}
                \end{subfigure}
            \end{tabular}
        }
    \end{tabular}
    \addtolength{\tabcolsep}{5pt}
    \caption{A lemur.}
    \label{fig:pred_lemur}
\end{figure}

One more example is given in figure~\ref{fig:pred_flowers}. The 'PCE' model gives a slightly larger object phase than the preferred mask. The bottom edges of the flower are leaked to the boundary, which may be due to the effect of the shadow. The two 'NCut' models can give a slightly smaller object phase. However, the 'PCE $+$ TD' model has no improvement and presents a heavy leakage. This can be because the vanilla TD regularization does not count the boundary as the perimeter. Extending the object edges such that it aligned with the boundary reduces the TD loss. While giving larger object phases are not penalized in 'PCE', it may have mistakenly learn to extend and leak the boundary. Our 'RKHS $+$ TD' models have a smaller chance to suffer from this problem, since larger object phases will be penalized by our data-fidelity term. 

\begin{figure}[!ht]
    \centering
    \addtolength{\tabcolsep}{-5pt}
    \begin{tabular}{ccccc}
        \begin{subfigure}[t]{0.15\textwidth}
            \setcounter{subfigure}{0}
            \centering
            \includegraphics[width=\textwidth]{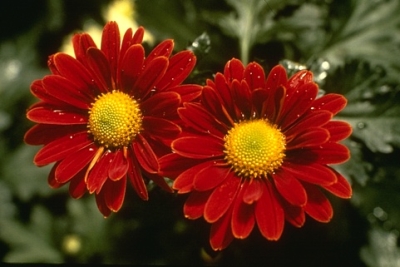}
            \caption{\\Image}
        \end{subfigure}
        &
        \begin{subfigure}[t]{0.15\textwidth}
            \setcounter{subfigure}{1}
            \centering
            \includegraphics[width=\textwidth]{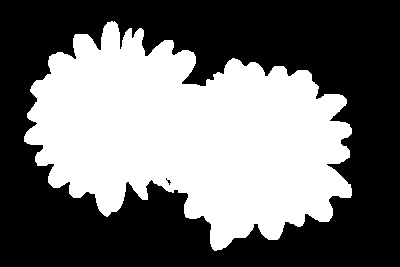}
            \caption{\\Mask}
        \end{subfigure}
        &
        \begin{subfigure}[t]{0.15\textwidth}
            \setcounter{subfigure}{2}
            \centering
            \includegraphics[width=\textwidth]{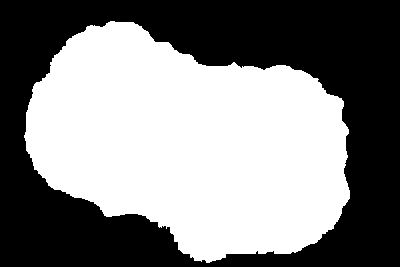}
            \caption{\\PCE}
        \end{subfigure}
        &
        \begin{subfigure}[t]{0.15\textwidth}
            \setcounter{subfigure}{3}
            \centering
            \includegraphics[width=\textwidth]{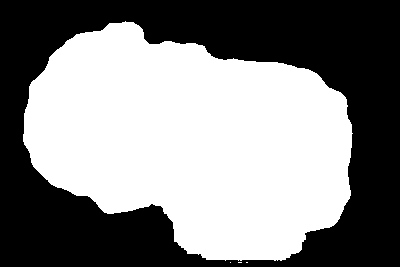}
            \caption{\\PCE+TD}
        \end{subfigure}
        &
        \begin{subfigure}[t]{0.15\textwidth}
            \setcounter{subfigure}{4}
            \centering
            \includegraphics[width=\textwidth]{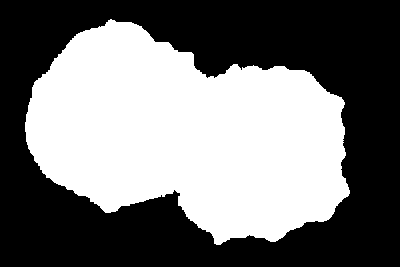}
            \caption{\\PCE+NCut}
        \end{subfigure}
        \\[0.5em]
        \multicolumn{5}{c}{
            \begin{tabular}{cccc}
                \begin{subfigure}[t]{0.15\textwidth}
                    \setcounter{subfigure}{5}
                    \centering
                    \includegraphics[width=\textwidth]{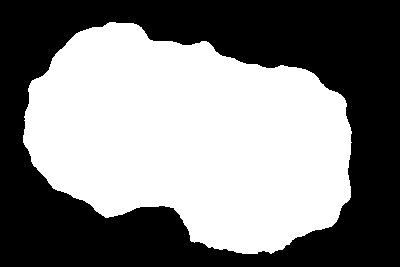}
                    \caption{PCE\\+NCut+KCut}
                \end{subfigure}
                &
                \begin{subfigure}[t]{0.15\textwidth}
                    \setcounter{subfigure}{6}
                    \centering
                    \includegraphics[width=\textwidth]{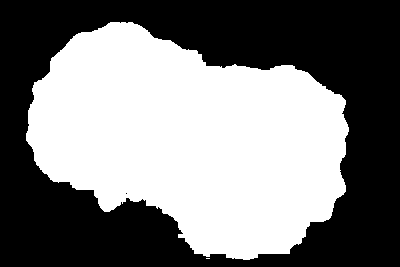}
                    \caption{\\PCE+CV}
                \end{subfigure}
                &
                \begin{subfigure}[t]{0.15\textwidth}
                    \setcounter{subfigure}{7}
                    \centering
                    \includegraphics[width=\textwidth]{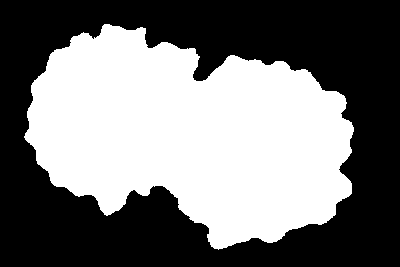}
                    \caption{\\\textbf{RKHS}}
                \end{subfigure}
                &
                \begin{subfigure}[t]{0.15\textwidth}
                    \setcounter{subfigure}{8}
                    \centering
                    \includegraphics[width=\textwidth]{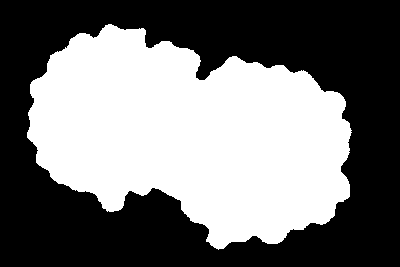}
                    \caption{\\\textbf{RKHS+TD}}
                \end{subfigure}
            \end{tabular}
        }
    \end{tabular}
    \addtolength{\tabcolsep}{5pt}
    \caption{Flowers}
    \label{fig:pred_flowers}
\end{figure}

%=======================Conclusion======================
\section{Conclusion} \label{Sec:Conclusion}
Motivated by sparse-label segmentation in both iterative and weakly supervised settings, we proposed a unified variational framework based on a simplex-constrained Potts model, smooth threshold-dynamics (TD) perimeter approximation, and RKHS-based label extension. The resulting energy is smooth and convex, and can be optimized efficiently by iterative threshold dynamics or gradient-based solvers without alternating projection.
The RKHS extension enables controllable regularization of region distributions through kernel design. Experiments in Section~\ref{Sec:Single_Image_Segmentation} show that the method handles inhomogeneous intensity distributions and remains robust in challenging cases, including water splashes, illumination bias, and strong noise.
We further derived the variational model into a weakly supervised training loss (Section~\ref{sec:loss_function}). The observed training effects (Section~\ref{sec:network_training})---denoising, object-edge refinement, and boundary-artifact elimination---indicate a strong synergy between variational modeling and deep learning. 
Notably, our method still exhibits several limitations for practical network training. On the one hand, the proposed strategy introduces mild additional computational and storage costs that can be non-negligible for a large dataset. On the other hand, the choice of kernel and weight parameters are still empirical. These suggest several future extensions of our work, such as a sparse function extension algorithm and an adaptive parameter choosing method.

\section*{Funding information}
Lingfeng Li is partially supported by HKRGC Grants No.LU13300125.
Sung Ha Kang is partially supported by Simons Foundation grant 584960.
Xue-Cheng Tai is partially supported by RCN project SAPPHIRE 358034 and NORCE Kompetanseoppbygging program.

\section*{Author contribution}
\noindent
Y.C. did algorithm design, programming, numerical experiments, and writing. L.L. participated in the writing. S.K. participated in the algorithm design. J.Z. participated in some discussion. X.T. participated in the algorithm design and writing.

\section*{Data availability}
The data being used in this paper are publicly available and can be found in \url{https://www.cse.cuhk.edu.hk/leojia/projects/hsaliency/dataset.html} (ECSSD) and \url{https://www.robots.ox.ac.uk/~vgg/projects/pascal/VOC/voc2012/index.html#data} ( PASCAL VOC 2012).  

% {\color{red} 
% Here, we list several possible directions for future extensions. First, in this manuscript, we only consider the weakly-supervised learning using pixel-level information, it can be further combined with image-level information like those used in \cite{kim2019mumford,liang2022tree,lin2016scribblesup,tang2018normalized,tang2018regularized,wu2023sparsely,zhang2023weakly} to further enhance the segmentation accuracy. 
% Second, when training the network using the 'RKHS+TD' loss, we apply a constant $\lambda$ to all images, we will try to use an extra network module to infer an image-dependent $\lambda$ to each sample, and design corresponding training algorithms.}
%
% \bibliographystyle{splncs04} %This citation style cannot use blanket in \cite. i.e. \cite{a,b} not \cite{a, b}
\bibliography{ref}
%

%====================Appendix===========================

\section*{Appendix}
\appendix

\section{Reproducing Kernel Hilbert Space}\label{sec:rkhs}

Let $D$ be a nonempty set, let $\mathcal V$ be a real Hilbert space with inner product $\langle\cdot,\cdot\rangle_{\mathcal V}$, and let $\mathcal V^D$ denote the set of all functions from $D$ to $\mathcal V$.  
Denote by $\mathcal B(\mathcal V)$ the Banach space of bounded linear operators on $\mathcal V$.

Assume $\mathcal K:D\times D\to\mathcal B(\mathcal V)$ is an operator-valued kernel.  
We call $\mathcal K$ symmetric if
\begin{align}
\mathcal K(x,y)=\mathcal K(y,x)^*, \qquad \forall (x,y)\in D\times D,
\label{eq:symmetric_kernel}
\end{align}
and positive definite if
\begin{align*}
\sum_{i,j=1}^N \big\langle \nu_i,\mathcal K(x_i,x_j)\nu_j\big\rangle_{\mathcal V}> 0
\end{align*}
for every finite set $\{x_i\}_{i=1}^N\subset D$, non-zero set $\{\nu_i\}_{i=1}^N\subset\mathcal V$, and $N\in\mathbb N$.

For each $x\in D$ and $\nu\in\mathcal V$, define
\[
\mathcal K_x\nu(\cdot):=\mathcal K(\cdot,x)\nu\in\mathcal V^D.
\]
Consider
\[
\mathcal H_0:=\mathrm{span}\{\mathcal K_x\nu:\ x\in D,\ \nu\in\mathcal V\}\subset\mathcal V^D.
\]
For
\[
\psi=\sum_{i=1}^N\mathcal K_{x_i}\nu_i,\qquad
\phi=\sum_{j=1}^M\mathcal K_{y_j}\omega_j\in\mathcal H_0,
\]
define
\begin{align}
\langle \psi,\phi\rangle_{\mathcal H_0}
:=\sum_{i=1}^N\sum_{j=1}^M
\big\langle \nu_i,\mathcal K(x_i,y_j)\omega_j\big\rangle_{\mathcal V}.
\label{eq:rkhs_innerproduct}
\end{align}
The completion of $\mathcal H_0$ under this inner product is a Hilbert space, denoted by $\mathcal H_{\mathcal K}$.

By the Moore--Aronszajn theorem \cite{aronszajn1950theory}, $\mathcal H_{\mathcal K}$ is the unique RKHS with reproducing kernel $\mathcal K$, i.e.,
\begin{align}
\langle \psi(x),\nu\rangle_{\mathcal V}
=
\langle \psi,\mathcal K_x\nu\rangle_{\mathcal H_{\mathcal K}},
\qquad
\forall \psi\in\mathcal H_{\mathcal K},\ \nu\in\mathcal V.
\label{eq:reproducing_property}
\end{align}
For instance, if $\psi=\sum_{i=1}^{\infty}\mathcal K_{x_i}\nu_i\in\mathcal H_{\mathcal K}$, then
\begin{align*}
\langle \psi(x),\nu\rangle_{\mathcal V}
&=\left\langle \sum_{i=1}^{\infty}\mathcal K(x,x_i)\nu_i,\nu\right\rangle_{\mathcal V}
=\sum_{i=1}^{\infty}\langle \nu_i,\mathcal K(x_i,x)\nu\rangle_{\mathcal V} \\
&=\langle \psi,\mathcal K_x\nu\rangle_{\mathcal H_{\mathcal K}}.
\end{align*}

This also implies boundedness of the evaluation map $\mathcal E_x:\mathcal H_{\mathcal K}\to\mathcal V$, $\mathcal E_x\psi=\psi(x)$:
\begin{align*}
\|\mathcal K_x\nu\|_{\mathcal H_{\mathcal K}}^2
&=\langle \nu,\mathcal K(x,x)\nu\rangle_{\mathcal V}
\le \|\mathcal K(x,x)\|\,\|\nu\|_{\mathcal V}^2,\\
\|\psi(x)\|_{\mathcal V}
&\le \|\mathcal K_x^*\|\,\|\psi\|_{\mathcal H_{\mathcal K}}
\le \sqrt{\|\mathcal K(x,x)\|}\,\|\psi\|_{\mathcal H_{\mathcal K}}.
\end{align*}
Hence, if
\[
\kappa:=\sup_{x\in D}\sqrt{\|\mathcal K(x,x)\|}<\infty,
\]
then
\[
\|\psi\|_{\infty}\le \kappa\|\psi\|_{\mathcal H_{\mathcal K}}.
\]

In the rest of the paper, ``reproducing kernel'' means an operator-valued kernel $\mathcal K$ that is symmetric and positive definite.

\section{Proofs}

\subsection{Proof of Lemma~\ref{lem:proj1}}
\begin{proof}
Since \eqref{eq:simplex_proj} is convex and differentiable, we use KKT conditions.  
Its Lagrangian is
\begin{align*}
\mathcal L(\mathbf u,\xi,\zeta)
=
\frac12\|\mathbf u-\Psi^\gamma\|_2^2
+\xi\!\left(\sum_{k=1}^K u_k-1\right)
-\langle \zeta,\mathbf u\rangle,
\end{align*}
where $\xi\in\mathbb R$ and $\zeta=(\zeta_1,\ldots,\zeta_K)\in\mathbb R_+^K$.

Stationarity and complementary slackness give
\[
u_k-\Psi_k^\gamma+\xi-\zeta_k=0,\qquad
\zeta_k u_k=0,\qquad
u_k\ge 0.
\]
Hence
\[
u_k^*=\max\{\Psi_k^\gamma-\xi,0\}.
\]
Let $\rho=\max\{j\in[K]:u_{(j)}^*>0\}$. Then
\begin{align*}
1
=\sum_{q=1}^K u_q^*
=\sum_{q=1}^{\rho}u_{(q)}^*
=\sum_{q=1}^{\rho}\left(\Psi_{(q)}^\gamma-\xi\right),
\end{align*}
so
\begin{align*}
\xi=\frac1{\rho}\left(\sum_{q=1}^{\rho}\Psi_{(q)}^\gamma-1\right).
\end{align*}
This proves \eqref{eq:proj_lemma1_u}.
\end{proof}

\subsection{Proof of Lemma~\ref{lem:proj2}}
\begin{proof}
From the definition
\[
\rho=\max\{j\in[K]:u_{(j)}^*>0\},
\]
we have
\begin{align*}
\Psi_{(\rho)}^\gamma
>
\frac1{\rho}\left(\sum_{q=1}^{\rho}\Psi_{(q)}^\gamma-1\right),
\qquad
\Psi_{(\rho+1)}^\gamma
\le
\frac1{\rho}\left(\sum_{q=1}^{\rho}\Psi_{(q)}^\gamma-1\right).
\end{align*}
For $r=1,\ldots,\rho-1$,
\begin{align*}
\Psi_{(\rho-r)}^\gamma
\ge \Psi_{(\rho)}^\gamma
>
\frac1{\rho-r}\left(\sum_{q=1}^{\rho-r}\Psi_{(q)}^\gamma-1\right).
\end{align*}
Also,
\begin{align}
\Psi_{(\rho+1)}^\gamma\le \frac1{\rho}\left(\sum_{q=1}^{\rho}\Psi_{(q)}^\gamma-1\right)
\iff
\Psi_{(\rho+1)}^\gamma\le \frac1{\rho+1}\left(\sum_{q=1}^{\rho+1}\Psi_{(q)}^\gamma-1\right).
\label{eq:proj_lemma2_pf_rho_1}
\end{align}
Applying \eqref{eq:proj_lemma2_pf_rho_1} inductively yields, for $r=1,\ldots,K-\rho$,
\begin{align*}
\Psi_{(\rho+r)}^\gamma
\le
\frac1{\rho+r}\left(\sum_{q=1}^{\rho+r}\Psi_{(q)}^\gamma-1\right).
\end{align*}
Therefore
\[
\rho=
\max\left\{
j\in[K]:
\Psi_{(j)}^\gamma-\frac1j\left(\sum_{q=1}^{j}\Psi_{(q)}^\gamma-1\right)>0
\right\}.
\]
The converse direction follows similarly from the monotone ordering
\[
\Psi_The {(1)}^\gamma\ge\cdots\ge\Psi_{(K)}^\gamma.
\]
\end{proof}

\section{The Threshold Dynamics Algorithm} \label{sec:td_implementation}

Consider the energy
\begin{align*}
\mathcal{E}(\mathbf{v})
=
\sum_{k=1}^K \int_\Omega \bigl(1-2u_k(x)\bigr)v_k(x)\,dx
+\lambda \sum_{k=1}^K \int_\Omega \bigl(1-v_k(x)\bigr)\bigl(G_\sigma * v_k\bigr)(x)\,dx .
\end{align*}
Since $\mathcal{E}$ is convex, at iteration $t$ we update $\mathbf v^{(t+1)}$ by minimizing the affine majorization (supporting hyperplane) at $\mathbf v^{(t)}$:
\begin{align*}
\mathbf{v}^{(t+1)}
=
\arg\min_{\mathbf{v}\in\tilde{\Delta}^K}
\sum_{k=1}^K \int_\Omega g_k^{(t)}(x)\,v_k(x)\,dx ,
\end{align*}
where
\begin{align*}
g_k^{(t)}(x)
=
1-2u_k(x)+\lambda\bigl(G_\sigma*(1-v_k^{(t)})\bigr)(x).
\end{align*}
Hence, pointwise in $x$, the update is
\begin{align}
\label{eq:td_update}
v_k^{(t+1)}(x)=
\begin{cases}
1, & \text{if } k\in\arg\min_{k'} g_{k'}^{(t)}(x),\\
0, & \text{otherwise}.
\end{cases}
\end{align}
Algorithm~\ref{alg:TD} summarizes the full procedure. Here $\mathbbm{1}_{\mathcal X}$ denotes the indicator function of a set $\mathcal X$. The thresholded $\mathbf u$ is used for initialization. For binary segmentation, we rewrite \eqref{eq:main_model_binary} as \eqref{eq:main_model} with $K=2$ by setting $u_1=u_0$ and $u_2=1-u_0$; then the final binary mask is $v_0=v_1$.

This model and algorithm were introduced in \cite{Liu2011} and refined in \cite{wang2017efficient}. Monotone energy decay and convergence to a minimizer were proved in \cite[Proposition 2]{Liu2011} and \cite[Theorem 2.2]{wang2017efficient}.

\begin{algorithm}[!ht]
\caption{The Threshold Dynamics Method \cite{Liu2011,wang2017efficient}}
\label{alg:TD}
\SetKwInOut{Input}{Input}
\SetKw{Return}{Return:}
\SetKwComment{Comment}{/* }{ */}
\Input{Label-based fuzzy membership function $\mathbf{u}$; weight $\lambda$; kernel scale $\sigma$.}
Initialize
\[
v_k^{(0)}=\mathbbm{1}_{\{x:\,u_k(x)>\max_{k'\neq k}u_{k'}(x)\}},\quad k=1,\ldots,K.
\]
\For{$t=0,1,\ldots$}{
    \For{$k=1,\ldots,K$}{
        $g_k^{(t)}(x)=1-2u_k(x)+\lambda\bigl(G_\sigma*(1-v_k^{(t)})\bigr)(x)$\;
        $\Omega_k^{(t+1)}=\{x:\,g_k^{(t)}(x)<\min_{k'\neq k}g_{k'}^{(t)}(x)\}$\;
        $v_k^{(t+1)}=\mathbbm{1}_{\Omega_k^{(t+1)}}$\;
    }
    Stop when a stopping criterion is met\;
}
\Return{$\mathbf{v}^{(t+1)}$}
\end{algorithm}

\section{Evaluation Metrics}\label{sec:evaluation_metrics}

We define evaluation metrics used in Section~\ref{Sec:Network}. Let $\hat{\psi}_i=(\hat{\psi}_{i,1},\ldots,\hat{\psi}_{i,K})$ be the discretized ground-truth label for image $i$, with $\hat{\psi}_{i,k}\in\{0,1\}^{h\times w}$. The predicted mask is obtained by thresholding the network output:
\[
N_{\theta,k}(\hat{I}_i)[j] = \mathbbm{1}\!\left[k = \arg\max_{k'} N_{\theta,k'}(\hat{I}_i)[j]\right].
\]
Define the standard confusion counts for class $k$ and image $i$:
\begin{align*}
    \mathrm{TP}_{i,k} &= \bigl|\{j : N_{\theta,k}(\hat{I}_i)[j]=1,\ \hat{\psi}_{i,k}[j]=1\}\bigr|, \\
    \mathrm{FP}_{i,k} &= \bigl|\{j : N_{\theta,k}(\hat{I}_i)[j]=1,\ \hat{\psi}_{i,k}[j]=0\}\bigr|, \\
    \mathrm{FN}_{i,k} &= \bigl|\{j : N_{\theta,k}(\hat{I}_i)[j]=0,\ \hat{\psi}_{i,k}[j]=1\}\bigr|,
\end{align*}
for $i=1,\ldots,n$ and $k=1,\ldots,K$. The three metrics are defined as follows.

\paragraph{Mean Intersection over Union (mIoU)}
\begin{align*}
    \mathrm{mIoU} = \frac{1}{K}\sum_{k=1}^K\frac{\sum_{i=1}^n\mathrm{TP}_{i,k}}{\sum_{i=1}^n\left(\mathrm{TP}_{i,k}+\mathrm{FP}_{i,k}+\mathrm{FN}_{i,k}\right)}.
\end{align*}

\paragraph{Mean Dice Coefficient (mDice)}
\begin{align*}
    \mathrm{mDice} = \frac{1}{K}\sum_{k=1}^K\frac{2\sum_{i=1}^n\mathrm{TP}_{i,k}}{\sum_{i=1}^n\left(2\,\mathrm{TP}_{i,k}+\mathrm{FP}_{i,k}+\mathrm{FN}_{i,k}\right)}.
\end{align*}

\paragraph{Mean Pixel Accuracy (mAcc)}
\begin{align*}
    \mathrm{mAcc} = \frac{1}{K}\sum_{k=1}^K\frac{\sum_{i=1}^n\mathrm{TP}_{i,k}}{\sum_{i=1}^n\left(\mathrm{TP}_{i,k}+\mathrm{FN}_{i,k}\right)}.
\end{align*}

\section{Loss Functions of the Comparing Methods}\label{sec:network_energy_compare}

This appendix collects the training objectives used in Table~\ref{tab:network_all}.
All methods share the same network $N_\theta$ and differ only in the loss.

\subsection{Common notation}

Let $\{(\hat I_i,\hat\psi_i)\}_{i=1}^n$ be the training set, where $\hat I_i$ is an image on
$\hat\Omega$ of size $h\times w$, and $\hat\psi_i=(\hat\psi_{i,1},\ldots,\hat\psi_{i,K})$
is the (possibly sparse) label, defined only on $\hat D_i\subseteq\hat\Omega$.
The network output is
\[
\hat p_i := N_\theta(\hat I_i)=(\hat p_{i,1},\ldots,\hat p_{i,K}),
\qquad
\hat p_{i,k}\in[0,1]^{h\times w},\ \sum_{k=1}^K \hat p_{i,k}=1.
\]
For a set $\hat D\subseteq\hat\Omega$, define the discrete inner product
\[
\langle A,B\rangle_{\hat D}
:=\sum_{[l,j]\in\hat D} A[l,j]\,B[l,j].
\]

\paragraph{TD regularizer (per image).}
We use the same smooth Potts/TD perimeter surrogate as in \eqref{eq:Potts_rkhs_loss}:
\[
\mathrm{TD}_i(\theta)
:=
\sum_{k=1}^K
\Bigl\langle
1-\hat p_{i,k},\,
\hat G_\sigma \mathbin{\hat *}\hat p_{i,k}
\Bigr\rangle_{\mathbb R^{h\times w}}.
\]
When included in a loss, we average it over images:
\[
\mathrm{TD}(\theta):=\frac1n\sum_{i=1}^n \mathrm{TD}_i(\theta).
\]

\subsection{Fully supervised baseline}

\subsubsection{Cross-Entropy (CE)}
When full labels are available ($\hat D_i=\hat\Omega$), we use standard cross-entropy:
\begin{align}
\label{eq:ce_loss}
\mathrm{CE}(\theta)
=
-\frac1n\sum_{i=1}^n \frac{1}{hw}\sum_{k=1}^K
\Bigl\langle
\hat\psi_{i,k},\,\log \hat p_{i,k}
\Bigr\rangle_{\mathbb R^{h\times w}}.
\end{align}
We also report \textbf{CE + TD}, defined as $\mathrm{CE}(\theta)+\lambda\,\mathrm{TD}(\theta)$.

\subsection{Weakly supervised baselines (scribbles / sparse pixels)}

\subsubsection{Partial Cross-Entropy (PCE)}
The standard weakly supervised loss evaluates cross-entropy only on labeled pixels:
\[
\mathrm{PCE}(\theta)
=
-\frac1n\sum_{i=1}^n \frac{1}{|\hat D_i|}\sum_{k=1}^K
\Bigl\langle
\hat\psi_{i,k},\,\log \hat p_{i,k}
\Bigr\rangle_{\hat D_i}.
\]
Outside $\hat D_i$, this term imposes no explicit regularization.

\subsubsection{PCE + TD}
We add the TD regularizer to PCE:
\[
\mathrm{PCE}(\theta)+\lambda\,\mathrm{TD}(\theta).
\]

\subsubsection{PCE + NCut \cite{tang2018normalized}}
This variant adds a normalized-cut style regularizer \cite{shi2000normalized}:
\[
\mathrm{PCE}(\theta)+\lambda\,\mathrm{NCut}(\theta),
\]
\[\mathrm{NCut}(\theta)
=
\frac1n\sum_{i=1}^n
\sum_{k=1}^K
\frac{
\displaystyle\sum_{l,l'}\sum_{j,j'}
\hat{\mathcal G}_{\sigma_I,\sigma_s}(\hat I_i)[l,j,l',j']
\bigl(1-\hat p_{i,k}[l,j]\bigr)\hat p_{i,k}[l',j']
}{
\displaystyle\sum_{l,j}\hat p_{i,k}[l,j]
}.\]
The affinity kernel is the (discrete) bilateral weight
\begin{align}
\label{eq:ncut_loss}
\hat{\mathcal G}_{\sigma_I,\sigma_s}(\hat I_i)[l,j,l',j']
=
\frac{1}{C}
\exp\!\left(
-\frac{\|\hat I_i[l,j]-\hat I_i[l',j']\|_2^2}{\sigma_{I}}
\right)
\exp\!\left(
-\frac{(l-l')^2+(j-j')^2}{\sigma_{s}}
\right),
\end{align}
with normalization constant $C$.

\subsubsection{PCE + NCut + KCut \cite{tang2018normalized}}
This adds an additional (unnormalized) cut term:
\[
\mathrm{PCE}(\theta)+\lambda\bigl(\mathrm{NCut}(\theta)+\mathrm{KCut}(\theta)\bigr),
\]
\[\mathrm{KCut}(\theta)
=
\frac1n\sum_{i=1}^n
\sum_{k=1}^K
\sum_{l,l'}\sum_{j,j'}
\hat{\mathcal G}_{\sigma_I,\sigma_s}(\hat I_i)[l,j,l',j']
\bigl(1-\hat p_{i,k}[l,j]\bigr)\hat p_{i,k}[l',j'].\]

\subsubsection{PCE + CV \cite{kim2019mumford}}
Following \cite{kim2019mumford}, we combine PCE with a Chan--Vese-type regularizer \cite{chan2001active}:
\[
\mathrm{PCE}(\theta)+\lambda\,\mathrm{CV}(\theta),
\]
where
\[
\mathrm{CV}(\theta)
=
\frac1n\sum_{i=1}^n
\sum_{k=1}^K
\left(
\frac{1}{hw}
\bigl\langle \hat I_i-c_{i,k},\,\hat p_{i,k}\bigr\rangle_{\mathbb R^{h\times w}}
+
\mu\sum_{l,j}\bigl\|\hat\nabla \hat p_{i,k}[l,j]\bigr\|_1
\right).
\]
Here
\[
c_{i,k}
=
\frac{
\langle \hat I_i,\hat p_{i,k}\rangle_{\mathbb R^{h\times w}}
}{
\sum_{l,j}\hat p_{i,k}[l,j]
},
\]
$\hat\nabla$ is the forward-difference gradient with zero Dirichlet boundary condition, and
$\mu=10^{-6}$ as in \cite{kim2019mumford}.

\section{Ablation Studies for Network Training}

\subsection{Weight of the TD term}\label{sec:network_edge_weight}

Table~\ref{tab:ablation_weight_edge} reports validation scores for RKHS + TD under different $\lambda$. The effect is nonlinear. We use $\lambda=5\times10^{-5}$ throughout Section~\ref{Sec:Network} as it gives the best mIoU and mDice.

\begin{table}[]
    \centering
    \begin{tabular}{c|cccc}
        \hline
        $\lambda$ & \hspace{2em}5E-4 \hspace{1em}& \hspace{1em}5E-5\hspace{1em} &\hspace{1em} 5E-6\hspace{1em} & \hspace{1em}5E-7\hspace{1em} \\
        \hline
        mIoU & 74.04 & \color{red}{75.35} & 75.19 & 75.30\\
        mDice & 84.52 & \color{red}{85.47} & 85.37 & 85.42  \\
        mAcc & 86.18 & 87.94 & \color{red}{88.26} & 87.53 \\
        \hline
    \end{tabular}
    \caption{Ablation study on the TD weight $\lambda$. Best score per row in \textcolor{red}{red}.}
    \label{tab:ablation_weight_edge}
\end{table}

\subsection{Training dynamics}\label{sec:network_training_dynamic}

Figure~\ref{fig:training_dynamic} shows loss and evaluation-score curves during the second training stage (with augmentation). Halving the learning rate at the stage boundary stabilizes training for all methods. The effect is most pronounced for PCE, where it produces a sharp drop in loss and reduces score oscillation. For RKHS and RKHS $+$ TD the improvement in stability is subtler but still visible.

\begin{figure}[!ht]
    \centering
    \begin{subfigure}[b]{0.28\textwidth}
        \centering
        \includegraphics[width=\textwidth]{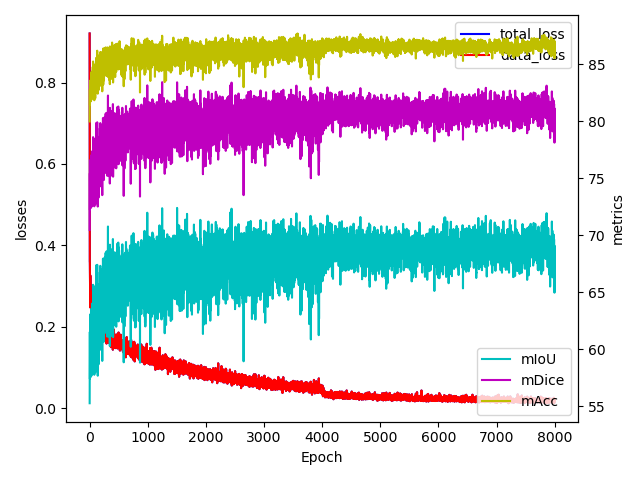} \\
        \caption{PCE}
    \end{subfigure}
    \begin{subfigure}[b]{0.28\textwidth}
        \centering
        \includegraphics[width=\textwidth]{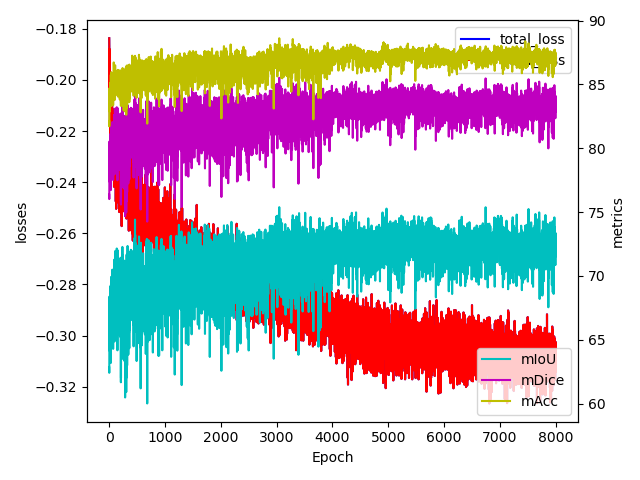} \\
        \caption{RKHS}
    \end{subfigure}
    \begin{subfigure}[b]{0.28\textwidth}
        \centering
        \includegraphics[width=\textwidth]{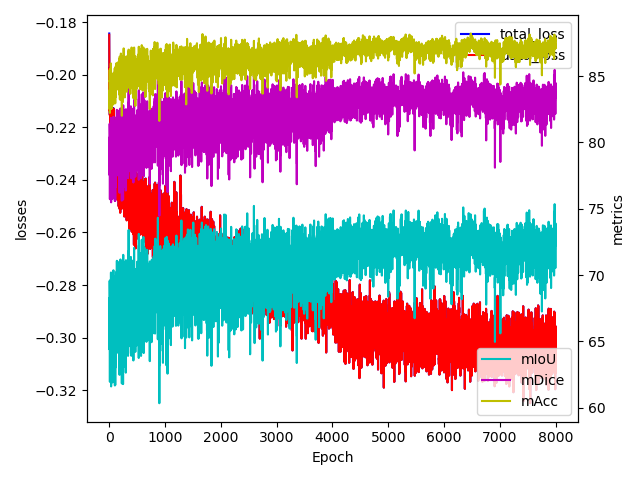} \\
        \caption{RKHS+TD}
    \end{subfigure}
    \caption{Loss and evaluation-score dynamics during the second training stage.
    \textcolor{cyan}{Cyan}: mIoU. \textcolor{violet}{Violet}: mDice. \textcolor{olive}{Olive}: mAcc.
    \textcolor{red}{Red}: data-fidelity loss. \textcolor{blue}{Blue}: total loss.}
    \label{fig:training_dynamic}
\end{figure}

\section{Additional Figures}

\subsection{Patch Intensity Distribution of Homogeneous Images}\label{sec:homo_img_dist}

Figure~\ref{fig:rabbit_dist} shows the patch intensity distributions for the two homogeneous images discussed in Section~\ref{sec:td_results}. In both cases, the foreground and background distributions overlap substantially, which makes standard data-fidelity terms ineffective. Our RKHS kernel exploits patch-level intensity and spatial cues to separate the two phases despite this overlap.

\begin{figure}[!ht]
    \centering
    \begin{subfigure}[b]{0.18\textwidth}
        \centering
        \includegraphics[width=\textwidth]{Figures/iterative_method/dolphin/0797.jpg} \\
        \includegraphics[width=\textwidth]{Figures/iterative_method/main/ILSVRC2012_test_00017483.jpg} \\
        \caption{Image}
    \end{subfigure}
    \begin{subfigure}[b]{0.26\textwidth}
        \centering
        \includegraphics[width=\textwidth]{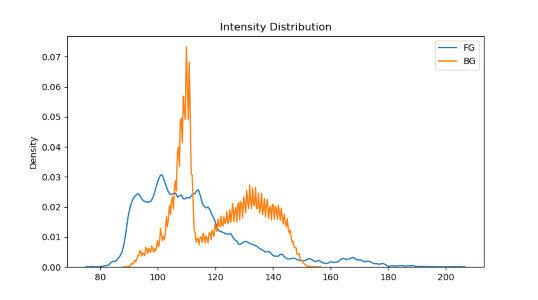} \\
        \includegraphics[width=\textwidth]{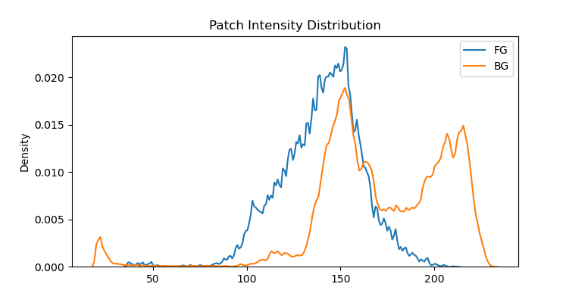} \\
        \caption{Distribution}
    \end{subfigure}
    \caption{Patch intensity distributions ($R_0=3$) for the dolphin (top) and rabbit (bottom) images. \textcolor{cyan}{Cyan}: foreground. \textcolor{orange}{Orange}: background. Large overlap makes phase separation difficult for standard methods.}
    \label{fig:rabbit_dist}
\end{figure}

\subsection{Network Prediction} \label{sec:network_pred_figs_appendix}

Figures~\ref{fig:pred_plane}--\ref{fig:pred_deer} provide additional qualitative comparisons. In general, PCE-based methods produce less precise masks, often including background noise or artifacts, especially in complex scenes (Figures~\ref{fig:pred_giraffe}, \ref{fig:pred_horse}). While regularizers like TD or NCut offer some improvement, they are often insufficient. In contrast, our RKHS-based methods consistently generate cleaner, more accurate segmentations that better adhere to true object boundaries. They successfully handle challenging cases, such as distinguishing the giraffe from a textured background (Figure~\ref{fig:pred_giraffe}) and preserving fine structures like the deer's antlers (Figure~\ref{fig:pred_deer}), demonstrating the superiority of the learned fuzzy membership function $\mathbf{u}$ as a supervisory signal.

\begin{figure}[!ht]
    \centering

    \addtolength{\tabcolsep}{-5pt}
    \begin{tabular}{ccccc}
        \begin{subfigure}[t]{0.15\textwidth}
            \setcounter{subfigure}{0}
            \centering
            \includegraphics[width=\textwidth]{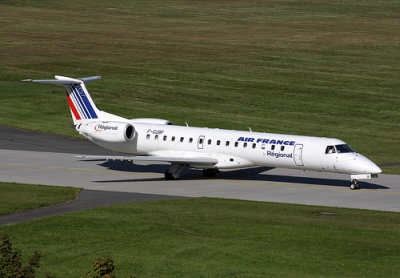}
            \caption{\\Image}
        \end{subfigure}
        &
        \begin{subfigure}[t]{0.15\textwidth}
            \setcounter{subfigure}{1}
            \centering
            \includegraphics[width=\textwidth]{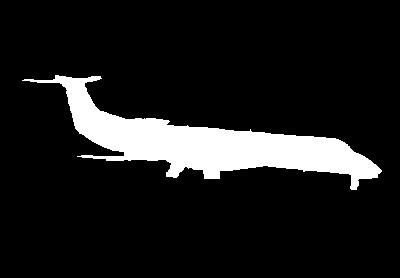}
            \caption{\\Mask}
        \end{subfigure}
        &
        \begin{subfigure}[t]{0.15\textwidth}
            \setcounter{subfigure}{2}
            \centering
            \includegraphics[width=\textwidth]{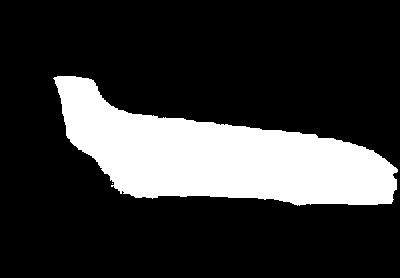}
            \caption{\\PCE}
        \end{subfigure}
        &
        \begin{subfigure}[t]{0.15\textwidth}
            \setcounter{subfigure}{3}
            \centering
            \includegraphics[width=\textwidth]{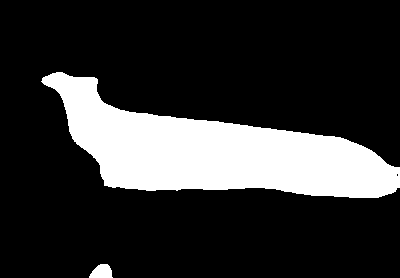}
            \caption{\\PCE+TD}
        \end{subfigure}
        &
        \begin{subfigure}[t]{0.15\textwidth}
            \setcounter{subfigure}{4}
            \centering
            \includegraphics[width=\textwidth]{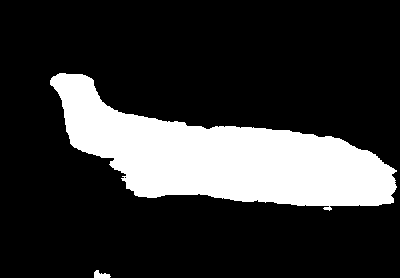}
            \caption{\\PCE+NCut}
        \end{subfigure}
        \\[0.5em]
        \multicolumn{5}{c}{
            \begin{tabular}{cccc}
                \begin{subfigure}[t]{0.15\textwidth}
                    \setcounter{subfigure}{5}
                    \centering
                    \includegraphics[width=\textwidth]{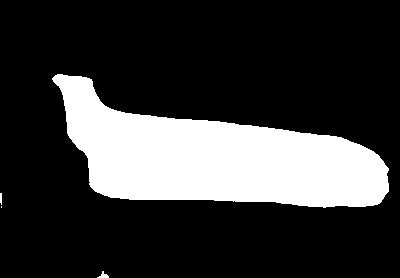}
                    \caption{PCE\\+NCut+KCut}
                \end{subfigure}
                &
                \begin{subfigure}[t]{0.15\textwidth}
                    \setcounter{subfigure}{6}
                    \centering
                    \includegraphics[width=\textwidth]{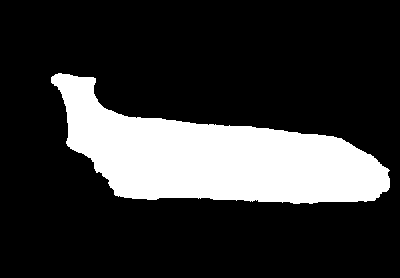}
                    \caption{\\PCE+CV}
                \end{subfigure}
                &
                \begin{subfigure}[t]{0.15\textwidth}
                    \setcounter{subfigure}{7}
                    \centering
                    \includegraphics[width=\textwidth]{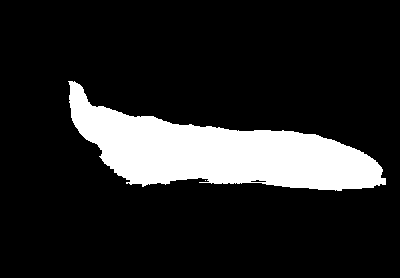}
                    \caption{\\\textbf{RKHS}}
                \end{subfigure}
                &
                \begin{subfigure}[t]{0.15\textwidth}
                    \setcounter{subfigure}{8}
                    \centering
                    \includegraphics[width=\textwidth]{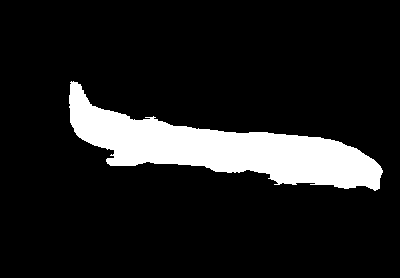}
                    \caption{\\\textbf{RKHS+TD}}
                \end{subfigure}
            \end{tabular}
        }
    \end{tabular}
    \addtolength{\tabcolsep}{5pt}
    \caption{Qualitative comparison on an image of a plane. PCE-based methods produce inaccurate masks. Our RKHS-based methods generate clean segmentations that closely match the ground truth mask.}
    \label{fig:pred_plane}
\end{figure}

\begin{figure}[!ht]
    \centering
    \addtolength{\tabcolsep}{-5pt}
    \begin{tabular}{ccccc}
        \begin{subfigure}[t]{0.15\textwidth}
            \setcounter{subfigure}{0}
            \centering
            \includegraphics[width=\textwidth]{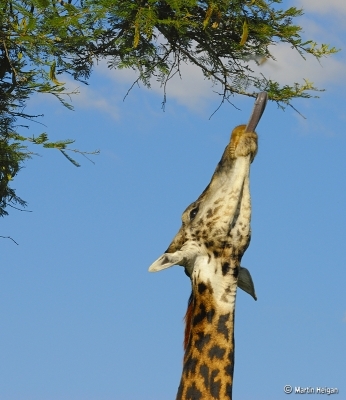}
            \caption{\\Image}
        \end{subfigure}
        &
        \begin{subfigure}[t]{0.15\textwidth}
            \setcounter{subfigure}{1}
            \centering
            \includegraphics[width=\textwidth]{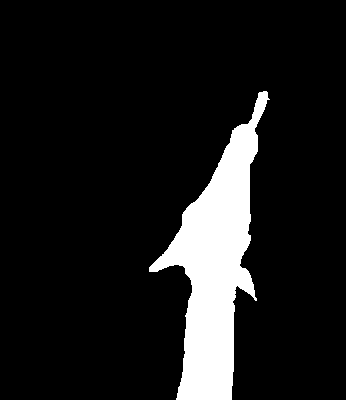}
            \caption{\\Mask}
        \end{subfigure}
        &
        \begin{subfigure}[t]{0.15\textwidth}
            \setcounter{subfigure}{2}
            \centering
            \includegraphics[width=\textwidth]{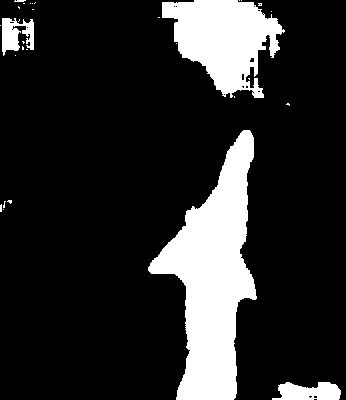}
            \caption{\\PCE}
        \end{subfigure}
        &
        \begin{subfigure}[t]{0.15\textwidth}
            \setcounter{subfigure}{3}
            \centering
            \includegraphics[width=\textwidth]{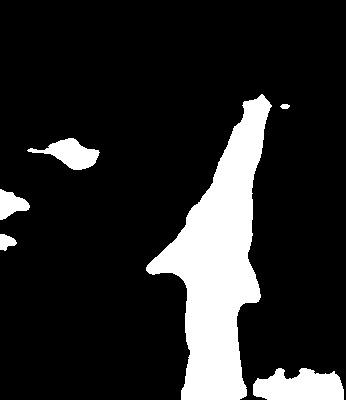}
            \caption{\\PCE+TD}
        \end{subfigure}
        &
        \begin{subfigure}[t]{0.15\textwidth}
            \setcounter{subfigure}{4}
            \centering
            \includegraphics[width=\textwidth]{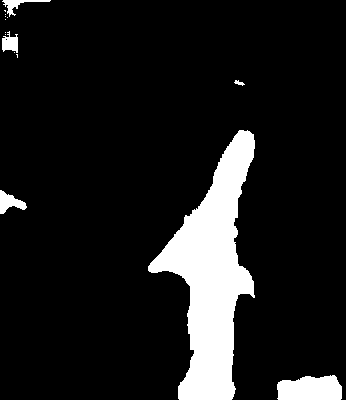}
            \caption{\\PCE+NCut}
        \end{subfigure}
        \\[0.5em]
        \multicolumn{5}{c}{
            \begin{tabular}{cccc}
                \begin{subfigure}[t]{0.15\textwidth}
                    \setcounter{subfigure}{5}
                    \centering
                    \includegraphics[width=\textwidth]{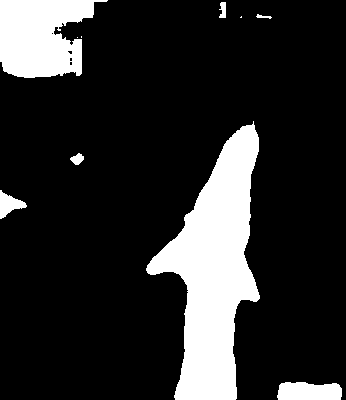}
                    \caption{PCE\\+NCut+KCut}
                \end{subfigure}
                &
                \begin{subfigure}[t]{0.15\textwidth}
                    \setcounter{subfigure}{6}
                    \centering
                    \includegraphics[width=\textwidth]{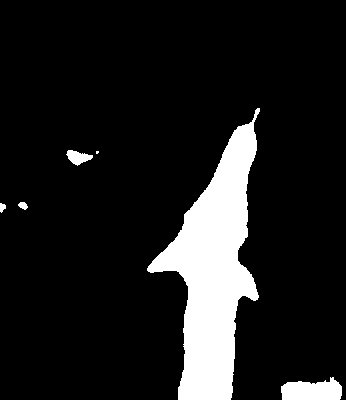}
                    \caption{\\PCE+CV}
                \end{subfigure}
                &
                \begin{subfigure}[t]{0.15\textwidth}
                    \setcounter{subfigure}{7}
                    \centering
                    \includegraphics[width=\textwidth]{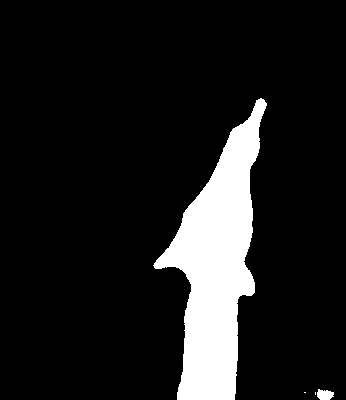}
                    \caption{\\\textbf{RKHS}}
                \end{subfigure}
                &
                \begin{subfigure}[t]{0.15\textwidth}
                    \setcounter{subfigure}{8}
                    \centering
                    \includegraphics[width=\textwidth]{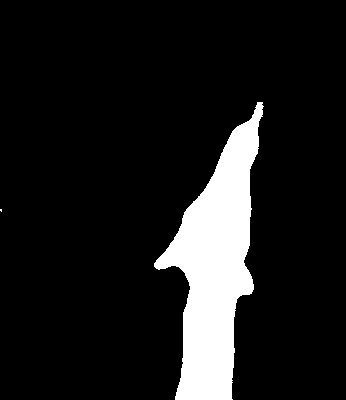}
                    \caption{\\\textbf{RKHS+TD}}
                \end{subfigure}
            \end{tabular}
        }
    \end{tabular}
    \addtolength{\tabcolsep}{5pt}
    \caption{Qualitative comparison on an image of a giraffe. The background has a similar texture to the object. PCE-based methods incorrectly segment large parts of the background, while our RKHS-based methods are more precise.}
    \label{fig:pred_giraffe}
\end{figure}

\begin{figure}[!ht]
    \centering
    \addtolength{\tabcolsep}{-5pt}
    \begin{tabular}{ccccc}
        \begin{subfigure}[t]{0.15\textwidth}
            \setcounter{subfigure}{0}
            \centering
            \includegraphics[width=\textwidth]{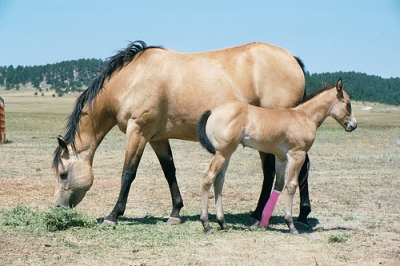}
            \caption{\\Image}
        \end{subfigure}
        &
        \begin{subfigure}[t]{0.15\textwidth}
            \setcounter{subfigure}{1}
            \centering
            \includegraphics[width=\textwidth]{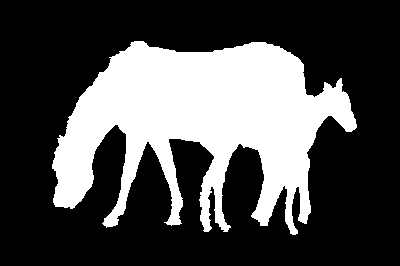}
            \caption{\\Mask}
        \end{subfigure}
        &
        \begin{subfigure}[t]{0.15\textwidth}
            \setcounter{subfigure}{2}
            \centering
            \includegraphics[width=\textwidth]{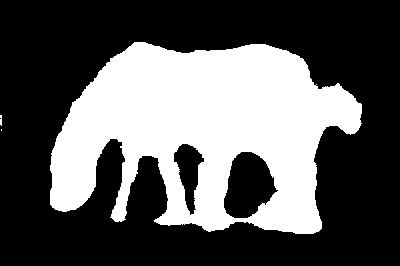}
            \caption{\\PCE}
        \end{subfigure}
        &
        \begin{subfigure}[t]{0.15\textwidth}
            \setcounter{subfigure}{3}
            \centering
            \includegraphics[width=\textwidth]{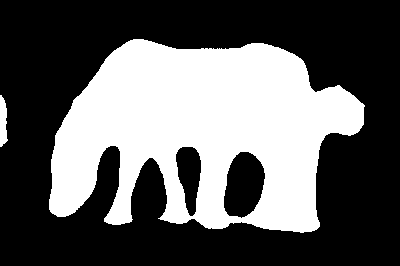}
            \caption{\\PCE+TD}
        \end{subfigure}
        &
        \begin{subfigure}[t]{0.15\textwidth}
            \setcounter{subfigure}{4}
            \centering
            \includegraphics[width=\textwidth]{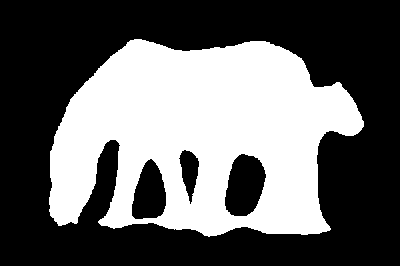}
            \caption{\\PCE+NCut}
        \end{subfigure}
        \\[0.5em]
        \multicolumn{5}{c}{
            \begin{tabular}{cccc}
                \begin{subfigure}[t]{0.15\textwidth}
                    \setcounter{subfigure}{5}
                    \centering
                    \includegraphics[width=\textwidth]{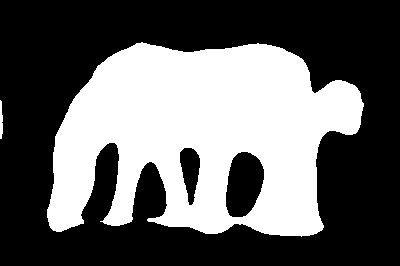}
                    \caption{PCE\\+NCut+KCut}
                \end{subfigure}
                &
                \begin{subfigure}[t]{0.15\textwidth}
                    \setcounter{subfigure}{6}
                    \centering
                    \includegraphics[width=\textwidth]{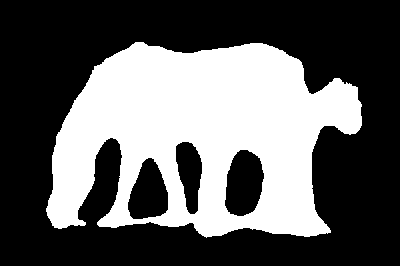}
                    \caption{\\PCE+CV}
                \end{subfigure}
                &
                \begin{subfigure}[t]{0.15\textwidth}
                    \setcounter{subfigure}{7}
                    \centering
                    \includegraphics[width=\textwidth]{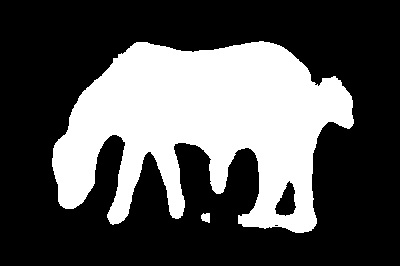}
                    \caption{\\\textbf{RKHS}}
                \end{subfigure}
                &
                \begin{subfigure}[t]{0.15\textwidth}
                    \setcounter{subfigure}{8}
                    \centering
                    \includegraphics[width=\textwidth]{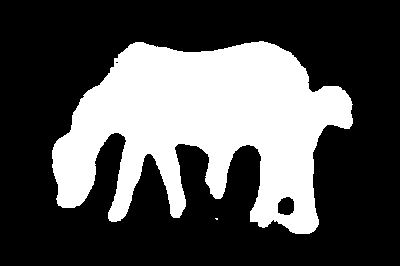}
                    \caption{\\\textbf{RKHS+TD}}
                \end{subfigure}
            \end{tabular}
        }
    \end{tabular}
    \addtolength{\tabcolsep}{5pt}
    \caption{Qualitative comparison on an image with houses. The scene contains complex structures and textures. Our RKHS-based methods produce a cleaner and more focused segmentation of the salient object.}
    \label{fig:pred_horse}
\end{figure}

\begin{figure}[!ht]
    \centering
    \addtolength{\tabcolsep}{-5pt}
    \begin{tabular}{ccccc}
        \begin{subfigure}[t]{0.15\textwidth}
            \setcounter{subfigure}{0}
            \centering
            \includegraphics[width=\textwidth]{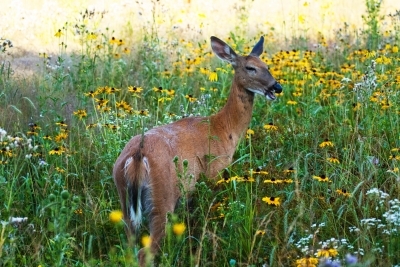}
            \caption{\\Image}
        \end{subfigure}
        &
        \begin{subfigure}[t]{0.15\textwidth}
            \setcounter{subfigure}{1}
            \centering
            \includegraphics[width=\textwidth]{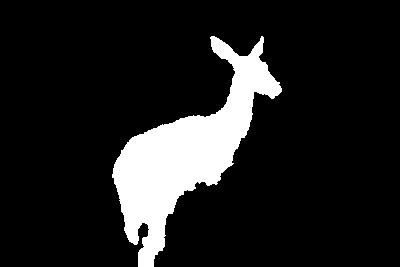}
            \caption{\\Mask}
        \end{subfigure}
        &
        \begin{subfigure}[t]{0.15\textwidth}
            \setcounter{subfigure}{2}
            \centering
            \includegraphics[width=\textwidth]{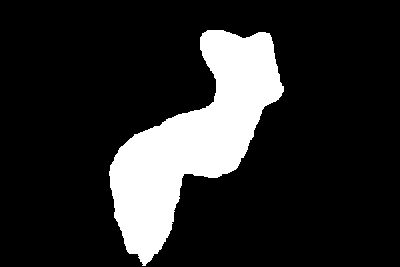}
            \caption{\\PCE}
        \end{subfigure}
        &
        \begin{subfigure}[t]{0.15\textwidth}
            \setcounter{subfigure}{3}
            \centering
            \includegraphics[width=\textwidth]{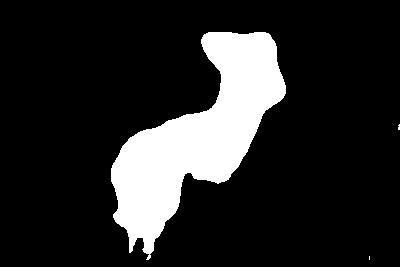}
            \caption{\\PCE+TD}
        \end{subfigure}
        &
        \begin{subfigure}[t]{0.15\textwidth}
            \setcounter{subfigure}{4}
            \centering
            \includegraphics[width=\textwidth]{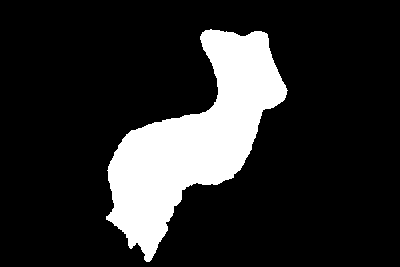}
            \caption{\\PCE+NCut}
        \end{subfigure}
        \\[0.5em]
        \multicolumn{5}{c}{
            \begin{tabular}{cccc}
                \begin{subfigure}[t]{0.15\textwidth}
                    \setcounter{subfigure}{5}
                    \centering
                    \includegraphics[width=\textwidth]{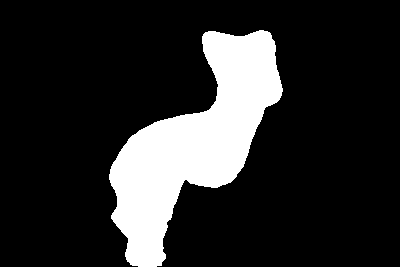}
                    \caption{PCE\\+NCut+KCut}
                \end{subfigure}
                &
                \begin{subfigure}[t]{0.15\textwidth}
                    \setcounter{subfigure}{6}
                    \centering
                    \includegraphics[width=\textwidth]{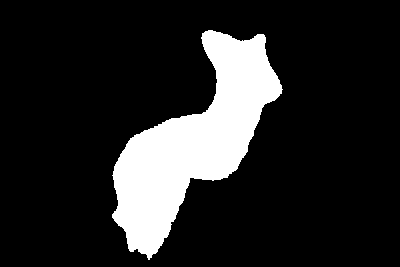}
                    \caption{\\PCE+CV}
                \end{subfigure}
                &
                \begin{subfigure}[t]{0.15\textwidth}
                    \setcounter{subfigure}{7}
                    \centering
                    \includegraphics[width=\textwidth]{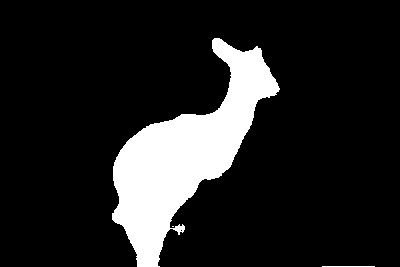}
                    \caption{\\\textbf{RKHS}}
                \end{subfigure}
                &
                \begin{subfigure}[t]{0.15\textwidth}
                    \setcounter{subfigure}{8}
                    \centering
                    \includegraphics[width=\textwidth]{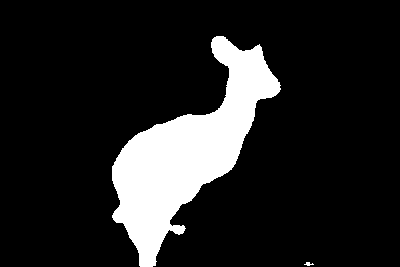}
                    \caption{\\\textbf{RKHS+TD}}
                \end{subfigure}
            \end{tabular}
        }
    \end{tabular}
    \addtolength{\tabcolsep}{5pt}
    \caption{Qualitative comparison on an image of a deer. The object has thin structures (antlers) that are challenging to segment. Our RKHS-based methods better preserve these details compared to PCE-based methods.}
    \label{fig:pred_deer}
\end{figure}

\subsection{Effect of TD Regularization when Under-segmentation Occurs in Network Prediction } \label{sec:network_td_undersegment}

Figure~\ref{fig:td_undersegment} provides an example of when under-segmentation happened. The 'RKHS' cannot capture the pillar effectively. With the TD regularization, it regards the prediction as an artifact and tends to dampen it.  

\begin{figure}[!ht]
    \centering
    \begin{subfigure}[b]{0.2\textwidth}
        \centering
        \includegraphics[width=\textwidth]{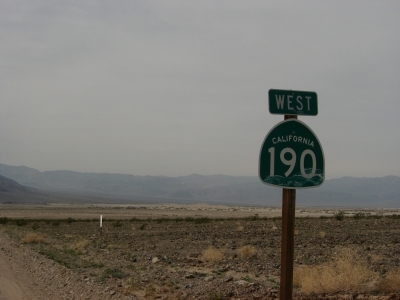} \\
        \caption{Image}
    \end{subfigure}
    \begin{subfigure}[b]{0.2\textwidth}
        \centering
        \includegraphics[width=\textwidth]{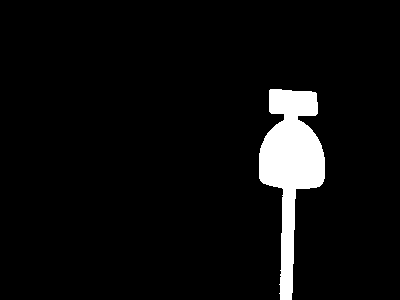} \\
        \caption{Mask}
    \end{subfigure}
    \begin{subfigure}[b]{0.2\textwidth}
        \centering
        \includegraphics[width=\textwidth]{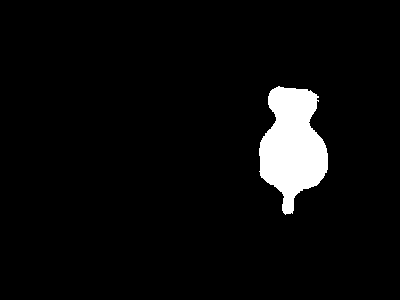} \\
        \caption{RKHS}
    \end{subfigure}
    \begin{subfigure}[b]{0.2\textwidth}
        \centering
        \includegraphics[width=\textwidth]{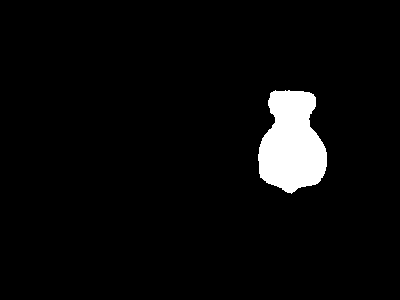} \\
        \caption{RKHS+TD}
    \end{subfigure}
    \caption{Effect of TD Regularization when under-segment appeared in 'RKHS'. True positives may be regarded as artifacts and have been smoothed}
    \label{fig:td_undersegment}
\end{figure}

\end{document}